\newcommand\atom{\Verb*[fontfamily=courier,fontsize=\relsize{0}]}
\newcommand{\Omit}[1]{}
\newcommand{\tup}[1]{\langle #1 \rangle}
\newcommand{\pair}[1]{\langle #1 \rangle}
\newcommand{\citeay}[1]{\citeauthor{#1} [\citeyear{#1}]}
\newcommand{\eqdef}{\stackrel{\textrm{\scriptsize def}}{=}}
\newcommand{\multiset}[1]{\{\!\!\{#1\}\!\!\}}
\renewcommand{\S}{\mathcal{S}}
\newcommand{\clingo}{\textsc{Clingo}\xspace}
\renewcommand{\P}{\mathcal{P}}
\newcommand{\D}{\mathcal{D}}
\newcommand{\T}{\mathcal{T}}
\newtheorem{definition}{Definition}
\newtheorem{theorem}[definition]{Theorem}
\colorlet{acolor}{red!80!black}
\colorlet{pcolor}{green!60!black}
\colorlet{bcolor}{blue!70!black}
\title{Learning First-Order Symbolic Planning Representations That Are Grounded}
\author{
 Andr\'{e}s Occhipinti$^1$\and
 Blai Bonet$^1$\and
 Hector Geffner$^{2,1,3}$\\
 \affiliations
 $^1$Universitat Pompeu Fabra, Barcelona, Spain\\
 $^2$ Instituci\'o Catalana de Recerca i Estudis Avan\c{c}ats (ICREA), Barcelona, Spain\\
 $^3$Link{\"o}ping University, Link\"oping, Sweden\\
 \emails
 andres.occhipinti@upf.edu, bonetblai@gmail.com, hector.geffner@upf.edu}
\begin{document}

\maketitle


\begin{abstract}
  Two main  approaches have been developed for learning first-order  planning (action) models from
  unstructured data:  combinatorial  approaches that yield  crisp  action schemas from the structure of
  the state space, and deep learning approaches that produce  action schemas
  from states represented by  images. A benefit of the former approach is that 
  the learned action schemas are   similar to those that can be written by hand;
  a benefit of the latter  is that the learned representations (predicates)
  are grounded on the images, and as  a result, new instances can be given
  in terms of images. In this work, we develop a new formulation for learning crisp first-order planning models
  that are grounded on \emph{parsed images,} a step to combine the benefits of the two approaches.
  Parsed images are assumed to be given in a simple O2D language (objects in 2D)
  that involves a small 
  number of unary and binary predicates like `left', 'above', 'shape', etc.
  After learning, new  planning instances  can be given in terms of pairs of
  parsed images, one for the initial situation and the other for the  goal.
  Learning and planning experiments are reported for several  domains  including Blocks, Sokoban, IPC Grid, and Hanoi.
\end{abstract}

\section{Introduction}

One of the key open problems in AI is how to combine learning and reasoning, in particular
when the learning data is not structured for reasoning. In  planning, there are effective 
reasoning mechanisms for  planning  but they rely on  models comprised of
predicates and action schemas which are usually provided by hand.
A number of proposals have been advanced for learning and refining these models, 
but most  assume  that the domain predicates are known
 \cite{yang:model-learning,zhuo:learning,petrick:learning,rao:learning,aineto:learning,paolo:learning}.
The problem of {learning the domain predicates and the action schemas
at the same time}  is more challenging.  A clever approach for addressing
this problem given  sequences of grounded actions  was developed in  the LOCM  system
\cite{locm,locm3},  although the approach is heuristic and incomplete. 
Two recent formulations have addressed  the  problem
more systematically and  without assuming that action arguments are observable.
One is a combinatorial approach that yields crisp action schemas from the structure of the state space
\cite{bonet:ecai2020,ivan:kr2021}; the other is a deep learning approach that produces action
schemas from states represented by images \cite{asai:fol}.
A benefit of the combinatorial approach is that it accommodates  and exploits
a natural inductive bias  where fewer, simpler action schemas and predicates are preferred; 
a bias that results in learned action schemas that are similar to those that can be
written by hand. A benefit of deep learning approaches is that the learned representations (predicates)
are grounded on the images, and as  a result, new instances can be given in terms of them.

The aim of this work is to develop a new formulation for learning crisp first-order planning
models that are grounded on \emph{parsed images}, a step in the way to combine the benefits of the
combinatorial and deep learning approaches. Parsed images are assumed to be given in a simple O2D language, for ``objects in 2D'', that
involves a small number of unary and binary ``visual''  predicates like `left',
'above', 'shape', etc. The learning problem becomes the problem of learning the lifted domain
(action schemas and domain predicates) along with the \emph{grounding} of the learned domain
predicates so that they can be evaluated on any parsed image.\footnote{Grounding a predicate
  (symbol) means to provide a semantics for it in the form of a denotation function, and 
   should not be confused with  grounding of the action schemas.
  For the problem of  grounding symbols in the ``real world'',  see \citeay{symbol-grounding}.
}
A number of vision modules can  be used to map images into the parsed representations
\cite{yolo,yolo2,k-slot-attention},
but this is outside the scope of this work.
\Omit{
  \footnote{%
  When using combinatorial approaches for learning, a question that often comes out is how
  to deal with noise. There is indeed a difference between deep learning approaches that
  can accept noisy inputs and tend to produce noisy outputs, and combinatorial approaches
  that can produce  crisp outputs but which  often require noise-free inputs.
  Yet, it's not difficult to make the latter approaches tolerant to noise and incompleteness
  by given the solver the ability to label certain parts of the inputs as noisy or incomplete, at
  a suitable cost \cite{ivan:kr2021}.
  }
  }
    After learning, new planning instances can be given in terms of pairs of parsed images, one corresponding
to the initial situation and the other to the goal, and such instances can be solved 
with  any off-the-shelf planner  and may  involve many more objects than those used in training.
Learning and planning results for several domains are reported,
including Blocks, Towers of  Hanoi, the $n$-sliding-puzzle, IPC Grid, and Sokoban.

The paper is organized as follows. We discuss related work, review the basics of classical
planning, and introduce the O2D language and the learning formulation.
An implementation of the learning formulation as an answer set program is then sketched
(full details in the appendix), and experimental results are presented and analyzed.

\section{Related Work}

Most works on learning action schemas from traces assume that the domain predicates  are known \cite{yang:model-learning,littman:strips,zhuo:learning,petrick:learning,rao:learning,juba:learning,aineto:learning,paolo:learning}. The problem of learning the action schemas and the predicates  at the same time is more challenging
as the structure of the  states is   not available at all. The LOCM systems \cite{locm,locm2,locm3,locm4}   addressed this
problem  assuming input traces that feature sequences of ground actions.
The inference of action schemas and predicates follows  a number of heuristic rules that  manage to learn
challenging planning domains  but whose   soundness and completeness properties  have not been studied.
A general formulation of the learning problem from complete input traces that feature just action names and black-box states
is given by \citeay{bonet:ecai2020}, and extensions for  dealing with incomplete and noisy traces
by \citeay{ivan:kr2021} (see also \citeay{sid:learning}).
An alternative, deep learning approach for learning action schemas and predicates from states represented by
images is advanced by \citeay{asai:fol}. The advantages of a deep learning approach based on
images are several: it does not face the scalability bottleneck of combinatorial approaches, it is robust to noise,
and it yields representations \emph{grounded} in the images.  The limitation is that the learned planning domains
tend to be complex and opaque. For example, \citeauthor{asai:fol} reports 518,400 actions for a Blocksworld instance with 3 blocks.
Methods for learning \emph{propositional} planning representations that are grounded have also
been proposed \cite{konidaris:jair,asai:prop1,asai:prop2} but they are bound to work in a single state space involving a fixed set of objects.


\Omit{
\begin{enumerate}[--]
  \item Using knowledge about planning model, including LOCM that only assumes that the action arguments are given; heuristic
  \item OO-MDP
  \item Ultimately, our goal is goal is to be able to learn first-order symbolic representations and reusable knowledge,
    background knowledge; e.g. do Minigrid bottom-up
  \item The challenge is to approach some of the problems considered in the DRL literature that involve generalization over planning domains
    and to address them using the same inputs, usually non-symbolic, possibly images, but different structural priors (for example, few action
    schemas of bounded, small arities) and possibly but not necessarily different tools (like learning by gradient descent).
  \item Rao: Hankz etc.
  \item Bonet ... two steps further: only action type observed in transitions, completeness, *formal property*/completenesss ..
  \item Asai: proposition and first-order. First-order problem (ICAPS 2019): Unsupervised Grounding of
    Plannable First-Order Logic Representation from Images. ``We only performed planning for the 3 blocks environment due to the sheer size of the PDDL model generated by
    AMA1 which contains 518400 actions and required more than 128GB memory to preprocess the model into a SAS+
    format. We later performed some 4-blocks experiments and
    obtained success (Fig. 11).''
  \item Konidaris et al, JAIR 2018: grounded propositional ; also other by Asai et al.
  \item Deep learning: do not learn such models; model-free; model-based, generalization ..
\end{enumerate}
}

\section{Classical Planning}

A (classical) planning instance is a pair $P\,{=}\,\tup{D,I}$ where $D$ is
a first-order planning domain and $I$ represents {instance information}
\cite{geffner:book,ghallab:book,book:pddl}.
The planning domain $D$ contains a set of predicates (predicate symbols)
$p$ and a set of action schemas with preconditions and effects given by
atoms $p(x_1, \ldots, x_k)$ or their negations, where $p$ is a domain
predicate and each $x_i$ is a variable representing one of the arguments
of the action schema.
The instance information is a tuple $I\,{=}\,\tup{O,Init,Goal}$ where $O$
is a (finite) set of objects (object names) $o_i$, and $Init$ and $Goal$ are
sets of ground atoms $p(o_1, \ldots, o_k)$ or their negations, with $Init$
being consistent and complete; i.e., for each ground atom $p(o_1, \ldots, o_k)$,
either the atom or its negation is (true) in $Init$.
The set of all ground atoms in $P\,{=}\,\tup{D,I}$, $At(P)$, is given by all
the atoms that can be formed from the predicates in $D$ and the objects in $I$,
while the set of ground actions $A(P)$ is given by the action schemas with 
their arguments  replaced by objects in $P$.
A state $s$ over $P$ is a maximally consistent set of ground literals
representing a truth valuation over the atoms in $At(P)$, and a ground action
$a \in A(P)$ is applicable in $s$, written $a \in A(s)$ when its preconditions
are (true) in $s$. A state $s'$ is the successor of ground action $a$ in state
$s$, written $s'\,{=}\,f(a,s)$ for $a \in s$ if the effects of $a$ are true in $s'$
and the truth of atoms not affected by $a$ is the same in $s$ and $s'$.
Finally, an action sequence $a_0, \ldots, a_{n}$ is a plan for $P$ if there is
a  state sequence $s_0, \ldots, s_{n+1}$ such that $s_0$ satisfies $Init$, $s_{n+1}$
satisfies  $Goal$, $a_i \in A(a_i)$, and $s_{i+1}=f(a_i,s_i)$. 


\section{Language of Parsed Images: O2D}

\begin{figure}[t]
  \centering
  \resizebox{\columnwidth}{!}{
    \footnotesize
    \begin{tabular}{@{}c@{\ }c@{\ }c@{\ }c@{\ }c@{}}
      \resizebox{0.200\linewidth}{!}{
        \begin{tikzpicture}[every node/.style={minimum size=0.5cm-\pgflinewidth, outer sep=0pt}, background rectangle/.style={fill=gray!45}, show background rectangle]
          \draw[step=0.1cm,color=white] (0,0) grid (1.5,1.5);
          \node[fill=green!50!black] at (0.55,0.75) {};
          \node[fill=green!90!black] at (1.05,0.75) {};
        \end{tikzpicture}
      } &
      \resizebox{0.200\linewidth}{!}{
        \begin{tikzpicture}[every node/.style={minimum size=0.5cm-\pgflinewidth, outer sep=0pt},, background rectangle/.style={fill=gray!45}, show background rectangle]
          \draw[step=0.1cm,color=white] (0,0) grid (1.5,1.5);
          \node[fill=green!50!black] at (0.75,0.55) {};
          \node[fill=green!90!black] at (0.75,1.05) {};
        \end{tikzpicture}
      } &
      \resizebox{0.200\linewidth}{!}{
        \begin{tikzpicture}[every node/.style={minimum size=0.5cm-\pgflinewidth, outer sep=-2pt},, background rectangle/.style={fill=gray!45}, show background rectangle]
          \draw[step=0.1cm,color=white] (0,0) grid (1.5,1.5);
          \node[fill=green!50!black] at (0.65,0.85) {};
          \node[fill=green!90!black] at (0.85,0.65) {};
        \end{tikzpicture}
      } &
      \resizebox{0.200\linewidth}{!}{
        \begin{tikzpicture}[every node/.style={minimum size=0.5cm-\pgflinewidth, outer sep=0pt},, background rectangle/.style={fill=gray!45}, show background rectangle]
          \draw[step=0.1cm,color=white] (0,0) grid (1.5,1.5);
          \node[fill=green!50!black,scale=0.58] at (0.45,0.75) {};
          \node[fill=green!90!black] at (0.95,0.75) {};
        \end{tikzpicture}
      } &
      \resizebox{0.200\linewidth}{!}{
        \begin{tikzpicture}[every node/.style={minimum size=0.27mm, inner sep=0mm, outer sep=0pt},, background rectangle/.style={fill=gray!45}, show background rectangle]
          \draw[step=0.1cm,color=white] (0,0) grid (1.5,1.5);
          \node[fill=green!90!black,scale=0.80] at (0.55,0.55) {\large$\bigcirc$};
          \node[fill=green!90!black,scale=0.870] at (0.95,0.95) {\Huge$\star$};
        \end{tikzpicture}
      } \\[-.15em]
      \textit{left} & \textit{below} & \textit{overlap} & \textit{smaller} & \textit{shape}
    \end{tabular}
  }
  \vskip -.4em
  \caption{Depiction of the five binary relations in O2D.}
  \label{fig:o2d:rels}
\end{figure}

Object-recognition vision systems typically map images into object-tuples of the
form $\{\tup{type(c),loc(c), bb(c),att(c)}\}_c$ that encode the different objects
$c$ in the scene, their type or class, their location and bounding box coordinates,
and some visual attributes like color or shape \cite{yolo,yolo2,k-slot-attention}.
We use a similar encoding of scenes but rather than representing the exact locations
of objects, spatial relations are represented \emph{qualitatively} \cite{qualitative-spatial}.
More precisely, a scene is represented by a set of ground atoms over a language that
we call O2D for \textit{Objects in 2D space}. O2D is a first-order language with signature $\Sigma=(C,U,R)$ where $C$ stands for a
set of constant symbols representing objects and shapes, $U$ stands for a set of unary
predicates, and $R$ stands for a set of binary predicates.
The unary predicates in  $U$ stand for visually different object types and characteristics, 
while the  binary predicates are $R=\{\textit{left},\textit{below},\textit{overlap},\textit{smaller},\textit{shape}\}$,
representing if one object is right to the left of or right below another object,
if two objects overlap, if one object is smaller than another, and the shape of an
object; see Figure~\ref{fig:o2d:rels}.

A scene is represented in O2D as a set of \emph{ground atoms} over the symbols in
$\Sigma=(C,U,R)$. We refer to scene representations in O2D as \emph{O2D states}.
Scenes and their corresponding O2D states for Blocks-world, Tower-of-Hanoi, 
and Sokoban are shown in Figure~\ref{fig:o2d:examples}, with renderings obtained
with  PDDLGym \cite{pddlgym}. 

\Omit{
  Figure \ref{fig:blocks} illustrates the relationship between a state image and its
  O2D representation. The left side of the figure shows the image of a Blocksworld
  state $s$, generated with PDDLGym's graphical representation \cite{silver2020pddlgym}.
  The right side lists some of the ground O2D atoms in $\sigma(s)$, for the language
  whose signature $\sigma$ has constants $\mathcal{O}=\{r,t,b_0,\dots,b_6\}$ for the
  robot ($r$), table ($t$) and the blocks ($b_i$), and object types $\mathcal{C}=\{robot, block, table\}$.
  For a state $s$, we denote by $\sigma(s)$ the representation of $s$ given by
  $\mathcal{L}_\sigma$, i.e., the set of all ground literals from $\mathcal{L}_\sigma$
  that are true in $s$.
}
  
\begin{SaveVerbatim}[gobble=0,commandchars=\\\{\}]{o2d:blocks}
  \textcolor{bcolor}{% objects and types}
  robot(r). table(t).
  block(b0). block(b1).
  ...
  \textcolor{bcolor}{% relations}
  overlap(b0,r).
  below(t,b1).
  smaller(b1,r).
  ...
  \textcolor{bcolor}{% shapes}
  shape(r,rectangle).
  shape(b0,rectangle).
  ...
\end{SaveVerbatim}
\begin{SaveVerbatim}[gobble=0,commandchars=\\\{\}]{o2d:hanoi}
  \textcolor{bcolor}{% object and types}
  disk(d1). disk(d2).
  peg(p1). peg(p2).
  ...
  \textcolor{bcolor}{% relations}
  overlap(d1, p1).
  below(d2,d1).
  smaller(d1, d2).
  ...
  \textcolor{bcolor}{% shapes}
  shape(d1,rectangle).
  ...

\end{SaveVerbatim}

\begin{SaveVerbatim}[gobble=0,commandchars=\\\{\}]{o2d:sokoban}
  \textcolor{bcolor}{% object and types}
  sokoban(s).
  crate(c1). crate(c2).
  cell(c1_1).
  ...
  \textcolor{bcolor}{% relations}
  overlap(s,c3_6).
  overlap(c2, c3_5).
  below(c3_6,c_2_6).
  ...
  \textcolor{bcolor}{% shapes}
  shape(c1,rectangle).
  ...
\end{SaveVerbatim}

\begin{figure*}[t]
  \centering
  \begin{tabular}{@{}c@{}c|c@{}c|c@{}c}
    \includegraphics[width=0.14\textwidth]{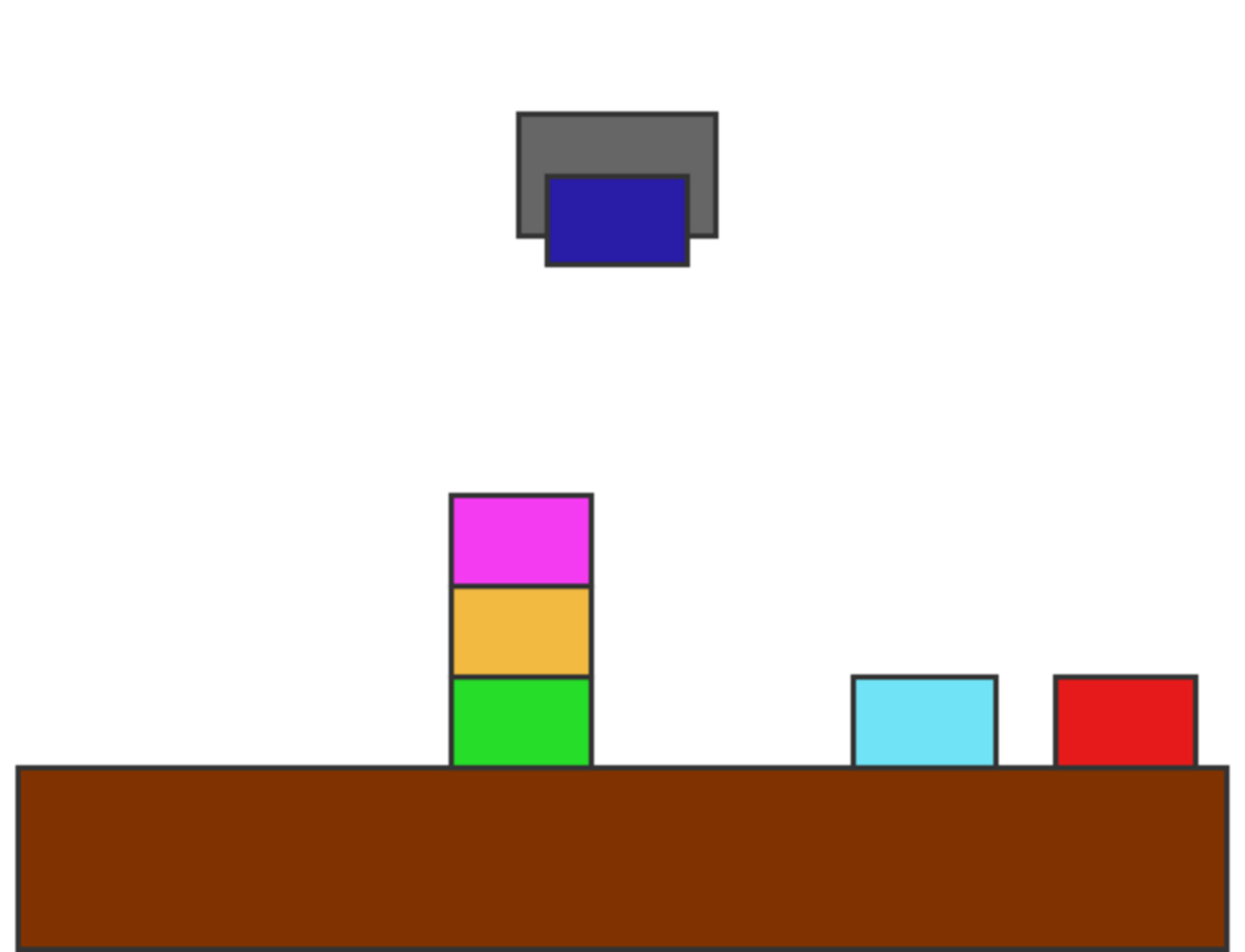} &
    \resizebox{!}{1.3in}{\BUseVerbatim{o2d:blocks}} &
    \includegraphics[width=0.16\textwidth]{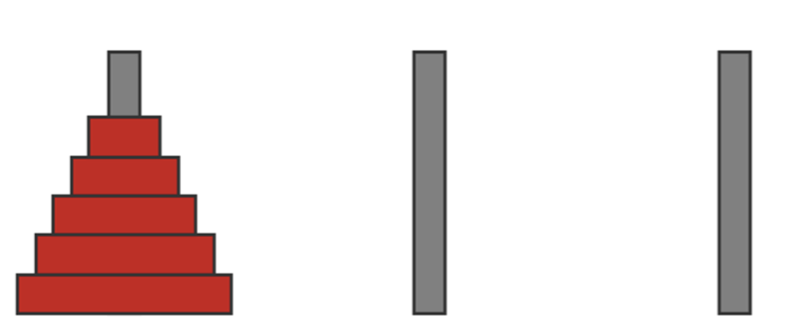} &
    \resizebox{!}{1.3in}{\BUseVerbatim{o2d:hanoi}} &
    \includegraphics[width=0.125\textwidth]{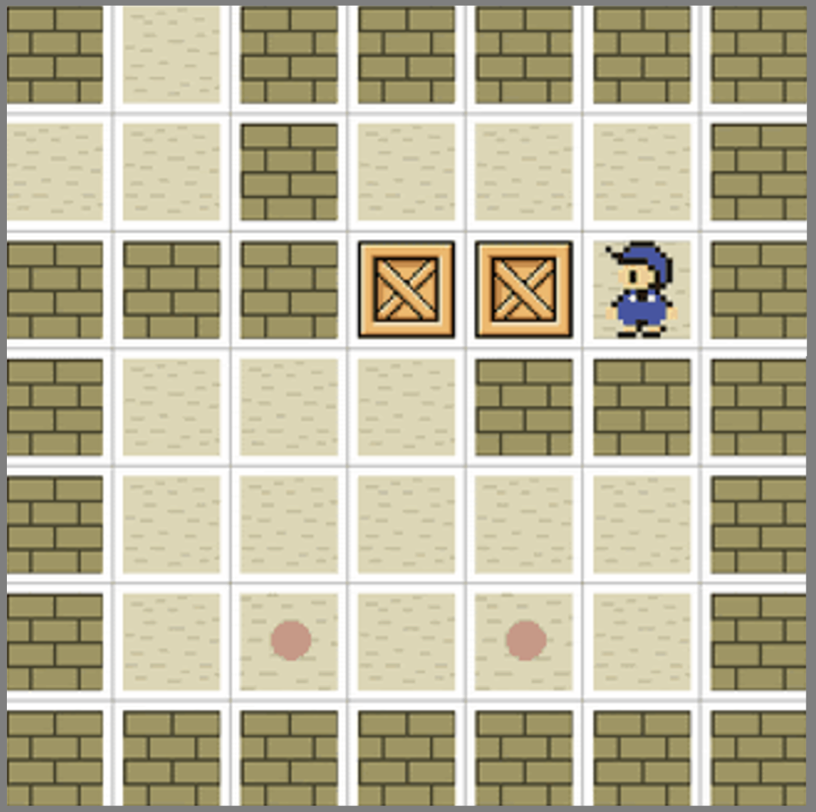} &
    \resizebox{!}{1.3in}{\BUseVerbatim{o2d:sokoban}}
  \end{tabular}
  \vskip -.4em
  \caption{Scenes for three of the domains considered and their O2D representations (see Appendix for details).}
  \label{fig:o2d:examples}
\end{figure*}



\section{Groundings}

A grounded predicate $q$ is  a predicate that can be evaluated in any O2D state $s$;
i.e., if $o$ is a tuple of objects in $s$ of the same arity as $q$, then $q(o)$ is known to be
true or to be false in $s$.
The predicates $p$ appearing in a planning domain $D$
are  grounded  by assuming  a pool $\P$ of grounded predicates
and a \emph{grounding function}  $\sigma$ that maps the domain predicates $p$
into grounded predicates $q=\sigma(p)$ in the pool with the same arity as $p$.
The result is a \emph{grounded domain}:

\begin{definition}[Grounded Domain]
  A \emph{grounded domain} over a pool of grounded predicates $\P$
  is a pair $\tup{D,\sigma}$ where $D$ is
  a planning domain $D$ and $\sigma$ is a function that maps each  predicate $p$ in $D$ into a
  predicate $q=\sigma(p)$ in $\P$. 
\end{definition}

The truth value of an atom $p(o)$ in a scene $s$ is the value
of the atom $q(o)$ when $q$ is the grounding of $p$: i.e., when
$\sigma(p)=q$. 
The way in which the   pool of grounded predicates $\P$  is constructed is similar to the
way in which a pool of unary predicates  is defined by  \citeay{bonet:aaai2019} for generating
Boolean and numerical features: there is a set of \emph{primitive predicates}
and a set of  description logic grammar rules \cite{description-logics} for defining new compound predicates from them.
The differences are that $\P$  contains  nullary and binary predicates as well,
and that the primitive predicates are not the domain predicates, that are to be learned,
but the   O2D predicates that are known and  grounded. The denotation of the predicates defined by a grammar rule
is determined by the semantics of the rule  and the denotation of predicates appearing  in the right  hand side.
The actual description logic (DL) grammar  considered for unary predicates
(concepts) is 
\begin{alignat*}{1}
  C \ \leftarrow \ U \,\mid\, \top \,\mid\, \bot \,\mid\, \exists R.C \,\mid\, C \sqcap C' 
\end{alignat*}
meaning that  unary O2D predicates ($U$), the universal true/false predicates,
existential restrictions, and intersections (conjunctions) 
are all unary predicates. The rules for binary predicates (roles) are:
\begin{alignat*}{1}
  R \ \leftarrow \ R \,\mid\, R^{-1} \,\mid\, R \circ R'
\end{alignat*}
meaning that binary O2D predicates ($R_0$), inverses, and role  compositions are binary predicates.
Finally, nullary  predicates are obtained from unary predicates $C$ and $C'$ as $C \sqsubseteq C'$,
an expression that is true in states where the extension of $C$ is a subset of the extension of $C'$.

The  set of predicates of complexity no greater than $i$ is denoted as $\P_i$,  where
the complexity of top and bottom is $0$, the complexity of the  O2D predicates is $1$,
and the complexity of  derived predicates is $1$ plus the sum of the  complexities  of the
predicate  involved in the rule. For a given pool  of O2D states, the sequence $\P_0,\ldots,\P_m$ is constructed
iteratively, pruning 
duplicate  predicates (predicates with the same denotation). 
The pool $\P$ is  $\P_m$ for some bound $m > 0$. 

\smallskip\noindent\textbf{Example}.
In Blocks, the atom $\textit{clear}(b)$ is true when 
no block is above $b$ (not $\textit{some\_above}(b)$), and $b$ is not
held by the robot (not $holding(b)$). Given the O2D representation of Blocks in
Fig.~\ref{fig:o2d:examples}, $\textit{some\_above}$ and  $\textit{holding}$ can be
grounded to the  derived O2D predicates $\sigma(\textit{some\_above})=\exists\,below.block$ and $\sigma(\textit{holding})=\exists\,overlap.robot$,
both of complexity 2. 

\Omit{
  Blocks does not map into any O2D predicate.
  Yet it can be expressed with a \textit{first-order formula} defined from O2D atoms.
  A block $b$ is clear iff there is no other block on top of $b$, and $b$ is not being
  held by the robot. Assuming that states have the image representation illustrated in
  Figure~\ref{fig:o2d:examples}, this can be captured by the formula
  $clear(b) \coloneqq \neg \exists x (\text{below-of}(b,x) \wedge block(x)) \wedge \neg \exists y (\text{overlaps-with}(b,x) \wedge robot(y))$.
  In this paper, \textit{target languages} are built inductively from the O2D language
  to define expression such as \textit{being clear}; i.e., expressions that support
  planning at a suitable abstraction level, and that are definable from purely spatial
  information.

  Also: Given the image representation illustrated in Figure~\ref{fig:o2d:examples}, the usual
  PDDL predicates for Blocksworld are definable in $\mathcal{L}_{\sigma_1}$, where
  $\sigma$ has constants $\mathcal{O}=\{r,t,b_0,\dots,b_6\}$ for the robot ($r$), table
  ($t$) and the blocks ($b_i$), and object types $\mathcal{C}=\{robot, block, table\}$.
  For example, for a block $b_i$, $\text{holding}(b_i)$ is true at a state $s$ whenever
  $\text{overlap}(r,b_i)$ holds at $s$; $\text{on}(b_i,b_j)$ is definable as $\text{below}^{-1}(b_i,b_j)$;
  $\text{handempty}$ holds in any state in which $\exists\text{overlaps-with.block}(r)$
  is false. A more complex condition such as $\neg \text{clear}(b_i)$ is definable via
  two existential restrictions. In states in which the robot is holding $b_i$, it is
  given by $\exists \text{overlaps-with}.\text{robot}(b_i)$. In states in which $b_i$ has
  some block on top of it, it is given by $\exists \text{below}.\text{block}(b_i)$.
  Thus, $\text{clear}(b_i)$ holds in a state $s$ iff both these conditions are false.
}

\Omit{
  We denote by $C^s$ and $R^s$ the interpretations of concept $C$ and role $R$ in state $s$.
  All DL concepts and roles have their usual interpretations in a state $s$, e.g.;
  $(\exists R.C)^s=\{c\mid \exists d: (c,d)\in R^s \text{ and } d\in C^s\}$.
  This interpretation is completely determined by the standard semantics of DL expressions
  together with the semantics of O2D predicates provided in Section \ref{sec:o2d_language}.
  The $C_1\sqsubseteq C_2$ expressions are treated as nullary predicates which are true at
  $s$ iff $C_1^s \subseteq C_2^s$.
  For a state $s$, we denote by $\sigma_i(s)$ the representation of $s$ given by
  $\mathcal{L}_{\sigma_i}$, i.e., the set of all ground literals from $\mathcal{L}_{\sigma_i}$
  that are true in $s$.
}

\section{Learning: Formulation}

The training data $\D$ for learning grounded domains
is $\D=\tup{\T,\S,L,F}$, where $\T$ and $\S$ are sets of O2D states (scene representations)
over one or more instances, $\T\subseteq\S$;
$L$ is a set of action labels $\alpha$
(schema names), and $F_{\alpha}(s)$ is the {multiset} 
of O2D states $s'$ that follow $s$ in $\T$ when an action
with label $\alpha\in L$ is performed.
Such states $s'$ are part of $\S$ but not necessarily of $\T$ that is a subset of $\S$.
In the formulation of \citeay{bonet:ecai2020}, the states in the data are black-boxes, not O2D states, and $\T=\S$.


The grounded domain $\tup{D,\sigma}$ to be learned from this input contains
one action schema per action label $\alpha$, and determines a function
$h=h_\sigma^D$ that maps arbitrary O2D states $s$ into planning states $h(s)$ over $D$
with the same set of objects. More precisely, $h(s)$ is the truth valuation over the atoms $p(o)$,
where $p$ is a predicate in $D$ and $o$ is a \emph{tuple of objects}
from $s$ of the same arity as $p$, given by set of literals:
\begin{alignat*}{1}
  h(s) \ &\eqdef\ \{\, p(o) \,|\, q^s(o)=1, \sigma(p)=q, \text{$p$ in $D$}, \text{$o$ in $s$} \,\}\ \cup \\ 
         &\quad\ \ \,\{\, \neg p(o) \,|\, q^s(o)=0, \sigma(p)=q, \text{$p$ in $D$}, \text{$o$ in $s$} \,\} 
\end{alignat*}
which has positive literals $p(o)$ for $\sigma(p)\,{=}\,q$ and $q(o)$
true in $s$, and negative literals $\neg p(o)$ for $\sigma(p)\,{=}\,q$
and $q(o)$ false in $s$. 
%
For action schema $\alpha$ in $D$, and planning state $\bar s$, let $F^D_\alpha(\bar s)$
represent the {multiset} formed by the states $\bar s'$ that follow $\bar s$ after
ground instantiations of the schema $\alpha$ in $\bar s$; i.e.,
\begin{alignat*}{1}
  F^D_\alpha(\bar s) \eqdef \multiset{\, \bar s' \,|\, \bar s'= f(a,\bar s), a \in A(\bar s), label(a)=\alpha}
\end{alignat*}
where $A(\bar s)$ is the set of ground instances of schema $\alpha$ over the objects in $\bar s$
that are applicable in $\bar s$, and $f$ is the state-transition function determined by $D$ for
the ground action $a$ in state $\bar s$.
The learning task can be formally defined as follows:

\begin{definition}[Learning Task]
  \label{def:task}
  Let $\D\,{=}\,\tup{\T,\S,L,F}$ be the input data, and let $\P$ be a pool of grounded predicates.
  The \textbf{learning task} $L(\D,\P)$ is to obtain a ({simplest}) grounded domain
  $\pair{D,\sigma}$ with one action schema per label $\alpha$ in $L$ such that the
  resulting abstraction function $h=h_\sigma^D$ complies with the following two constraints:
  \begin{enumerate}[C1.]
    \item If $s \not= s'$, then $h(s) \not= h(s')$, for $s, s' \in \T$; and
    \item $F^D_\alpha(h(s)) = \multiset{h(s') \,|\, s'\in F_\alpha(s)}$ for $s \in \T, \alpha \in L$.
  \end{enumerate}
\end{definition}

The first constraint C1 says that the abstract (planning) states for different
O2D states in $\T$ must be different, while C2 says that the abstraction
function $h$ must represent an isomorphism.
Indeed, if $G_\D$ is the \emph{data graph} with vertex set $\S$ and edges
$(s,\alpha,s')$ for $s\,{\in}\,\T$, $s'\,{\in}\,F_\alpha(s)$ and $\alpha\,{\in}\,L$,
and $G_h$ is the \emph{planning graph} with vertex set $V_h$ equal to the planning
states reachable from $\{h(s)\,|\,\S\}$ and edges $(\bar{s}',\alpha,\bar{s})$ for
$\bar{s}\,{\in}\,V_h$, $a\,{\in}\,A(\bar{s})$, $label(a)=\alpha$, and $\bar{s}'\,{\in}\,F^D_\alpha(\bar{s})$,
then:\footnote{Proofs can be found in appendix.}

\begin{theorem}
  \label{thm:isomorphism}
  If $\tup{D,\sigma}$ is a solution of the learning task $L(\D,\P)$ and $\T=\S$,
  the data and planning graphs $G_\D$ and $G_h$ for $h=h_\sigma^D$ are isomorphic.
\end{theorem}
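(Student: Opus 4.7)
The plan is to exhibit the obvious candidate isomorphism, namely the abstraction function $h = h_\sigma^D$ itself restricted to $\S$, and verify that it is (i) a vertex bijection $\S \to V_h$ and (ii) an edge-preserving map in both directions. All the work is done by the two constraints C1 and C2 together with the hypothesis $\T = \S$.

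First I would dispense with injectivity of $h$ on $\S$: since $\T = \S$, constraint C1 directly says $h(s) \neq h(s')$ whenever $s \neq s'$ in $\S$. Next I would show the forward edge preservation. If $(s,\alpha,s')$ is an edge of $G_\D$, then $s \in \T$ and $s' \in F_\alpha(s)$, so by C2 applied at $s$ we have $h(s') \in F^D_\alpha(h(s))$ as an element of the multiset. By definition of $G_h$ this gives the edge $(h(s),\alpha,h(s'))$.

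The surjectivity of $h$ onto $V_h$ is the step I expect to require the most care, because $V_h$ is defined by reachability in the planning semantics and one must rule out that this reachability escapes the image $h(\S)$. I would prove by induction on the length of a witnessing path in $G_h$ that every $\bar{s} \in V_h$ lies in $h(\S)$. The base case is immediate since $V_h$ is seeded with $\{h(s) : s \in \S\}$. For the inductive step, suppose $\bar{s} = h(s)$ with $s \in \S = \T$, and $\bar{s}' \in F^D_\alpha(\bar{s})$ for some $\alpha$; by C2, the multiset $F^D_\alpha(h(s))$ equals $\multiset{h(t) : t \in F_\alpha(s)}$, so $\bar{s}' = h(t)$ for some $t \in F_\alpha(s) \subseteq \S$. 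This simultaneously yields edge reflection: the corresponding $t$ gives an edge $(s,\alpha,t)$ in $G_\D$ whose $h$-image is $(\bar{s},\alpha,\bar{s}')$.

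Finally I would combine the pieces: the map $h \colon \S \to V_h$ is injective by C1, surjective by the induction above, and the edge sets correspond exactly because the forward direction uses C2 as a containment and the backward direction uses C2 as the reverse containment (indeed the full multiset equality in C2 ensures not only existence but also the correct multiplicity of parallel edges, should one wish to treat $G_\D$ and $G_h$ as multigraphs). The only genuinely delicate point is that the inductive surjectivity argument relies essentially on $\T = \S$, since C2 is quantified over $s \in \T$; without this hypothesis one could not close the induction at states in $\S \setminus \T$.
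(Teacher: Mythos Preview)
Your proof is correct and follows essentially the same route as the paper: establish that $h$ is a bijection $\S\to V_h$ via C1 (injectivity) and C2 (surjectivity), then use the two inclusions in C2 for edge preservation in each direction. The only cosmetic difference is that you prove surjectivity by a clean induction on reachability in $G_h$, whereas the paper phrases the same idea as a cardinality contradiction (if $|V_h|>|V_\D|$ some successor of a state $h(s)$ would lie outside $\{h(s')\mid s'\in F_\alpha(s)\}$, violating C2); your version is arguably more transparent and makes the role of $\T=\S$ explicit.
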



The \emph{complexity} of a domain $D$ is defined by a lexicographic
cost function that considers, in order, the arity of the action schemas, the
sum of the arities for non-static predicates, the same sum for static predicates,
the number of effects, and the number of preconditions.
The first three criteria are from \citeay{ivan:kr2021}.
The complexity of a grounded domain $\pair{D,\sigma}$ is the complexity of $D$,
and a grounded domain is \emph{simplest} when it has minimal complexity.
The \textbf{optimal solutions} of the learning task $L(\D,\P)$ in Definition~\ref{def:task}
are the simplest grounded domains that satisfy constraints C1 and C2.

Given a grounded domain $\tup{D,\sigma}$, any pair of O2D states $s_0$
and $s_g$ defines a classical planning problem $P\,{=}\,\pair{D,I}$ where
$I\,{=}\,\tup{O,Init,Goal}$ is such that the objects in $O$ are the ones in
$s_0$ and $s_g$, $Init\,{=}\,h(s_0)$, and $Goal\,{=}\,h(s_g)$.

\Omit{
  \textcolor{red}{*** Check: In Sokoban, there should be multiple goal states; with all possibilities of
  assigning box-ids to goal locations. For sokoban experiments, this is relevant. ALSO, we are assuming that
  every object has an id and can be tracked; ie if we swap two boxes, we get a different state. Eventually
  to be clarified. ***}
}

\subsection{Properties and Scope}

\Omit{
  The focus on the simplest domains $D$ that account for the dynamics expressed by
  the input data $\cal D$ makes the learned domains more likely to generalize to
  new, test instances. The \emph{verification} task over new instances is similar
  to the learning task but much simpler as the domain $D$ and the grounding $\sigma$
  are fixed to those that have been learned.
  A key question how complete is this learning framework is.
}

Some assumptions in the formulation are
1)~actions that change the planning state must change the O2D state (cf.\ C1),
2)~the objects in the planning instances are the ones appearing in the O2D
states, and 3)~the target language for learning is lifted STRIPS with negation.
These assumptions have concrete implications; e.g., in Sokoban, the cells in
the grid must appear as O2D objects, else assumption 1 is violated.
Likewise, in Sliding Tile, the tiles suffice for distinguishing O2D states,
but cells as objects are needed in STRIPS.\footnote{The problem of determining the
  ``objects'' in a scene for a {given target planning language} is subtle and not
  tied to our particular learning approach but to modeling in general.
  It also surfaces in deep learning approaches from images over {the same target languages}
  but then the problem does not become visible as the schemas and the objects are not transparent.
  A way out of this problem appears in \cite{bonet:ecai2020,ivan:kr2021} where the ``objects''
  are ``invented'' along with the action schemas and predicates.
}

The completeness of the approach can be characterized in terms of a ``hidden''
domain $D$. Namely, if the O2D states $s$ are mere ``visualizations'' of planning
states $\bar{s}$ over $D$, and there is a function $h=h_\sigma^D$ given the pool
of predicates $\P$ that allows us to recover the planning states $\bar{s}$ from
their visualizations, then the grounded domain $\tup{D,\sigma}$ is a solution
of the learning task $L(\D,\P)$:

\begin{theorem}
  \label{thm:learning}
  Let $D$ be a (hidden) planning domain, let $\D\,{=}\,\tup{\T,\S,L,F}$ be
  a dataset, and let $g$ be a 1-1 function that maps planning states $\bar{s}$
  in $D$ into O2D states $g(\bar{s})$ such that
  $F_\alpha(g(\bar{s}))=\multiset{g(\bar{s}')\,|\,\bar{s}'\in F^D_\alpha(\bar{s})}$ for $g(\bar{s})\in\T$ and $\alpha$ in $L$.
  If there is a grounding function $\sigma$ for the predicates in $D$ over
  a pool $\P$ such that $h=h_\sigma^D$ is the right inverse of $g$ on $\T$
  (i.e., $g(h(s))=s$ for $s\in\T$),
  then $\tup{D,\sigma}$ is a solution for the learning task $L(\D,\P)$.
\end{theorem}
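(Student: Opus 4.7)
The plan is to verify the two constraints of Definition~\ref{def:task} directly for the candidate $\tup{D,\sigma}$.

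For C1, I would argue by contrapositive: if $s, s' \in \T$ satisfy $h(s) = h(s')$, then applying $g$ to both sides and using $g(h(x))=x$ for $x \in \T$ gives $s = s'$. Hence $s \neq s'$ forces $h(s) \neq h(s')$, which is C1. This step uses only the right-inverse hypothesis.

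For C2, I would fix $s \in \T$ and $\alpha \in L$ and set $\bar s = h(s)$, so $g(\bar s) = g(h(s)) = s$. Since $g(\bar s) \in \T$, the multiset hypothesis on $g$ gives
\begin{equation*}
  F_\alpha(s) \,=\, \multiset{g(\bar s') \mid \bar s' \in F^D_\alpha(\bar s)}.
\end{equation*}
Applying $h$ termwise yields
\begin{equation*}
  \multiset{h(s') \mid s' \in F_\alpha(s)} \,=\, \multiset{h(g(\bar s')) \mid \bar s' \in F^D_\alpha(\bar s)},
\end{equation*}
so C2 reduces to the identity $h(g(\bar s')) = \bar s'$ for each $\bar s' \in F^D_\alpha(\bar s)$, which collapses the right-hand side to $F^D_\alpha(\bar s) = F^D_\alpha(h(s))$.

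The hard part will be establishing this last identity, since the hypothesis only asserts $g \circ h = \text{id}$ on $\T$. Using injectivity of $g$, the right-inverse identity on $\T$ upgrades to $h \circ g = \text{id}$ on $h(\T)$, so it suffices to show $\bar s' \in h(\T)$, equivalently $g(\bar s') \in \T$. I would discharge this either by invoking closure of $\T$ under the observed dynamics $F_\alpha$ (a natural reading when the training traces come from executing the hidden domain), or by noting that $h = h_\sigma^D$ is defined uniformly by the grounding $\sigma$ and therefore extends the equality $h = g^{-1}$ from $\T$ to every O2D state of the form $g(\bar s')$. With either justification, the reduction closes C2 and, combined with C1, establishes that $\tup{D,\sigma}$ is a solution of $L(\D,\P)$.
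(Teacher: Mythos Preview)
Your approach is the same as the paper's: C1 by contrapositive via the right-inverse property, and C2 by pushing $h$ through the multiset identity for $F_\alpha$ and reducing to the pointwise equality $h(g(\bar s'))=\bar s'$. The paper derives this left-inverse identity exactly as you outline --- from $g(h(g(\bar s)))=g(\bar s)$ together with injectivity of $g$ --- and then applies it in the chain of multiset equalities.

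You are right to flag the ``hard part''. As written, the derivation only yields $h\circ g=\mathrm{id}$ on planning states $\bar s$ with $g(\bar s)\in\T$, whereas C2 needs it for the successors $\bar s'$ with $g(\bar s')\in F_\alpha(s)\subseteq\S$. Neither of your two patches closes this from the stated hypotheses: option~(a) amounts to assuming $\T$ is closed under $F_\alpha$, which the formulation explicitly does \emph{not} require ($\T\subseteq\S$ in general); option~(b) is not an argument --- that $h=h_\sigma^D$ is defined uniformly on all O2D states says nothing about whether it agrees with $g^{-1}$ outside $\T$. The paper's own proof simply asserts ``$h$ is a left inverse of $g$'' and proceeds, so you have in fact been more careful than the paper here. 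Under the natural strengthening that the right-inverse property $g(h(s))=s$ holds on all of $\S$ (or equivalently that $\T=\S$, as in Theorem~\ref{thm:isomorphism}), both your argument and the paper's close cleanly.
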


\Omit{ 
  \begin{theorem}
    \label{thm:learning}
    Let $D$ be a domain and $g(\bar{s})$ be a 1-to-1 onto function mapping
    planning states $\bar{s}$ over $D$ into a set $\T$ of O2D states $s=g(\bar{s})$ with the same set of objects,
    and let $F_\alpha(s) = \multiset{g(s') | s' \in F_\alpha^D(s)}$. If there is a grounding function $\sigma$ of the predicates in $D$ given the pool $\P$
    such that $h=h_\sigma^D$ is the inverse of $g$; i.e. $h(g(\bar{s}))=\bar{s}$, $g(\bar{s}) \in \T$,
    then $\tup{D,\sigma}$ is a solution of the learning task $L(\D,\P)$.
  \end{theorem}
}

\Omit{
  There is some relation between this learning formulation and those that learn
  action schemas \emph{given} the \emph{domain predicates}.
  In our formulation, it is the set of O2D predicates that is known,
  and the grounding of the domain predicates is assumed to lie among the 
  predicates $\P$ that can be derived from them.
}

The key difference from approaches that learn action
schemas \emph{given} the domain predicates is that, in our formulation, 
the domain predicates are not given but must be invented and grounded
using a pool of predicates that is obtained from the \emph{given}  O2D predicates.

\subsection{Extensions and Variations}

In some cases, we want an slight variation of the learning task $L(\D,\P)$
where there is no need to distinguish all O2D states (constraint C1).
For example, we may learn a relation $sep(s,s')$ that is true if
$s$ is a goal state and $s'$ is not, and then limit the scope of C1 to such pairs
(that need to be distinguished).
In other cases, the addition of \emph{domain constants} in the planning language
can reduce the arity of action schemas \cite{book:pddl}.
The constants are easily learned from O2D states where they correspond to the denotation
of grounded, unary, static predicates that single out one particular object per instance.
Such objects are identified at preprocessing and explicitly marked as constants before learning the action schemas.

\section{Learning: ASP Implementation}

The learning task $L(\D,\P)$  in Definition~\ref{def:task} can be cast as a combinatorial
optimization problem $T_{\beta}(\D,\P)$ once  two hyperparameters are set in  $\beta$: the  max  arity of  actions, 
and the  max number of predicates.
The problem $T_{\beta}(\D,\P)$ is expressed and solved as an answer set
program \cite{brewka:asp,vladimir:asp,torsten:asp}  using the \clingo
solver \cite{torsten:clingo}, building on the code for learning \emph{ungrounded}
lifted STRIPS representations \cite{ivan:kr2021}. 
The main departures from \citeay{ivan:kr2021} are:
1)~there is no assumption that instances in the input data are represented
as full state graphs ($\T=\S$), 2)~there is no choice of the truth values of atoms $p(o)$
in the different nodes; instead a grounding $\sigma(p)$ for the domain predicates $p$ is  selected from $\P$
(actually, the name of the domain predicates is irrelevant and does not appear in the code); and
3)~action arguments of ground actions are factorized, so that if there are
actions of arity 4 and 15 objects, the $15^4=50,625$ ground actions are not enumerated.
 These changes allow us to learn domains that cannot be learned using
the previous methods.

Other departures are the use of O2D states in the input as opposed to black-box
states, the introduction of domain constants in the planning language, and a more
elaborated optimization criterion. The full ASP code is in the appendix.\footnote{Data and code will be made available.}

\Omit{
  ** The learning problem is solved \textit{incrementally}.
  By \textit{incremental}, we mean that, instead of processing all transitions in
  the training and test sets at once to generate a solution that verifies all the
  $ \mathcal{T}^i_{test}$, small batches of state transitions are chosen and processed
  gradually, producing a sequence of partial solutions that verifies an increasing
  number of test transitions. Solutions are ``refined'' in this way until verification
  over all test transitions is achieved, or a set time-limit for learning is reached.
  The process is \textit{error-driven} in the following sense. The initial batch
  comprises the transitions present in the training sets, which are processed to
  generate the first partial solution. This partial solution is tested by iterating
  over the test sets $\mathcal{T}^i_{test}$, for $i=1,\dots,N$. If a test set
  $\mathcal{T}^i_{test}$ is found such that it is not verified by the current solution,
  for each unverified triple $\tup{\sigma(n),a,\sigma(m)}$, the set of transitions
  starting from node $n$ is added to the training set for the next step of incremental
  learning. That is, the next iteration of the incremental learner is required to
  generalize the existing partial solution so as to ensure that observed errors
  are avoided.
  (\textbf{TODO} Say something here about the computational benefits of doing this:
  process several/large instances which cannot be processed at once, due to memory/time
  constraints; focus computation on errors that prevent generalization, etc.).
  The procedure is summarized in Algorithm ****
}

\section{Experimental Results}

We test the performance of the ASP program expressing the combinatorial optimization
problem $T_\beta(\D,\P)$ on  two  versions of Blocks and Towers of Hanoi, the Sliding-Tile Puzzle, IPC Grid, and Sokoban.
The pool of grounded predicates $\P$ is computed from the 
given  O2D predicates as mentioned above, using  complexity bounds  $m=2$ and  $m=4$ (details below).
The max number of predicates is set to $12$ and the maximum arity of actions is set to $3$
except for Sokoban that is set to $4$. The experiments are performed on Amazon EC2's
\texttt{r5.8xlarge} instances that feature 32 Intel Xeon Platinum 8259CL CPUs\,@\,2.5GHz, and 256GB of RAM,
and \clingo is run with options `{\small\texttt{-t 6 --sat-prepro=2}}').

\Omit{
  The set of primitive predicates $\P_0$ used in the experiments consists of
  the five basic O2D binary predicates, together with the following unary
  predicates for object types: block, table, robot (from Blocksworld), disk,
  peg (from Towers of Hanoi), opencell, lockedcell (from IPC Grid, indicating whether
  a grid cell appears visually as open or closed; in Sokoban, all cells are open),
  crate and sokoban (from Sokoban). From $\P_0$, we define a pool of grounded
  predicates $\P=\bigcup_{i=0,2}\P_i$, which is used across all domains.
}

\smallskip
\noindent\textbf{Data generation.} The data $\cal D$ for learning and validation
is obtained from states $\bar{s}$ of planning instances $P_i=\tup{D,I_i}$, $i=1, \ldots, n$
for each domain, encoded in STRIPS and \emph{ordered} by the size of the state space.
The O2D states $s=g(\bar{s})$ are obtained from the planning states $\bar{s}$ using a 1-to-1 ``rendering'' function
$g$ as in Theorem~\ref{thm:learning} with $F_\alpha(g(\bar{s}))$ set to $\multiset{g(\bar{s}') \,|\, \bar{s}' \in f(a,\bar{s}), a \in A(\bar{s}), a \in \alpha(P_i)}$.
Characteristics of the data pool are shown in Table~\ref{tab:exp-input-data};
further details  can be found in the appendix (suppl. material). 
Sokoban1 and Sokoban2 refer to the same domain but different
training  instances: Sokoban2 contains fewer but much larger
instances (the largest  has 6,832 states).

\begin{table}[t]
  \centering
  \resizebox{\columnwidth}{!}{
    \begin{tabular}{@{\ }lrrrrrrrr@{\ }}
      \toprule
                       &         &           &       &        &         & \multicolumn{3}{c}{Predicate pool $\P$} \\
      \cmidrule{7-9}
      Domain (\#inst.) & \#obj.\ & \#const.\ & $|A|$ & $|\S|$ & \#edges & $|\P|$ & $m$ &   time \\
      \midrule
      Blocks3ops (4)   &       5 &         2 &     3 &    590 &   2,414 &     13 &   2 &   1.41 \\
      Blocks4ops (5)   &       5 &         3 &     4 &  1,020 &   2,414 &     79 &   4 &   9.13 \\
      Hanoi1op (5)     &       8 &         1 &     1 &    363 &   1,074 &     14 &   2 &   2.02 \\
      Hanoi4ops (5)    &       8 &         1 &     4 &    363 &   1,074 &     14 &   2 &   2.03 \\
      Sliding Tile (7) &      11 &         1 &     4 &    742 &   1,716 &     16 &   2 &   0.96 \\
      %
      IPC Grid (19)    &      11 &         1 &    10 &  9,368 &  23,530 &    164 &   4 & 316.64 \\
      Sokoban1 (95)    &      22 &         3 &     8 &  1,936 &   5,042 &     18 &   2 &   8.54 \\
      Sokoban2 (24)    &      27 &         3 &     8 & 12,056 &  36,482 &     18 &   2 & 160.48 \\
      \bottomrule
    \end{tabular}
  }
  \vskip -.4em
  \caption{Data pool. For each domain, columns show number of instances, max number of objects,
    number of domain constants, number of action labels, total number of states and edges across all instances,
    size of predicate pool $\P$, complexity bound $m$, and time in seconds to generate $\P$.
    Each instance consists of all states reachable from the initial state.
  }
  \label{tab:exp-input-data}
\end{table}

\smallskip
\noindent \textbf{Incremental learning.} The data generated from the planning instances
is used incrementally for computing an \emph{optimal solution} $\tup{D_i,\sigma_i}$ of
the learning task $L(\D_i,\P)$, $i=0, \ldots, n$. 
$\D_0$ and $D_0$ are empty, and $\D_{i+1}$ is equal to $\D_i$, when the solution obtained
from $L(\D_i,\P)$ verifies (generalizes) over all the data in $\D$ (satisfies constraints
C1 and C2 in Definition~\ref{def:task}).
When not, $\D_{i+1}$ extends $\D_i$ with a set $\Delta$ of O2D states obtained from the
first $P_k$ instance where the verification fails.
If constraint C1 is violated for a pair of states $\{g(\bar{s}),g(\bar{s}')\}$, $\Delta$
is set to the pair. Else, if C2 is violated for some states $g(\bar{s})$ and label $\alpha$,
$\Delta$ collects up to the first 10 such states.
The set $\Delta$ extends $\D_{i}=\tup{\T_{i},\S_{i},L_{i},F^{i}}$ into $\D_{i+1}$ as
follows, where $\alpha \in D$ stands for the known action labels (schema names),
and $L_{i+1}$ is the set of all such labels for all $i > 0$:
\begin{enumerate}[--]
  \item $\T_{i+1} = \T_i \cup \Delta$,
  \item $\S_{i+1}=\S_i \cup \Delta \cup \bigcup\{ F_\alpha(g(\bar{s})) \,|\, g(\bar{s})\in\Delta, \alpha \in D \}$,
  \item $F^{i+1}_\alpha=F^i_\alpha \cup \{ \pair{g(\bar{s}),F_\alpha(g(\bar{s}))} \,|\, g(\bar{s})\in\Delta \}$, $\alpha \in D$.
\end{enumerate}

\Omit{
  The type of data required for learning varies from one domain to the other,
  based on how complex they are. As one would expect, in the case of ``simpler''
  domains such as Blocksworld, grounded domains that generalize can be obtained
  with data from a handful of instances.
  For more complex domains, with e.g. higher action arities, more action labels,
  or more complex preconditions and effects, a set of state transitions that is
  representative of domain dynamics is obtained from a larger number of instances.
  For example, in the experiments, a general grounded domain for Sokoban is learned
  from data coming from 97 instances. Table \ref{tab:exp-input-data} summarizes
  the input data used for the learning experiments.
}


\smallskip
\noindent\textbf{Results.} Table \ref{tab:exp-statistics} shows the results of
the incremental learner given the pool of data in Table~\ref{tab:exp-input-data}.

For each domain, the columns show the number of iterations until
an optimal  model that verifies over all the instances in the data pool is found,
the number of instances and states (in $\T$) from the data pool used up to this point, 
and the times in seconds for grounding and solving the ASP program, for verification,
and total time. 

\begin{table}[t]
  \centering
  \resizebox{\columnwidth}{!}{
    \begin{tabular}{@{\ }lrrr rrrr@{\ }}
      \toprule
                   &        &          &          & \multicolumn{4}{c}{Learning time in seconds} \\
      \cmidrule{5-8}
      Domain       & \#iter & \#inst.\ & \#states &    solve &   ground &verif.\ &    total \\ 
      \midrule
      Blocks3ops   &      5 &        3 &       20 &     0.05 &     1.92 &   0.84 &      2.97 \\
      Blocks4ops   &      7 &        3 &       16 &     0.29 &    23.37 &  29.42 &     53.70 \\
      Hanoi1op     &      4 &        2 &        7 &     0.06 &     1.53 &   0.44 &      2.16 \\
      Hanoi4ops    &      6 &        4 &       27 &     1.56 &    12.67 &   0.59 &     15.06 \\
      Sliding Tile &      6 &        5 &       10 &     0.11 &     2.89 &   1.20 &      4.43 \\
      IPC Grid     &     27 &       12 &      127 &   693.44 & 3,536.23 & 2,404.87 &  6,653,03 \\
      Sokoban1     &     10 &        9 &       13 &    16.18 &   285.56 &   9.18 &    311.79 \\
      Sokoban2     &     11 &        8 &       56 & 7,250.67 & 5,314.35 & 165.19 & 12,740.43 \\
      \bottomrule
    \end{tabular}
  }
  \vskip -.4em
  \caption{Learning results. For each domain, the first column shows the number of
    iterations of the incremental learner until optimal solutions 
    that verify over all data in the pool are found.
    The others show
    the number of instances and states in the final set $\T$ constructed from the data pool,
    and the times in seconds for solving and grounding the ASP programs, for
    verification, and  total time.
  }
  \label{tab:exp-statistics}
\end{table}

The learning task $L(\D,\P)$ for all domains admit solution with the pool $\P\,{=}\,\P_m$
for $m\,{=}\,2$, except for IPC Grid and Blocks4ops where no solution exists for
$m\,{\leq}\,3$ and require a bound $m\,{=}\,4$.
In both cases, however, the solver takes less than 20 seconds in total to report
lack of solutions for the bounds $m\,{=}\,2$ and $m\,{=}\,3$.

Some of the domains have been considered before, like Blocks3ops and Hanoi1op
\cite{bonet:ecai2020,ivan:kr2021}, but others, like IPC Grid and
Sokoban are more challenging.
The final model for IPC Grid involves 10 action schemas (6 of arity 2 and 4 of arity 3),
while the one  for Sokoban involves   8 action schemas (4 of arity 2 and 4 of arity 4).
The max number of objects  that ended up being used  during training was
8 for IPC Grid and 21 for Sokoban2.

\subsection{Learned Representations}

In the experiments, data obtained from  hidden planning instances
was  used for  generating the training data. 
The original and learned domains, referred to as   $D_O$ and $D_L$,
must agree on the number and name of the action schemas, but not in their arities
or in the predicates involved. 
Table~\ref{table:geometry} compares $D_O$ and $D_L$ along 
dimensions reflected in  the optimization criterion, and Fig.~\ref{fig:schema:example}
shows learned schemas for IPC Grid and Sokoban. 

In general, the learned domains are not equal to the hidden domains,
but they are close and equally meaningful and interpretable.

The groundings obtained for the predicates of the different domains are
interesting as well (predicates names are our own). For example, Sokoban uses `$nempty(c)$' atoms that hold  when cell $c$ has
either a crate or the sokoban, and `$at(x,y)$' atoms  that hold when object
$x$ is at  $y$; the first is grounded on the derived O2D predicate
`$\exists\,overlap.\top$' of complexity 2, and the second 
as `$overlap$' of complexity~1. More complex groundings are obtained in IPC Grid.
For example, the following  groundings have all  complexity 4:
the nullary `$armempty$' predicate that holds when the robot holds no key,
is grounded on the derived O2D predicate  `$key\,{\sqsubseteq}\,\exists\,overlap.\top$' (i.e., all keys are in cells); 
the unary predicate `$somecell(\cdot)$' that holds for a key $k$ if $k$ is in some cell, 
is grounded on  `$key\,{\sqcap}\,\exists\,overlap.\top$', 
and the binary predicate  `$match(\cdot,\cdot)$' that holds when key $k$
has the shape of the lock at cell $c$,   is grounded on  `$shape\,{\circ}\,shape^{-1}$'
(a binary relation that holds for two objects of the same shape).

\begin{table}[t]
  \centering
  \resizebox{\columnwidth}{!}{
    \begin{tabular}{@{\ }lrr@{\ }rrrr@{\ }}
      \toprule
                   & \multicolumn{2}{c}{Original domain $D_O$} && \multicolumn{3}{c}{Learned domain $D_L$} \\
      \cmidrule{2-3} \cmidrule{5-7}
      Domain       &   action arities & \#pred.\ &&   action arities & \#pred.\ & \#$c$ \\
      \midrule
      Blocks3ops   &      (2 of 2, 3) &  $(3,1)$ &&      (2 of 2, 3) &  $(2,0)$ &     1 \\[.1em]
      Blocks4ops   & (2 of 1, 2 of 2) &  $(5,0)$ && (2 of 1, 2 of 2) &  $(3,0)$ &     2 \\[.1em]
      Hanoi1op     &         (1 of 3) &  $(2,3)$ &&         (1 of 3) &  $(2,1)$ &     0 \\[.1em]
      Hanoi4ops    &         (4 of 3) &  $(2,3)$ &&         (4 of 3) &  $(2,2)$ &     0 \\[.1em]
      Sliding Tile &         (4 of 3) &  $(4,2)$ &&         (4 of 3) &  $(2,2)$ &     0 \\[.1em]
      IPC Grid     & (6 of 2, 4 of 4) &  $(6,7)$ && (6 of 2, 4 of 3) &  $(4,4)$ &     1 \\[.1em]
      Sokoban      & (4 of 3, 4 of 5) &  $(2,4)$ && (4 of 2, 4 of 4) &  $(2,2)$ &     1 \\
      \bottomrule
    \end{tabular}
  }
  \vskip -.4em
  \caption{Comparison of original, hidden domains used to generate the data  ($D_O$) and learned domains ($D_L$).
    The columns shown action arities, number of dynamic and static predicates (\#pred), and
    number of constants in $D_L$ (\#c). E.g., $D_O$ for Blocks4ops has 2 actions of arity 1 (Pickup
    and Putdown), 2 actions of arity 2 (Stack and Unstack), 5 dynamic predicates  (ontable, on, holding, clear, and armempty), and no static predicates.
    The learned grounded domains for both Sokoban benchmarks are equal; only one is shown. 
  }
  \label{table:geometry}
\end{table}

\begin{figure}[t]
  \centering
  \resizebox{\columnwidth}{!}{
    \fbox{
      \begin{minipage}{1.10\columnwidth}\small\tt
        \noindent[Grid]\,\textcolor{acolor}{Pickup$(p,k)$:} \\[.1em]
        \textcolor{pcolor}{pre:} $armempty$, $at(\text{R},p)$, $at(p,k)$ \\[.1em]
        \textcolor{pcolor}{eff:} $\neg armempty$, $\neg somecell(k)$, $\neg at(p,k)$, $\neg at(k,p)$ \\
        \hrule\medskip
        \noindent[Sokoban]\,\textcolor{acolor}{Pushdown$(x,y,z,c)$:} \\[.1em]
        \textcolor{pcolor}{static:} $below(z,y)$, $below(y,x)$ \\[.1em]
        \textcolor{pcolor}{pre:} $at(\text{Sok},x),\, at(c,y),\, \neg nempty(z)$ \\[.1em]
        \textcolor{pcolor}{eff:} $\neg nempty(x),\, nempty(z),\, at(\text{Sok},y),\, at(y,\text{Sok}),\, \neg at(\text{Sok},x)$ \\[.1em]
        \textcolor{white}{eff:}          $\neg at(x,\text{Sok}),\, \neg at(y,c),\, \neg at(c,y),\, at(c,z),\, at(z,c)$
      \end{minipage}
    }
  }
  \vskip -.4em
  \caption{Two learned action schemas for Grid (top) and Sokoban (bottom).
    Predicates names our own; see text for their grounding.
  }
  \label{fig:schema:example}
\end{figure}

\subsection{\label{subsec:planning}Planning with Learned Grounded Domains}

The computational value of learning grounded domains $\tup{D,\sigma}$ can be illustrated
by using them to solve new instances $P\,{=}\,\tup{D,I}$ expressed in terms of pairs of
O2D states, $s_0$ and $s_g$, for the initial and goal situations encoded as $h(s_0)$ and
$h(s_g)$ for $h=h_D^\sigma$. The new instances may  involve  sets of objects $O$ that are much larger
than those used in  training. 
The resulting instances are standard and can be solved with any off-the-shelf planner.

A plan $\pi=\tup{a_0,\ldots,a_n}$ that solves such an instance $P$ can be used to compute
the corresponding sequence of O2D states $s_0, \ldots, s_{n+1}$, with $s_G=s_{n+1}$, as follows.
If $\bar{s}_0,\ldots, \bar{s}_{n+1}$ are the planning states visited by $\pi$
with $\bar{s}_0=h(s_0)$, and the label of $a_0$ is $\alpha$, $s_1$ is the possible $\alpha$-successor
of $s_0$ such that $h(s_1)=\bar{s}_1$. The successors $s_2$,\ldots, $s_{n+1}$ are
selected in the same way.\footnote{The method assumes
  a simulator that given an O2D state $s$ produces the possible $\alpha$-successors of $s$. 
  In the experiments, the simulator is determined by a ``hidden'' domain
  and the function $g(\cdot)$ that maps planning states into O2D states, but a different one could be used potentially
  where the O2D states are obtained from images. 
}
This method of ``applying'' the plans obtained from the learned grounded domain
provides an extra verification: if there is no $\alpha$-successor $s_{i+1}$ with $h(s_{i+1})=\bar{s}_{i+1}$
or $s_{n+1} \not= s_g$, the learned domain or its grounding is not  generalizing  to the new instance.
The fact that this does not happen in the experiments below is thus  additional evidence that the learned grounded domains
are correct.\footnote{Equivalence can also be  proved formally.} 

\Omit{
Following this procedure, a `visualization' of the states reached via $\pi$ is obtained.
We report tests with this method over a large instance of Sokoban, and 5 instances
of Blocks4ops of increasing size, as it provides an indirect way to verify the correctness
of the learned grounded domains.

Indeed, if there is no $\alpha$-successor $s_{i+1}$ with $h(s_{i+1})=\bar{s}_{i+1}$
or $s_{n+1} \not= s_g$, then the learned domain or its grounding would not generalize
to the new instance.
The fact that this does not happen is additional evidence that the learned grounded domains
are correct.\footnote{Equivalence between the hidden and the learned domains can also be
  proved formally, but this is beyond the scope of this work.
}
}

\begin{figure}[t]
  \centering
  \begin{tabular}{ccc}
    \includegraphics[width=0.385\columnwidth]{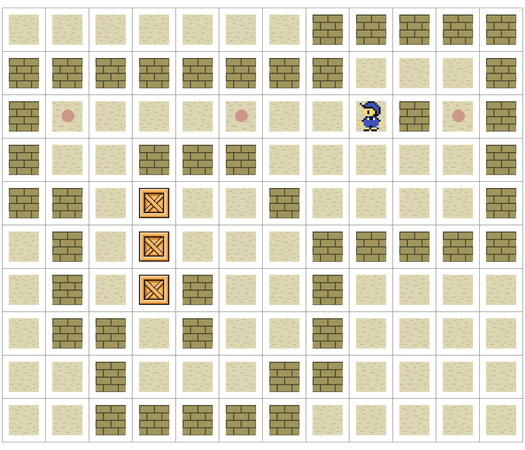}
    &\quad&
    \includegraphics[width=0.385\columnwidth]{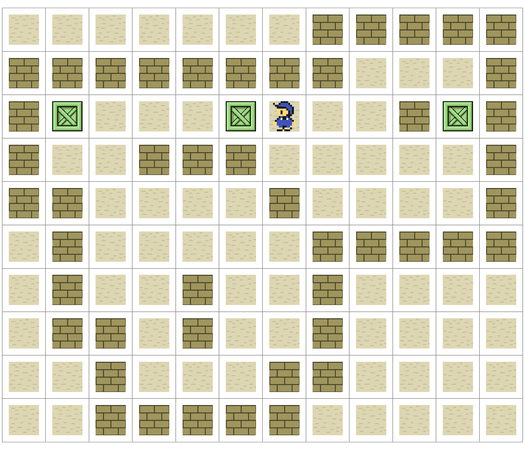} \\[-.4em]
    \small Initial state && \small Goal state
  \end{tabular}
  \vskip -.4em
  \caption{Depiction of initial and goal O2D  states for a  large Sokoban instance. 
    Optimal plans of length 156 are found using the original
    ``hidden'' domain  $D_O$ and the learned grounded domain $D_L$.
  }
  \label{fig:o2d:planning}
\end{figure}

Figure~\ref{fig:o2d:planning} depicts the initial and goal O2D states $s_0$
and $s_g$ of a  large Sokoban instance. A plan $\pi$ of length 288
was obtained  from the planning instance $P=\tup{D,I}$,  where $D$
is the  learned domain, and $h(s_0)$ and $h(s_g)$ replace $s_0$ and $s_g$.
The plan was found with   Pyperplan \cite{pyperplan} running  a greedy best-first search
guided with the additive heuristic.  For each state $\bar{s}_i$ generated by the plan $\pi$,
a matching O2D state $s_i$ was found as above,  and  $s_n=s_g$.
The same verification was carried out in Blocks4ops instances with 7, 10, 15, 20 and 25 blocks,
some producing plans with up to 121 actions.

We also compared  the performance of planners on
instances $P\,{=}\,\tup{D_O,I}$, where $D_O$ is  the hidden domain 
used to generate the data and  $I\,{=}\,\tup{O,\bar{s}_0,\{\bar{s}_g\}}$,
and the  corresponding instances $P'\,{=}\,\tup{D_L,I'}$, where $D_L$ is the learned domain
and $I'$ replaces the initial and goal states $\bar{s}$ by $h(g(\bar{s}))$,
a mapping that uses the ``rendering'' function used to generate the O2D states
(see appendix) and the learned function $h=h_\sigma^D$. If the learned domains are
correct, the state graphs associated to $P$ and $P'$ should be isomorphic 
and the optimal plans should have the same length (but the plans
themselves do not have to be the same).  We tested this
in three large   instances of Sokoban and of  Blocks4ops
using an optimal planner that runs A* with the LM-cut heuristic
\cite{helmert2009landmarks}. For the Sokoban instance shown in Fig.~\ref{fig:o2d:planning},
an optimal plan of length 156 was  found in 27 seconds for $P$
and in  54 seconds for $P'$. For two other instances, optimal
plans of length 134 and 135 were  found in 65 and 849 seconds for $P$,
and in 130 and 1,520 seconds for $P'$. Similar results were obtained
for the Blocks4ops instances.

\section{Summary}

We have introduced a formulation for learning crisp and meaningful first-order planning domains from
parsed  visual representations  that are not far from those   produced by object detection modules.
For this, the formulation for learning domains (action schemas and predicates) from the structure of
the state space  \cite{bonet:ecai2020,ivan:kr2021} was taken 
to a new setting where the traces do not have to be complete and the states observed
are not black boxes but parsed images in O2D. Two results are that the  learned planning representations
are grounded in O2D states, and hence  new problems can be given in terms of pairs of
O2D states representing the initial and goal situations, and that the learning scheme
scales up better than previous ones, enabling us to learn  more challenging domains
like the Sliding-tile puzzle, IPC Grid, and Sokoban. We have also run planning experiments
using the learned domains and their grounding functions that illustrate that the learned domains
can be used with off-the-shelf planners and are not too different than the domains that are written
and grounded by hand. 

\section{Acknowledgements}

This work was partially supported by ERC Advanced Grant No. 885107, by project TAILOR,
Grant No. 952215, funded by EU Horizon 2020, and by  the Wallenberg AI, Autonomous Systems
and Software Program (WASP) program, funded by the Knut and Alice Wallenberg Foundation, Sweden.

\small
\bibliographystyle{named}
\bibliography{control,bib}
\normalsize

\appendix

\section{Appendix}

This appendix contains the proofs of the theorems, further details about the data generation,
a full description of the grounded domains learned, the full code of the learner (ASP program $T_\beta(\D,\P)$),
and additional details of the verifier and of the function $g(\cdot)$ used to generate the data from
hidden planning domains. 

\subsection{Proofs}

\setcounter{definition}{1}

Let us recall the definition and theorem statements:

\begin{definition}[Learning Task]
  \label{def:task}
  Let $\D\,{=}\,\tup{\T,\S,A,F}$ be the input data, and let $\P$ be a pool of grounded predicates.
  The \textbf{learning task} $L(\D,\P)$ is to obtain a ({simplest}) grounded domain
  $\pair{D,\sigma}$ with one action schema per label $\alpha$ in $A$ such that the
  resulting abstraction function $h=h_\sigma^D$ complies with the following two constraints:
  \begin{enumerate}[C1.]
    \item If $s \not= s'$, then $h(s) \not= h(s')$, for $s, s' \in \T$; and
    \item $F^D_\alpha(h(s)) = \multiset{h(s') \,|\, s'\in F_\alpha(s)}$ for $s \in \T, \alpha \in A$.
  \end{enumerate}
\end{definition}

For a given dataset $\D=\tup{\T,\S,L,F}$, the data
graph $G_\D$ has vertex set $V_\D=\S$ and labeled edges
$E_\D = \{(s,\alpha,s') \,|\, s\in\S, s'\in F_\alpha(s), \alpha\in L\}$.
On the other hand, for planning domain $D$ and function $h$ that maps
states $s$ in $\S$ into planning states $\bar{s}$ in $D$, the planning
graph has as vertex set $V_h$ the set of reachable planning states from
$\{h(s)\,|\, s\in \S\}$, and as labeled edges $E_h$ the set
$\{(\bar{s},\alpha,\bar{s}')\,|\, \bar{s}\in V_h, a\in A(\bar{s}), label(a)=\alpha, \bar{s}'\in F^D_\alpha(\bar{s}) \}$.

\begin{theorem}
  If $\tup{D,\sigma}$ is a solution of the learning task $L(\D,\P)$ and $\T=\S$,
  the data and planning graphs $G_\D$ and $G_h$ for $h=h_\sigma^D$ are isomorphic.
\end{theorem}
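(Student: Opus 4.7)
The plan is to show that the map $h \colon V_\D \to V_h$ itself provides the isomorphism between $G_\D$ and $G_h$. So I would establish three things: (a) $h$ is well defined as a map into $V_h$; (b) $h$ is a bijection onto $V_h$; and (c) $h$ preserves the labeled edge relation in both directions.

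First I would argue injectivity and the equality $V_h = h(\S)$. Injectivity of $h$ on $V_\D = \S = \T$ is immediate from constraint C1. For surjectivity, I would show that $h(\S)$ is closed under the planning transitions, so that the reachable set from $\{h(s) \mid s \in \S\}$ cannot grow beyond $h(\S)$. The key observation is that any $s \in \S$ satisfies $F_\alpha(s) \subseteq \S$ by the definition of the dataset, and by C2 the multiset $F^D_\alpha(h(s))$ equals $\multiset{h(s') \mid s' \in F_\alpha(s)}$, so every planning successor of an element of $h(\S)$ is again in $h(\S)$. Combined with the base case $\{h(s) \mid s \in \S\} \subseteq h(\S)$, a straightforward induction on the length of a reachability path yields $V_h \subseteq h(\S)$; the reverse inclusion is by definition of $V_h$. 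Hence $h \colon \S \to V_h$ is a bijection.

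For edge preservation, I would verify both directions using C2. In the forward direction, if $(s, \alpha, s') \in E_\D$ then $s' \in F_\alpha(s)$, so $h(s') \in \multiset{h(s'') \mid s'' \in F_\alpha(s)} = F^D_\alpha(h(s))$, whence $(h(s), \alpha, h(s')) \in E_h$. In the backward direction, if $(h(s), \alpha, \bar t) \in E_h$ with $\bar t \in V_h = h(\S)$, write $\bar t = h(s')$ for a unique $s' \in \S$ (unique by injectivity). Then $h(s') \in F^D_\alpha(h(s)) = \multiset{h(s'') \mid s'' \in F_\alpha(s)}$, and again by injectivity of $h$ we get $s' \in F_\alpha(s)$, so $(s,\alpha,s') \in E_\D$.

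The main subtlety is the multiset versus set issue: constraint C2 is stated as a multiset equality, whereas $E_\D$ and $E_h$ are sets of labeled edges, so multiplicities collapse. This does not cause trouble because set containment is recovered from multiset containment, and the injectivity of $h$ ensures that distinct $s'$ in $F_\alpha(s)$ map to distinct $h(s')$ in $F^D_\alpha(h(s))$; the only mild care needed is to make explicit that ``$\bar t \in F^D_\alpha(h(s))$'' is read as ``$\bar t$ has nonzero multiplicity,'' which is what the edge-set definition of $G_h$ uses. Apart from that bookkeeping, the argument is a direct translation of C1 (bijectivity) and C2 (edge preservation).
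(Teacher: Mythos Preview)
Your proof is correct and follows the same overall plan as the paper: establish that $h$ is a bijection from $V_\D$ onto $V_h$ using C1 and C2, then check edge preservation in both directions via C2.

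The one genuine difference is in the surjectivity step. The paper argues by contradiction using cardinalities: assuming $|V_\D| < |V_h|$, it derives that some $F^D_\alpha(h(s))$ must be strictly larger (as a multiset) than $\multiset{h(s') \mid s' \in F_\alpha(s)}$, contradicting C2. Your argument is more direct: you observe that $F_\alpha(s) \subseteq \S$ and apply C2 to conclude that $h(\S)$ is closed under all planning transitions, so the reachable set $V_h$ cannot escape $h(\S)$. Your route avoids the implicit finiteness assumption in the cardinality argument and is cleaner; the paper's route is slightly shorter but relies on the reader unpacking why a strict cardinality gap forces a violation at some transition. Your explicit handling of the multiset-versus-set bookkeeping, and of how injectivity recovers $s' \in F_\alpha(s)$ in the backward edge direction, are both improvements in rigor over the paper's somewhat terse edge-preservation paragraph.
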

\begin{proof}
  We first show that the function $h$ is a bijection from $V_\D$ onto $V_h$,
  and then show that the multisets of labeled edges are preserved by $h$.

  By construction of $D$, the set of labels in both graphs are equal, and by
  constraint C1 in Def.~\ref{def:task}, the function $h:V_\D\rightarrow V_h$ is 1-1.
  To show that $h$ is onto, we show $|V_h|\,{\leq}\,|V_\D|$.
  For a proof by contradiction, suppose $|V_\D|\,{<}\,|V_h|$.
  Then, either $V_h$ contains a vertex not reachable from $\{h(s)\,|\,s\in\S\}$,
  or there is a vertex $\bar{s}$ in $V_h$ and label $\alpha$ in $L$ such that
  \begin{alignat*}{1}
    |\multiset{ h(s') \,|\, s'\in F_\alpha(s) }| < |F^D_\alpha(h(s))| \,.
  \end{alignat*}
  The first case is impossible by definition of $G_h$.
  In the second case, since $s'\in F_\alpha(s)$ implies $h(s')\in F^D_\alpha(h(s))$
  (by constraint C2), then there is a state $s'\in\S$ such that $s'\notin F_\alpha(s)$
  and $h(s')\in F^D_\alpha(h(s))$, which also contradicts C2.

  Finally, to show that $h$ preserves edges, let $s$ and $s'$ be two states in $\S$,
  and let $\alpha$ be an action label.
  If $(s,\alpha,s')\in G_\D$, then $h(s')\in F^D_\alpha(h(s))$ by $C2$.
  Likewise, if $(h(s),\alpha,h(s'))\in G_h$, then $s'\in F_\alpha(s)$ also by $C2$.
  Hence, $h$ preserves edges and $G_\D$ and $G_h$ are isomorphic.
\end{proof}

\begin{theorem}
  Let $D$ be a (hidden) planning domain, let $\D\,{=}\,\tup{\T,\S,L,F}$ be
  a dataset, and let $g$ be a 1-1 function that maps planning states $\bar{s}$
  in $D$ into O2D states $g(\bar{s})$ such that
  $F_\alpha(g(\bar{s}))=\multiset{g(\bar{s}')\,|\,\bar{s}'\in F^D_\alpha(\bar{s})}$ for $g(\bar{s})\in\T$ and $\alpha$ in $L$.
  If there is a grounding function $\sigma$ for the predicates in $D$ over
  a pool $\P$ such that $h=h_\sigma^D$ is the right inverse of $g$ on $\T$
  (i.e., $g(h(s))=s$ for $s\in\T$),
  then $\tup{D,\sigma}$ is a solution for the learning task $L(\D,\P)$.
\end{theorem}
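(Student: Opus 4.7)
The plan is to verify the two constraints of Definition~\ref{def:task} in turn, invoking the two hypotheses (injectivity of $g$, the compatibility of $F_\alpha$ with $g$, and the right-inverse relation between $h$ and $g$) in a direct way. I read the statement as showing that $\tup{D,\sigma}$ is a \emph{feasible} solution; optimality with respect to the ``simplest'' qualifier is not asserted and I would flag this explicitly.

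For constraint C1, the argument is immediate. Suppose $s,s'\in\T$ with $h(s)=h(s')$. Applying $g$ on both sides and using the right-inverse identity $g(h(\cdot))=\mathrm{id}_\T$ gives $s=g(h(s))=g(h(s'))=s'$. Hence $h$ is injective on $\T$, which is exactly C1.

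For constraint C2, fix $s\in\T$ and $\alpha\in L$, and set $\bar s=h(s)$, so that $g(\bar s)=s$. Using the hypothesis on $F_\alpha$,
\begin{equation*}
F_\alpha(s) \,=\, F_\alpha(g(\bar s)) \,=\, \multiset{\, g(\bar s') \,\mid\, \bar s'\in F^D_\alpha(\bar s)\,}.
\end{equation*}
Applying $h$ elementwise,
\begin{equation*}
\multiset{\, h(s') \,\mid\, s'\in F_\alpha(s)\,} \,=\, \multiset{\, h(g(\bar s')) \,\mid\, \bar s'\in F^D_\alpha(\bar s)\,},
\end{equation*}
and it remains to collapse the right-hand side to $F^D_\alpha(\bar s)=F^D_\alpha(h(s))$ by showing $h(g(\bar s'))=\bar s'$ for every successor $\bar s'\in F^D_\alpha(\bar s)$. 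This is the crux of the proof.

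The main obstacle is precisely that last step: the right-inverse hypothesis is stated only on $\T$, yet the successors $g(\bar s')$ generally live in $\S\setminus\T$. My plan is to argue that the right-inverse condition must be read on the whole range of $g$ relevant to the dataset (equivalently, on $\S$). The justification is structural: $h=h_\sigma^D$ is a deterministic function that, on any O2D input, evaluates the grounded predicates $\sigma(p)$; the hypothesis that $g(h(s))=s$ on $\T$ says that this evaluation procedure recovers the hidden planning state from its rendering. Since the procedure does not depend on membership in $\T$, the same recovery holds on any planning state whose rendering appears as an $\alpha$-successor of some $g(\bar s)\in\T$. With this extension in hand, the displayed multiset equality reduces to $F^D_\alpha(h(s))$, giving C2 and completing the proof. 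I would close with a one-line remark that the construction uses one action schema per label by the definition of $D$, so the schema-count side condition of Definition~\ref{def:task} is also met.
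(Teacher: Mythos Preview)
Your proof of C1 is correct and coincides with the paper's. For C2 you also correctly isolate the crux: one needs $h(g(\bar s'))=\bar s'$ for the successor states $\bar s'\in F^D_\alpha(\bar s)$.

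The gap is in how you discharge that step. You argue that since $h=h_\sigma^D$ is a fixed evaluation procedure independent of membership in $\T$, the right-inverse identity must extend beyond $\T$. That is not a proof: $h$ being globally defined says nothing about whether $g\circ h$ or $h\circ g$ equals the identity outside the set where this is hypothesized. A grounding $\sigma$ could in principle satisfy $g(h(s))=s$ on a small $\T$ by accident while failing elsewhere; the determinism of the evaluation procedure does not rule that out.

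The paper's fix is short and uses a hypothesis you list in your opening sentence but never actually invoke in the argument: the injectivity of $g$. From the right-inverse identity one has $g(h(g(\bar s)))=g(\bar s)$; since $g$ is 1--1, this yields $h(g(\bar s))=\bar s$, i.e., $h$ is also a \emph{left} inverse of $g$. That is exactly what is needed to collapse $\multiset{h(g(\bar s'))\mid\bar s'\in F^D_\alpha(\bar s)}$ to $F^D_\alpha(h(s))$. So the repair is to replace the ``structural extension'' paragraph with this one-line derivation of the left inverse from injectivity of $g$. (Your side observation that the right-inverse is literally asserted only on $\T$ while successors may lie in $\S\setminus\T$ is fair; the paper does not dwell on it either, and the intended reading is that the identity holds wherever $g$ is applied in the dataset.)
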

\begin{proof}
  We need to show that the grounded domain $\tup{D,\sigma}$ complies with
  the constraints C1 and C2 in Definition~\ref{def:task}.

  For C1, let $s$ and $s'$ be different states in $\T$.
  If $h(s)\,{=}\,h(s')$, then $s=g(h(s))=g(h(s'))=s'$ by the condition on $g$.

  Let $s\in\T$ be an O2D state, and let $\alpha\in L$ be an action label.
  First notice that for planning state $\bar{s}$, $g(h(g(\bar{s})))=g(\bar{s})$ implies $h(g(\bar{s}))=\bar{s}$ and thus $h$ is a left inverse of $g$.
  Then,
  \begin{alignat*}{2}
    F^D_\alpha(h(s))\ &=\ \multiset{ \bar{s}' \,|\, \bar{s}' \in F^D_\alpha(h(s)) }         & \ \ \ & \text{\small(definition)} \\
                      &=\ \multiset{ \bar{s}' \,|\, g(\bar{s}') \in F_\alpha(g(h(s))) }     & \ \ \ & \text{\small(def.\ $F_\alpha$ in Thm)} \\
                      &=\ \multiset{ \bar{s}' \,|\, g(\bar{s}') \in F_\alpha(s) }           & \ \ \ & \text{\small(right inv.)} \\
                      &=\ \multiset{ h(g(\bar{s}')) \,|\, g(\bar{s}') \in F_\alpha(s) }     & \ \ \ & \text{\small(left inv.)} \\
                      &=\ \multiset{ h(s') \,|\, s' \in F_\alpha(s) } \,.                   & \ \ \ & \text{\small(def.\ $F_\alpha$ in Thm)}
  \end{alignat*}
  Therefore, constraint C2 is satisfied as well.
\end{proof}

\subsection{Data Generation: Details}

\paragraph{Blocks3ops and Blocks4ops.}
Two encodings of the classical planning domain, where stackable blocks
need to be reassembled on a table by a robot. Instances are parametrized
by the number $n$ of blocks.
The instances in the dataset have $n=1,\dots,5$ blocks.
Blocks3ops has 3 action labels (Stack, Newtower, and Move) while
Blocks4ops has 4 (Pickup, Putdown, Unstack, and Stack).
O2D states are defined based on the corresponding PDDLGym state images
for this domain, as illustrated in Figure~2. 

\paragraph{Hanoi1op and Hanoi4ops.}
Two encodinfs of the Tower of Hanoi problem with arbitraty number of pegs and disks.
The datasets in both cases contain the instances for 3 pegs and $n$ disks, $n=1,\ldots,5$.
Hanoi1op involve a single action label Move while Hanoi4ops has 4 labels:
MoveFromPegToPeg, MoveFromPegToDisk, MoveFromDiskToPeg, and MoveFromDiskToDisk.

\paragraph{Sliding Tile.}
The generalization of the 15-puzzle problem over rectangular grids
of arbitrary dimensions, parametrized as $r\,{\times}\,c$ where $r$ and $c$
are the number of rows and columns.
The dataset contains instances $r\,{\times}\,c$ such that the number of cells $rc\leq 6$.
The action labels are MoveUp, MoveRight, MoveDown, and MoveLeft.  O2D states are defined based on images such as the one illustrated in Figure~\ref{fig:slidingtile_appendix}.

\begin{SaveVerbatim}[numbers,gobble=0,commandchars=\\\{\}]{o2d:slidingtile}
  \textcolor{bcolor}{% object and types}
  tile(t1). tile(t2). 
  cell(c1_1). cell(c1_2).
  ...
  \textcolor{bcolor}{% relations}
  overlap(t1,c1_1).
  below(c2_1,c_1_1).
  ...
  \textcolor{bcolor}{% shapes}
  shape(d1,rectangle).
  ...
\end{SaveVerbatim}

\begin{figure}[t]
  \centering
  \begin{tabular}{c@{\quad}c}
   \includegraphics[width=0.14\textwidth]{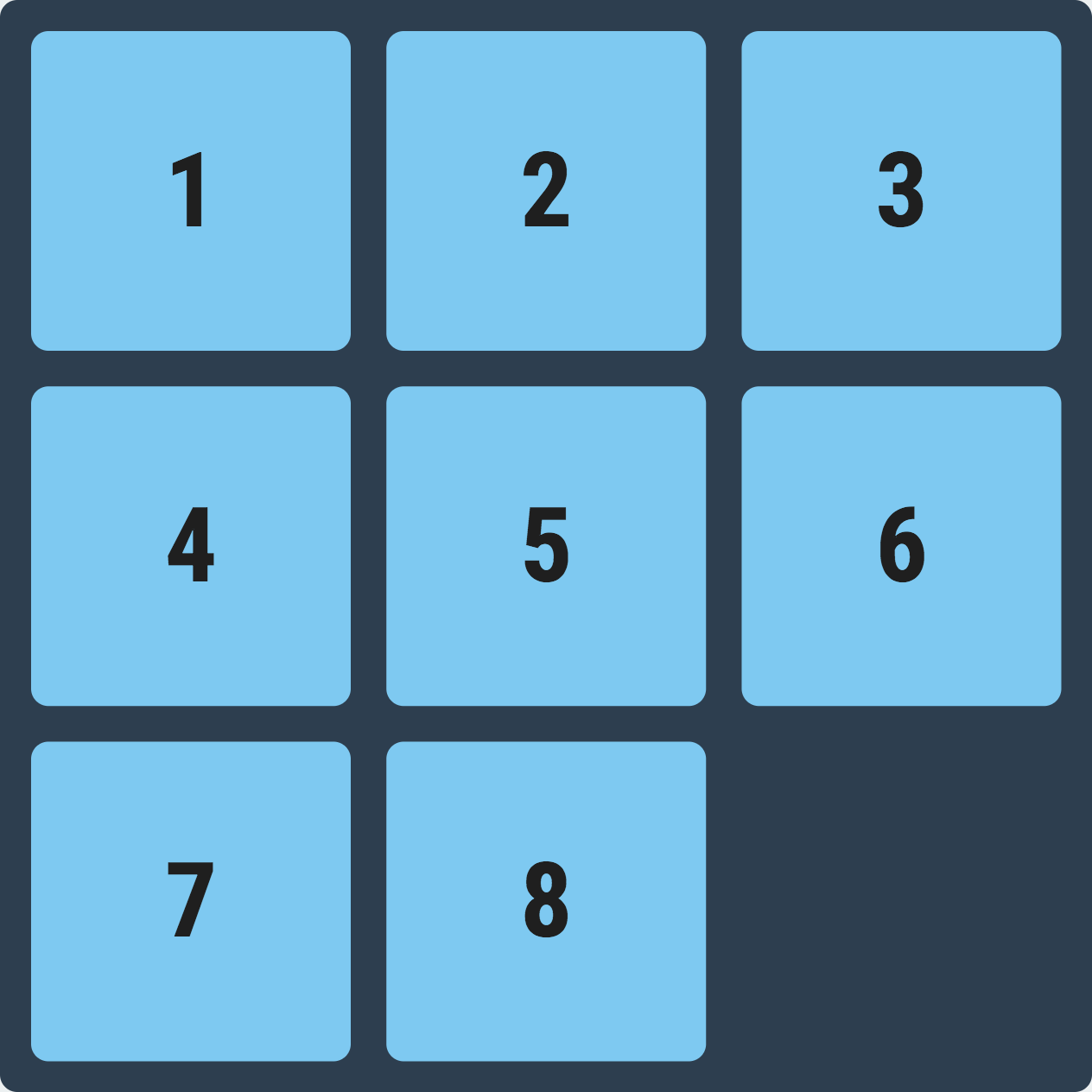} &
    \resizebox{0.22\textwidth}{!}{\BUseVerbatim{o2d:slidingtile}} 
  \end{tabular}
  \vskip -.5em
  \caption{Slidingtile scene and corresponding O2D state.}
  \label{fig:slidingtile_appendix}
\end{figure}

\paragraph{IPC Grid}
In this planning problem from the Int.\ Planning Competition (IPC), there is a
robot that moves within a rectangular grid where cells may be locked, but that
can be opened with matching keys, where a key and a cell match if they have the
same shape.
Keys can be picked and dropped by the robot, and locked cells can be opened
with the right key from an adjacent cell. The goal is to have some of the keys
at specified locations.
Instances are parametrized by the number of rows $r$ and columns $c$ of the grid,
the number of key/cell shapes $s$, the number of keys $k$, and the number of locked
cells $\ell$.
The instances used for learning are generated with $r\leq2$, $c\leq2$, $s\leq2$, $k\leq2$ and $\ell\leq1$.
For each combination of parameters, one instance is generated, in which the locations
of objects is randomized.
The action space has 10 labels: MoveUp, MoveRight, MoveDown, MoveLeft, Pickup,
Putdown, UnlockFromAbove, UnlockFromRight, UnlockFromBelow, and UnlockFromLeft. O2D states are defined based on images such as the one illustrated in Figure~\ref{fig:grid_appendix}.


\begin{SaveVerbatim}[gobble=2,commandchars=\\\{\}]{o2d:grid}
  \textcolor{bcolor}{% objects and types}
  robot(r). key(k0). key(k1). 
  cell(c1_1). 
  blackcell(c1_4). 
  ...
  \textcolor{bcolor}{% relations}
  overlap(k0,c1_2). 
  overlap(r,c2_3). 
  below(c3_1,c_2_1). 
  ...
  \textcolor{bcolor}{% shapes}
  shape(k0,heart). 
  shape(c1_4,circle). 
  shape(k1,circle). 
  shape(c3_3,heart). 
  ...
\end{SaveVerbatim}

\begin{figure}[t]
  \centering
  \begin{tabular}{c@{\quad}c}
    \scalebox{1.5}{
      \begin{tikzpicture}[background rectangle/.style={fill=gray!85}, show background rectangle]
        \draw[step=0.5cm,line width=0.3mm, color=white] (-1,-1) grid (1,1);
        \node[inner sep=0pt] (0) at (-0.25,+0.75)
            {\includegraphics[width=.02\textwidth]{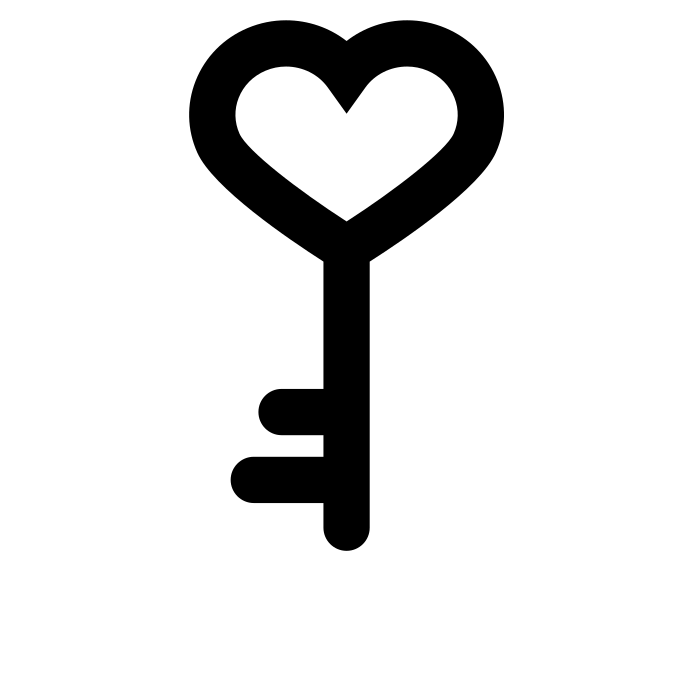}};
        \node[inner sep=0pt] (0) at (+0.25,+0.75)
            {\includegraphics[width=.02\textwidth]{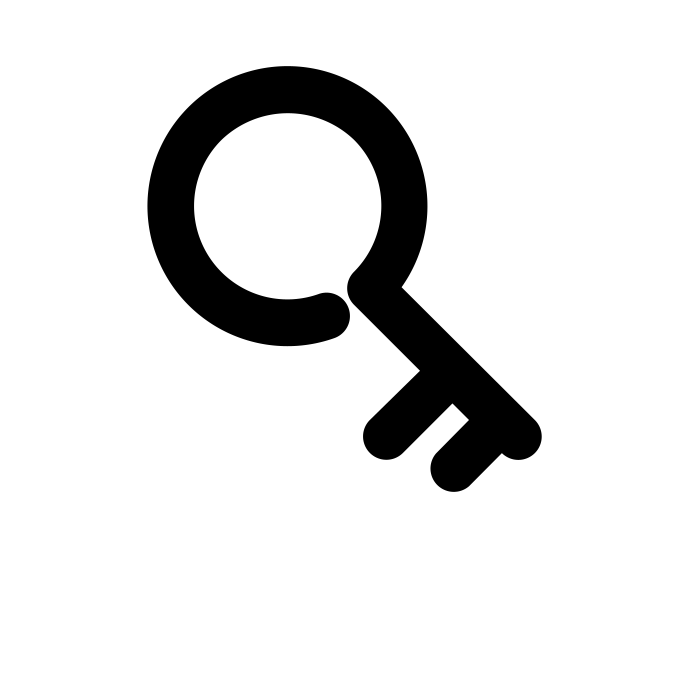}};
        \node[inner sep=0pt] (0) at (+0.75,+0.695)
            {\includegraphics[width=.0325\textwidth]{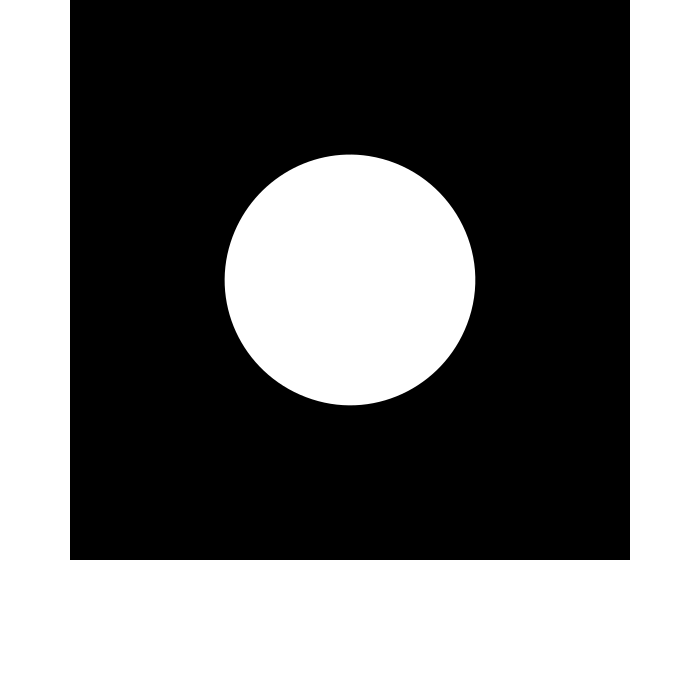}};
        \node[inner sep=0pt] (0) at (+0.25,-0.31)
            {\includegraphics[width=.0325\textwidth]{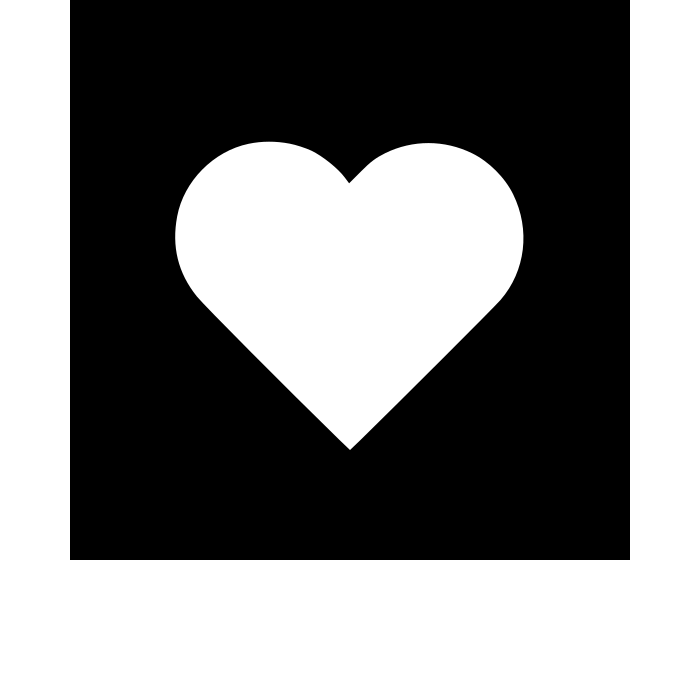}};
        \node[inner sep=0pt] (0) at (+0.25,+0.225)
            {\includegraphics[width=.023\textwidth]{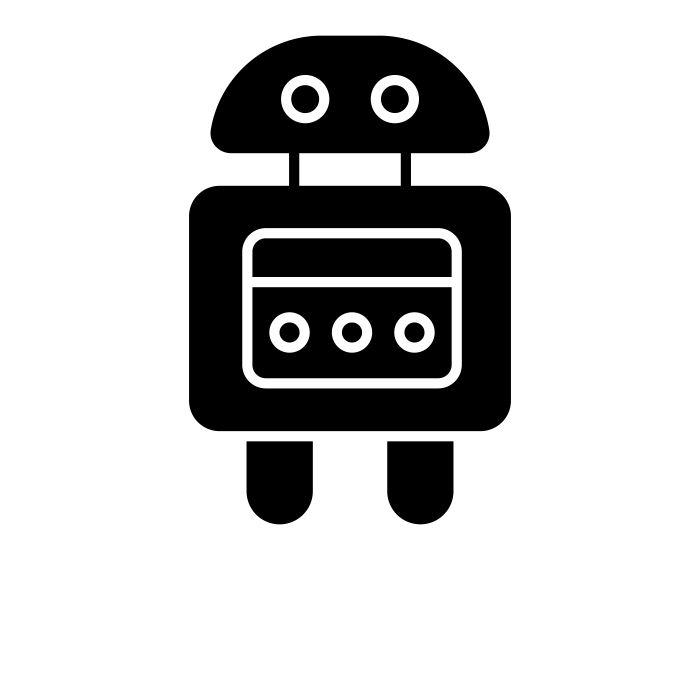}};
      \end{tikzpicture}
    }
    &
    \resizebox{0.22\textwidth}{!}{\BUseVerbatim{o2d:grid}} 
  \end{tabular}
  \vskip -.5em
  \caption{Grid scene and corresponding O2D state.}
  \label{fig:grid_appendix}
\end{figure}

\paragraph{Sokoban1 and Sokoban2.}
A puzzle where a player (Sokoban) pushes boxes (crates) around in a warehouse
(represented as a grid), trying to get them to designated storage locations.
Instances are parametrized as $(r,c,b)$ for the number of rows $r$, the number
of columns $c$, and the number of boxes $b$ located on the grid, yet the parameter
does not determine the instance because the crates and Sokoban may be in different
locations initally and at the goal.
Two different datasets are considered, one with many but smaller instances,
and the other with fewer but bigger instances.
The dataset for Sokoban1 consits of 94 instances for $(r,c,b)$ in
$\{ (1,5,b), (2,3,b), (3,2,b), (5,1,b) \}$ for $b=0,1,2$, and one extra
(larger) instance for $(4,5,2)$.
The dataset for Sokoban2 consits of 24 instances for
$(r,c,b)$ in $\{(r,5,b)\,|\,r\in\{1,2,4,5\}\}\cup\{(5,c,b)\,|\,c\in\{1,\ldots,5\}\}$
with $b=0,1,2$.
The action space for Sokoban has 8 labels: MoveUp, MoveRight, MoveDown, MoveLeft,
PushUp, PushRight, PushDown, and PushLeft.

\subsection{Mapping STRIPS States into O2D States}

For each planning instance in the data pool expressed as a pair of domain and instance
PDDL files, the full reachable state space is enumerated. Then, for each reachable
state $\bar{s}$, a ``rendering''  function $g(\cdot)$ that maps STRIPS to O2D states is applied.
The rendering function is specified by a set of DATALOG  rules that say how the visual elements
of the O2D scene are obtained.
The rules used, in JSON format, are shown in Figure~\ref{fig:registry}.
They contain all the information about how planning states are mapped into scene representations
in O2D.


\begin{figure*}[t]
  \begin{Verbatim}[gobble=4,frame=single,numbers=left,codes={\catcode`$=3},fontsize=\relsize{-1}]
    \{ "blocks3ops" :
        \{ "constants" : ["rectangle", "t"],
          "facts"     : [ ["table",["t"]] ],
          "rules"     : \{ "block"      : [ ["block(X)",           ["ontable(X)"]],
                                           ["block(X)",           ["on(X,Y)"]] ],
                          "below"      : [ ["below(X,Y)",         ["on(Y,X)"]],
                                           ["below(X,Y)",         ["ontable(Y)", "table(X)"]] ],
                          "smaller"    : [ ["smaller(X,Y)",       ["block(X)", "table(Y)"]],
                                           ["smaller(X,Z)",       ["smaller(X,Y)", "smaller(Y,Z)"]] ],
                          "shape"      : [ ["shape(X,rectangle)", ["object(X)"]] ]
                        \} \},
       "blocks4ops" :
        \{ "constants" : ["rectangle", "r", "t"],
          "facts"     : [ ["robot",["r"]], ["table",["t"]] ],
          "rules"     : \{ "block"      : [ ["block(X)",           ["ontable(X)"]],
                                           ["block(X)",           ["on(X,Y)"]] ],
                          "overlap"    : [ ["overlap(X,Y)",       ["holding(X)", "robot(Y)"]],
                                           ["overlap(Y,X)",       ["overlap(X,Y)"]] ],
                          "below"      : [ ["below(X,Y)",         ["on(Y,X)"]],
                                           ["below(X,Y)",         ["ontable(Y)", "table(X)"]] ],
                          "smaller"    : [ ["smaller(X,Y)",       ["block(X)", "table(Y)"]],
                                           ["smaller(X,Y)",       ["block(X)", "robot(Y)"]],
                                           ["smaller(X,Y)",       ["robot(X)", "table(Y)"]],
                                           ["smaller(X,Z)",       ["smaller(X,Y)", "smaller(Y,Z)"]] ],
                          "shape"      : [ ["shape(X,rectangle)", ["object(X)"]] ]
                        \} \},
      "hanoi1op" :
        \{ "constants" : ["rectangle"],
          "facts"     : [],
          "rules"     : \{ "overlap"    : [ ["overlap(X,Y)",       ["disk(X)", "peg(Y)", "on(X,Y)"]],
                                           ["overlap(Y,X)",       ["overlap(X,Y)"]] ],
                          "below"      : [ ["below(X,Y)",         ["on(Y,X)", "disk(X)", "disk(Y)"]],
                                           ["below(X,Y)",         ["on(Y,X)", "peg(X)", "disk(Y)"]] ],
                          "shape"      : [ ["shape(X,rectangle)", ["object(X)"]] ]
                        \} \},
      "hanoi4ops" : \{ "defer-to" : "hanoi1op" \},
      "slidingtile" :
        \{ "constants" : ["rectangle"],
          "facts"     : [],
          "rules"     : \{ "cell"       : [ ["cell(X)",            ["position(X)"]] ],
                          "overlap"    : [ ["overlap(X,Y)",       ["at(X,Y)"]],
                                           ["overlap(Y,X)",       ["overlap(X,Y)"]] ],
                          "shape"      : [ ["shape(X,rectangle)", ["object(X)"]] ]
                        \} \},
      "grid" :
        \{ "constants" : ["r"],
          "facts"     : [ ["robot",["r"]] ],
          "rules"     : \{ "cell"       : [ ["cell(X)",            ["place(X)", "open(X)"]] ],
                          "blackcell"  : [ ["blackcell(X)",       ["locked(X)"]] ],
                          "overlap"    : [ ["overlap(X,r)",       ["at_robot(X)"]],
                                           ["overlap(X,Y)",       ["at(X,Y)"]],
                                           ["overlap(Y,X)",       ["overlap(X,Y)"]] ],
                          "smaller"    : [ ["smaller(X,Y)",       ["robot(X)", "place(Y)"]],
                                           ["smaller(X,Y)",       ["key(X)", "place(Y)"]],
                                           ["smaller(X,Z)",       ["smaller(X,Y)", "smaller(Y,Z)"]] ],
                          "shape"      : [ ["shape(X,S)",         ["lock_shape(X,S)"]],
                                           ["shape(X,S)",         ["key_shape(X,S)"]] ]
                        \} \},
      "sokoban1" :
        \{ "constants" : ["sokoban1", "rectangle", "sokoshape"],
          "facts"     : [ ["sokoban",["sokoban1"]] ],
          "rules"     : \{ "cell"       : [ ["cell(X)",            ["leftof(X,Y)"]],
                                           ["cell(Y)",            ["leftof(X,Y)"]],
                                           ["cell(X)",            ["below(X,Y)"]],
                                           ["cell(Y)",            ["below(X,Y)"]],
                                           ["cell(Y)",            ["at(X,Y)"]] ],
                          "overlap"    : [ ["overlap(X,Y)",       ["at(X,Y)"]],
                                           ["overlap(Y,X)",       ["overlap(X,Y)"]] ],
                          "left"       : [ ["left(X,Y)",          ["leftof(X,Y)"]] ],
                          "shape"      : [ ["shape(X,sokoshape)", ["sokoban(X)"]],
                                           ["shape(X,rectangle)", ["cell(X)"]],
                                           ["shape(X,rectangle)", ["crate(X)"]] ]
                        \} \},
      "sokoban2" : \{ "defer-to" : "sokoban1" \}
    \}
  \end{Verbatim}
  \vskip -1eM
  \caption{JSON specification of the O2D rendering function $g(\cdot)$ that is used to generate the datasets from the STRIPS instances.
    For each reachable STRIPS state $\bar{s}$ in such an instance, the ``rules'' are applied until reaching a fixpoint,
    where a rules consists of a head (single O2D atom) and a body (list of atoms).
    The resulting O2D state is obtained by preserving the resulting O2D atoms, and removing all the STRIPS
    atoms except the static ones that specify types;
    i.e., \texttt{robot}, \texttt{block}, \texttt{table}, \texttt{sokoban}, \texttt{crate}, \texttt{key}, and \texttt{tile}.
  }
  \label{fig:registry}
\end{figure*}

\subsection{Learned Grounded Domains}

Figures~\ref{fig:domain:blocks3ops}--\ref{fig:domain:sokoban} show the learned grounded
domains for all the learning tasks in the experiments. In each case, we show the final
and optimal value of the (optimized) cost function (lines 357--367 in Fig.~\ref{fig:program:4}),
the grounded predicates from the pool $\P$ that make up the learned domain, the action
schemas in the domain, the constants (if any), and some stats about the incremental solver.

In these models, grounded predicates $P$ are directly represented by their
grounding $\sigma(P)$ as computed by the learner, and their description
is given as \texttt{<predicate>/<arity>} in the top part of each domain.
The notation to represent concepts, roles and predicates is as follows,
where $d(\cdot)$ is the denotation function:
\begin{enumerate}[--]
  \item O2D concept $C$ and role $R$ denoted by $C$ and $R$, resp.,
  \item Concept `$\top$' denoted by \texttt{Top},
  \item Concept `$\exists\,R.C$' denoted by \texttt{ER[$d(R)$,$d(C)$]},
  \item Concept `$C\sqcap C'$' denoted by \texttt{INTER[$d(C)$,$d(C')$]},
  \item Role `$R^{-1}$' denoted by \texttt{INV[$d(R)$]},
  \item Role `$R\circ R'$' denoted by \texttt{COMP[$d(R)$,$d(R')$]}, and
  \item Predicate `$C\sqsubseteq C'$' denoted by \texttt{SUBSET[$d(C)$,$d(C')$]}.
\end{enumerate}
Recall that concepts and roles directly yield predicates of arity 1 and 2 respectively,
while nullary predicates are obtained with the subset construction (i.e., $C\sqsubseteq C'$).

\begin{figure*}[t]
  \begin{Verbatim}[gobble=4,frame=single,numbers=left,codes={\catcode`$=3},fontsize=\relsize{-1}]
    \textcolor{bcolor}{Optimization:} (10,5,0,10,8)
    \textcolor{bcolor}{1 constant(s):} t
    \textcolor{bcolor}{2 predicate(s):} ER[below,Top]/1, INV[below]/2

    \textcolor{acolor}{Stack(1,2):}
      \textcolor{pcolor}{pre:} $\neg$ER[below,Top](1), INV[below](1,t), $\neg$ER[below,Top](2)
      \textcolor{pcolor}{eff:} ER[below,Top](2), $\neg$INV[below](1,t), INV[below](1,2)
    \textcolor{acolor}{Newtower(1,2):}
      \textcolor{pcolor}{pre:} $\neg$ER[below,Top](1), INV[below](1,2)
      \textcolor{pcolor}{eff:} $\neg$ER[below,Top](2), INV[below](1,t), $\neg$INV[below](1,2)
    \textcolor{acolor}{Move(1,2,3):}
      \textcolor{pcolor}{pre:} $\neg$ER[below,Top](1), INV[below](1,2), $\neg$ER[below,Top](3)
      \textcolor{pcolor}{eff:} $\neg$ER[below,Top](2), ER[below,Top](3), $\neg$INV[below](1,2), INV[below](1,3)

    \textcolor{bcolor}{#calls=5, solve_wall_time=1.97, solve_ground_time=1.92, verify_time=0.84, elapsed_time=2.97}
  \end{Verbatim}
  \vskip -1eM
  \caption{Learned grounded domain for Blocks3ops}
  \label{fig:domain:blocks3ops}
\end{figure*}

\begin{figure*}[t]
  \begin{Verbatim}[gobble=4,frame=single,numbers=left,codes={\catcode`$=3},fontsize=\relsize{-1}]
    \textcolor{bcolor}{Optimization:} (10,7,0,14,9)
    \textcolor{bcolor}{2 constant(s):} r, t
    \textcolor{bcolor}{3 predicate(s):} ER[overlap,Top]/1, INTER[ER[below,Top],block]/1, below/2

    \textcolor{acolor}{Pickup(1):}
      \textcolor{pcolor}{pre:} $\neg$ER[overlap,Top](r), $\neg$INTER[ER[below,Top],block](1), below(t,1)
      \textcolor{pcolor}{eff:} ER[overlap,Top](r), ER[overlap,Top](1), $\neg$below(t,1)
    \textcolor{acolor}{Putdown(1):}
      \textcolor{pcolor}{pre:} ER[overlap,Top](1)
      \textcolor{pcolor}{eff:} $\neg$ER[overlap,Top](r), $\neg$ER[overlap,Top](1), below(t,1)
    \textcolor{acolor}{Unstack(1,2):}
      \textcolor{pcolor}{pre:} $\neg$ER[overlap,Top](r), $\neg$INTER[ER[below,Top],block](1), below(2,1)
      \textcolor{pcolor}{eff:} ER[overlap,Top](r), ER[overlap,Top](1), $\neg$INTER[ER[below,Top],block](2), $\neg$below(2,1)
    \textcolor{acolor}{Stack(1,2):}
      \textcolor{pcolor}{pre:} ER[overlap,Top](1), $\neg$INTER[ER[below,Top],block](2)
      \textcolor{pcolor}{eff:} $\neg$ER[overlap,Top](r), $\neg$ER[overlap,Top](1), INTER[ER[below,Top],block](2), below(2,1)

    \textcolor{bcolor}{#calls=7, solve_wall_time=23.66, solve_ground_time=23.37, verify_time=29.42, elapsed_time=53.70}
  \end{Verbatim}
  \vskip -1eM
  \caption{Learned grounded domain for Blocks4ops}
  \label{fig:domain:blocks4ops}
\end{figure*}

\begin{figure*}[t]
  \begin{Verbatim}[gobble=4,frame=single,numbers=left,codes={\catcode`$=3},fontsize=\relsize{-1}]
    \textcolor{bcolor}{Optimization:} (4,5,3,4,4)
    \textcolor{bcolor}{3 predicate(s):} ER[below,Top]/1, below/2, smaller/2
    \textcolor{bcolor}{1 static predicate(s):} smaller/2

    \textcolor{acolor}{Move(1,2,3):}
      \textcolor{pcolor}{static:} smaller(1,2)
      \textcolor{pcolor}{pre:} $\neg$ER[below,Top](1), $\neg$ER[below,Top](2), below(3,1)
      \textcolor{pcolor}{eff:} ER[below,Top](2), $\neg$ER[below,Top](3), below(2,1), $\neg$below(3,1)

    \textcolor{bcolor}{#calls=4, solve_wall_time=1.59, solve_ground_time=1.53, verify_time=0.44, elapsed_time=2.16}
  \end{Verbatim}
  \vskip -1eM
  \caption{Learned grounded domain for Hanoi1op}
  \label{fig:domain:hanoi1op}
\end{figure*}

\begin{figure*}[t]
  \begin{Verbatim}[gobble=4,frame=single,numbers=left,codes={\catcode`$=3},fontsize=\relsize{-1}]
    \textcolor{bcolor}{Optimization:} (16,5,5,16,22)
    \textcolor{bcolor}{4 predicate(s):} ER[below,Top]/1, ER[smaller,Top]/1, INV[below]/2, INV[smaller]/2
    \textcolor{bcolor}{2 static predicate(s):} ER[smaller,Top]/1, INV[smaller]/2

    \textcolor{acolor}{MoveFromPegToPeg(1,2,3):}
      \textcolor{pcolor}{static:} $\neg$ER[smaller,Top](1), $\neg$INV[smaller](1,3)
      \textcolor{pcolor}{pre:} $\neg$ER[below,Top](2), INV[below](2,1), $\neg$ER[below,Top](3)
      \textcolor{pcolor}{eff:} $\neg$ER[below,Top](1), ER[below,Top](3), $\neg$INV[below](2,1), INV[below](2,3)
    \textcolor{acolor}{MoveFromPegToDisk(1,2,3):}
      \textcolor{pcolor}{static:} $\neg$INV[smaller](1,2), INV[smaller](3,2), $\neg$ER[smaller,Top](3)
      \textcolor{pcolor}{pre:} $\neg$ER[below,Top](1), $\neg$ER[below,Top](2), INV[below](1,3)
      \textcolor{pcolor}{eff:} ER[below,Top](2), $\neg$ER[below,Top](3), INV[below](1,2), $\neg$INV[below](1,3)
    \textcolor{acolor}{MoveFromDiskToPeg(1,2,3):}
      \textcolor{pcolor}{static:} ER[smaller,Top](2), $\neg$ER[smaller,Top](3)
      \textcolor{pcolor}{pre:} $\neg$ER[below,Top](1), INV[below](1,2), $\neg$ER[below,Top](3)
      \textcolor{pcolor}{eff:} $\neg$ER[below,Top](2), ER[below,Top](3), $\neg$INV[below](1,2), INV[below](1,3)
    \textcolor{acolor}{MoveFromDiskToDisk(1,2,3):}
      \textcolor{pcolor}{static:} $\neg$INV[smaller](1,2), ER[smaller,Top](2), ER[smaller,Top](3)
      \textcolor{pcolor}{pre:} $\neg$ER[below,Top](1), $\neg$ER[below,Top](2), INV[below](1,3)
      \textcolor{pcolor}{eff:} ER[below,Top](2), $\neg$ER[below,Top](3), INV[below](1,2), $\neg$INV[below](1,3)

    \textcolor{bcolor}{#calls=6, solve_wall_time=14.23, solve_ground_time=12.67, verify_time=0.59, elapsed_time=15.06}
  \end{Verbatim}
  \vskip -1eM
  \caption{Learned grounded domain for Hanoi4ops}
  \label{fig:domain:hanoi4ops}
\end{figure*}

\begin{figure*}[t]
  \begin{Verbatim}[gobble=4,frame=single,numbers=left,codes={\catcode`$=3},fontsize=\relsize{-1}]
    \textcolor{bcolor}{Optimization:} (16,5,6,24,12)
    \textcolor{bcolor}{4 predicate(s):} ER[overlap,Top]/1, overlap/2, INV[left]/2, INV[below]/2
    \textcolor{bcolor}{2 static predicate(s):} INV[left]/2, INV[below]/2

    \textcolor{acolor}{MoveUp(1,2,3):}
      \textcolor{pcolor}{static:} INV[below](3,2)
      \textcolor{pcolor}{pre:} overlap(1,2), $\neg$ER[overlap,Top](3)
      \textcolor{pcolor}{eff:} $\neg$ER[overlap,Top](2), ER[overlap,Top](3), $\neg$overlap(1,2), overlap(1,3), $\neg$overlap(2,1), overlap(3,1)
    \textcolor{acolor}{MoveRight(1,2,3):}
      \textcolor{pcolor}{static:} INV[left](3,2)
      \textcolor{pcolor}{pre:} overlap(1,2), $\neg$ER[overlap,Top](3)
      \textcolor{pcolor}{eff:} $\neg$ER[overlap,Top](2), ER[overlap,Top](3), $\neg$overlap(1,2), overlap(1,3), $\neg$overlap(2,1), overlap(3,1)
    \textcolor{acolor}{MoveDown(1,2,3):}
      \textcolor{pcolor}{static:} INV[below](2,3)
      \textcolor{pcolor}{pre:} overlap(2,1), $\neg$ER[overlap,Top](3)
      \textcolor{pcolor}{eff:} $\neg$ER[overlap,Top](2), ER[overlap,Top](3), $\neg$overlap(1,2), overlap(1,3), $\neg$overlap(2,1), overlap(3,1)
    \textcolor{acolor}{MoveLeft(1,2,3):}
      \textcolor{pcolor}{static:} INV[left](2,3)
      \textcolor{pcolor}{pre:} overlap(2,1), $\neg$ER[overlap,Top](3)
      \textcolor{pcolor}{eff:} $\neg$ER[overlap,Top](2), ER[overlap,Top](3), $\neg$overlap(1,2), overlap(1,3), $\neg$overlap(2,1), overlap(3,1)

    \textcolor{bcolor}{#calls=6, solve_wall_time=3.00, solve_ground_time=2.89, verify_time=1.20, elapsed_time=4.43}
  \end{Verbatim}
  \vskip -1eM
  \caption{Learned grounded domain for Sliding Tile}
  \label{fig:domain:slidingtile}
\end{figure*}

\begin{figure*}[t]
  \begin{Verbatim}[gobble=4,frame=single,numbers=left,codes={\catcode`$=3},fontsize=\relsize{-1}]
    \textcolor{bcolor}{Optimization:} (34,8,11,28,42)
    \textcolor{bcolor}{1 constant(s):} r
    \textcolor{bcolor}{8 predicate(s):} SUBSET[key,ER[overlap,Top]]/0, cell/1, ER[smaller,Top]/1, INTER[key,ER[overlap,Top]]/1,
                    below/2, overlap/2, INV[left]/2, COMP[shape,INV[shape]]/2
    \textcolor{bcolor}{4 static predicate(s):} ER[smaller,Top]/1, below/2, INV[left], COMP[shape,INV[shape]]/2

    \textcolor{acolor}{MoveUp(1,2):}
      \textcolor{pcolor}{static:} below(1,2)
      \textcolor{pcolor}{pre:} overlap(r,1), cell(2)
      \textcolor{pcolor}{eff:} $\neg$overlap(r,1), overlap(r,2), $\neg$overlap(1,r), overlap(2,r)
    \textcolor{acolor}{MoveRight(1,2):}
      \textcolor{pcolor}{static:} INV[left](2,1)
      \textcolor{pcolor}{pre:} overlap(1,r), cell(2)
      \textcolor{pcolor}{eff:} $\neg$overlap(r,1), overlap(r,2), $\neg$overlap(1,r), overlap(2,r)
    \textcolor{acolor}{MoveDown(1,2):}
      \textcolor{pcolor}{static:} below(2,1)
      \textcolor{pcolor}{pre:} overlap(1,r), cell(2)
      \textcolor{pcolor}{eff:} $\neg$overlap(r,1), overlap(r,2), $\neg$overlap(1,r), overlap(2,r)
    \textcolor{acolor}{MoveLeft(1,2):}
      \textcolor{pcolor}{static:} INV[left](1,2)
      \textcolor{pcolor}{pre:} overlap(r,1), cell(2)
      \textcolor{pcolor}{eff:} $\neg$overlap(r,1), overlap(r,2), $\neg$overlap(1,r), overlap(2,r)
    \textcolor{acolor}{Pickup(1,2):}
      \textcolor{pcolor}{pre:} SUBSET[key,ER[overlap,Top]], overlap(1,r), overlap(1,2)
      \textcolor{pcolor}{eff:} $\neg$SUBSET[key,ER[overlap,Top]], $\neg$INTER[key,ER[overlap,Top]](2), $\neg$overlap(1,2), $\neg$overlap(2,1)
    \textcolor{acolor}{Putdown(1,2):}
      \textcolor{pcolor}{static:} ER[smaller,Top](2)
      \textcolor{pcolor}{pre:} overlap(1,r), $\neg$INTER[key,ER[overlap,Top]](2)
      \textcolor{pcolor}{eff:} SUBSET[key,ER[overlap,Top]], INTER[key,ER[overlap,Top]](2), overlap(1,2), overlap(2,1)
    \textcolor{acolor}{UnlockFromAbove(1,2,3):}
      \textcolor{pcolor}{static:} below(2,1), COMP[shape,INV[shape]](2,3)
      \textcolor{pcolor}{pre:} $\neg$SUBSET[key,ER[overlap,Top]], overlap(1,r), $\neg$cell(2), $\neg$INTER[key,ER[overlap,Top]](3)
      \textcolor{pcolor}{eff:} cell(2)
    \textcolor{acolor}{UnlockFromRight(1,2,3):}
      \textcolor{pcolor}{static:} INV[left](2,1), COMP[shape,INV[shape]](1,3), ER[smaller,Top](3)
      \textcolor{pcolor}{pre:} $\neg$cell(1), overlap(r,2), $\neg$INTER[key,ER[overlap,Top]](3)
      \textcolor{pcolor}{eff:} cell(1)
    \textcolor{acolor}{UnlockFromBelow(1,2,3):}
      \textcolor{pcolor}{static:} below(2,1), COMP[shape,INV[shape]](1,3)
      \textcolor{pcolor}{pre:} $\neg$SUBSET[key,ER[overlap,Top]], $\neg$cell(1), overlap(2,r), $\neg$INTER[key,ER[overlap,Top]](3)
      \textcolor{pcolor}{eff:} cell(1)
    \textcolor{acolor}{UnlockFromLeft(1,2,3):}
      \textcolor{pcolor}{static:} INV[left](1,2), COMP[shape,INV[shape]](3,1), ER[smaller,Top](3)
      \textcolor{pcolor}{pre:} $\neg$cell(1), overlap(r,2), $\neg$INTER[key,ER[overlap,Top]](3)
      \textcolor{pcolor}{eff:} cell(1)

    \textcolor{bcolor}{#calls=27, solve_wall_time=4229.67, solve_ground_time=3536.23, verify_time=2404.87, elapsed_time=6653.03}
  \end{Verbatim}
  \vskip -1eM
  \caption{Learned grounded domain for IPC Grid}
  \label{fig:domain:grid:new}
\end{figure*}

\Omit{
\begin{figure*}[t]
  \begin{Verbatim}[gobble=4,frame=single,numbers=left,codes={\catcode`$=3},fontsize=\relsize{-1}]
    \textcolor{bcolor}{Optimization:} (34,8,11,28,39)
    \textcolor{bcolor}{1 constant(s):} r
    \textcolor{bcolor}{8 predicate(s):} SUBSET[key,ER[overlap,Top]]/0, cell/1, ER[smaller,Top]/1, INTER[ER[overlap,Top],key]/1,
                    below/2, overlap/2, INV[left]/2, COMP[shape,INV[shape]]/2
    \textcolor{bcolor}{4 static predicate(s):} ER[smaller,Top]/1, below/2, INV[left]/2, COMP[shape,INV[shape]]/2

    \textcolor{acolor}{MoveUp(1,2):}
      \textcolor{pcolor}{static:} below(1,2)
      \textcolor{pcolor}{pre:} overlap(1,r), cell(2)
      \textcolor{pcolor}{eff:} $\neg$overlap(r,1), overlap(r,2), $\neg$overlap(1,r), overlap(2,r)
    \textcolor{acolor}{MoveRight(1,2):}
      \textcolor{pcolor}{static:} INV[left](2,1)
      \textcolor{pcolor}{pre:} overlap(1,r), cell(2)
      \textcolor{pcolor}{eff:} $\neg$overlap(r,1), overlap(r,2), $\neg$overlap(1,r), overlap(2,r)
    \textcolor{acolor}{MoveDown(1,2):}
      \textcolor{pcolor}{static:} below(2,1)
      \textcolor{pcolor}{pre:} overlap(r,1), cell(2)
      \textcolor{pcolor}{eff:} $\neg$overlap(r,1), overlap(r,2), $\neg$overlap(1,r), overlap(2,r)
    \textcolor{acolor}{MoveLeft(1,2):}
      \textcolor{pcolor}{static:} INV[left](1,2)
      \textcolor{pcolor}{pre:} moverlap(1,r), cell(2)
      \textcolor{pcolor}{eff:} $\neg$overlap(r,1), overlap(r,2), $\neg$overlap(1,r), overlap(2,r)
    \textcolor{acolor}{Pickup(1,2):}
      \textcolor{pcolor}{pre:} SUBSET[key,ER[overlap,Top]], overlap(r,1), overlap(1,2)
      \textcolor{pcolor}{eff:} $\neg$SUBSET[key,ER[overlap,Top]], $\neg$INTER[ER[overlap,Top],key](2), $\neg$overlap(1,2), $\neg$overlap(2,1)
    \textcolor{acolor}{Putdown(1,2):}
      \textcolor{pcolor}{static:} ER[smaller,Top](2)
      \textcolor{pcolor}{pre:} overlap(1,r), $\neg$INTER[ER[overlap,Top],key](2)
      \textcolor{pcolor}{eff:} SUBSET[key,ER[overlap,Top]], INTER[ER[overlap,Top],key](2), overlap(1,2), overlap(2,1)
    \textcolor{acolor}{UnlockFrom__above(1,2,3):}
      \textcolor{pcolor}{static:} below(1,2), COMP[shape,INV[shape]](3,1)
      \textcolor{pcolor}{pre:} $\neg$cell(1), overlap(r,2), $\neg$INTER[ER[overlap,Top],key](3)
      \textcolor{pcolor}{eff:} cell(1)
    \textcolor{acolor}{UnlockFrom__right(1,2,3):}
      \textcolor{pcolor}{static:} INV[left](1,2), COMP[shape,INV[shape]](3,2)
      \textcolor{pcolor}{pre:} overlap(r,1), $\neg$cell(2), $\neg$INTER[ER[overlap,Top],key](3)
      \textcolor{pcolor}{eff:} cell(2)
    \textcolor{acolor}{UnlockFrom__below(1,2,3):}
      \textcolor{pcolor}{static:} below(2,1), COMP[shape,INV[shape]](3,1)
      \textcolor{pcolor}{pre:} $\neg$cell(1), overlap(2,r), $\neg$INTER[ER[overlap,Top],key](3)
      \textcolor{pcolor}{eff:} cell(1)
    \textcolor{acolor}{UnlockFrom__left(1,2,3):}
      \textcolor{pcolor}{static:} INV[left](2,1), COMP[shape,INV[shape]](3,2), ER[smaller,Top](3)
      \textcolor{pcolor}{pre:} overlap(1,r), $\neg$cell(2), $\neg$INTER[ER[overlap,Top],key](3)
      \textcolor{pcolor}{eff:} cell(2)

    \textcolor{bcolor}{#calls=25, solve_wall_time=2795.50, solve_ground_time=2373.03, verify_time=15.42, elapsed_time=2817.88}
  \end{Verbatim}
  \vskip -1eM
  \caption{Learned grounded domain for IPC Grid}
  \label{fig:domain:grid:1}
\end{figure*}
}

\begin{figure*}[t]
  \begin{Verbatim}[gobble=4,frame=single,numbers=left,codes={\catcode`$=3},fontsize=\relsize{-1}]
    \textcolor{bcolor}{Optimization:} (32,5,6,64,32)
    \textcolor{bcolor}{1 constant(s):} sokoban1
    \textcolor{bcolor}{4 predicate(s):} ER[overlap,Top]/1, below/2, overlap/2, INV[left]/2
    \textcolor{bcolor}{2 static predicate(s):} below/2, INV[left]/2

    \textcolor{acolor}{MoveUp(1,2):}
      \textcolor{pcolor}{static:} below(1,2)
      \textcolor{pcolor}{pre:} overlap(1,sokoban1), $\neg$ER[overlap,Top](2)
      \textcolor{pcolor}{eff:} $\neg$ER[overlap,Top](1), ER[overlap,Top](2), $\neg$overlap(sokoban1,1), overlap(sokoban1,2), $\neg$overlap(1,sokoban1),
           overlap(2,sokoban1)
    \textcolor{acolor}{MoveRight(1,2):}
      \textcolor{pcolor}{static:} INV[left](2,1)
      \textcolor{pcolor}{pre:} overlap(1,sokoban1), $\neg$ER[overlap,Top](2)
      \textcolor{pcolor}{eff:} $\neg$ER[overlap,Top](1), ER[overlap,Top](2), $\neg$overlap(sokoban1,1), overlap(sokoban1,2), $\neg$overlap(1,sokoban1),
           overlap(2,sokoban1)
    \textcolor{acolor}{MoveDown(1,2):}
      \textcolor{pcolor}{static:} below(2,1)
      \textcolor{pcolor}{pre:} overlap(sokoban1,1), $\neg$ER[overlap,Top](2)
      \textcolor{pcolor}{eff:} $\neg$ER[overlap,Top](1), ER[overlap,Top](2), $\neg$overlap(sokoban1,1), overlap(sokoban1,2), $\neg$overlap(1,sokoban1),
           overlap(2,sokoban1)
    \textcolor{acolor}{MoveLeft(1,2):}
      \textcolor{pcolor}{static:} INV[left](1,2)
      \textcolor{pcolor}{pre:} overlap(1,sokoban1), $\neg$ER[overlap,Top](2)
      \textcolor{pcolor}{eff:} $\neg$ER[overlap,Top](1), ER[overlap,Top](2), $\neg$overlap(sokoban1,1), overlap(sokoban1,2), $\neg$overlap(1,sokoban1),
           overlap(2,sokoban1)
    \textcolor{acolor}{PushUp(1,2,3,4):}
      \textcolor{pcolor}{static:} below(3,1), below(1,4)
      \textcolor{pcolor}{pre:} overlap(1,2), overlap(3,sokoban1), $\neg$ER[overlap,Top](4)
      \textcolor{pcolor}{eff:} $\neg$ER[overlap,Top](3), ER[overlap,Top](4), overlap(sokoban1,1), $\neg$overlap(sokoban1,3),
           overlap(1,sokoban1), $\neg$overlap(3,sokoban1), $\neg$overlap(1,2), $\neg$overlap(2,1), overlap(2,4), overlap(4,2)
    \textcolor{acolor}{PushRight(1,2,3,4):}
      \textcolor{pcolor}{static:} INV[left](1,3), INV[left](4,1)
      \textcolor{pcolor}{pre:} overlap(2,1), overlap(sokoban1,3), $\neg$ER[overlap,Top](4)
      \textcolor{pcolor}{eff:} $\neg$ER[overlap,Top](3), ER[overlap,Top](4), overlap(sokoban1,1), $\neg$overlap(sokoban1,3),
           overlap(1,sokoban1), $\neg$overlap(3,sokoban1), $\neg$overlap(1,2), $\neg$overlap(2,1), overlap(2,4), overlap(4,2)
    \textcolor{acolor}{PushDown(1,2,3,4):}
      \textcolor{pcolor}{static:} below(1,3), below(4,1)
      \textcolor{pcolor}{pre:} overlap(2,1), overlap(sokoban1,3), $\neg$ER[overlap,Top](4)
      \textcolor{pcolor}{eff:} $\neg$ER[overlap,Top](3), ER[overlap,Top](4), overlap(sokoban1,1), $\neg$overlap(sokoban1,3),
           overlap(1,sokoban1), $\neg$overlap(3,sokoban1), $\neg$overlap(1,2), $\neg$overlap(2,1), overlap(2,4), overlap(4,2)
    \textcolor{acolor}{PushLeft(1,2,3,4):}
      \textcolor{pcolor}{static:} INV[left](3,1), INV[left](1,4)
      \textcolor{pcolor}{pre:} overlap(2,1), overlap(3,sokoban1), $\neg$ER[overlap,Top](4)
      \textcolor{pcolor}{eff:} $\neg$ER[overlap,Top](3), ER[overlap,Top](4), overlap(sokoban1,1), $\neg$overlap(sokoban1,3),
           overlap(1,sokoban1), $\neg$overlap(3,sokoban1), $\neg$$\neg$overlap(1,2), overlap(2,1), overlap(2,4), overlap(4,2)

    \textcolor{bcolor}{#calls=11, solve_wall_time=12565.02, solve_ground_time=5314.35, verify_time=165.19, elapsed_time=12740.43}
  \end{Verbatim}
  \vskip -1eM
  \caption{Learned grounded domain for Sokoban}
  \label{fig:domain:sokoban}
\end{figure*}

\subsection{Implementation Details}

\subsubsection{Full ASP Program}

The code in ASP for learning the instances $P_i{=}\tup{D,I_i}$ from multiple input
graphs $G_i$, using a pool of predicates $\P$,
is shown in Figures~\ref{fig:program:1}--\ref{fig:program:4}.
Each graph $G_i$ is assumed to be encoded using the atoms \atom|node(I,S)| and
\atom|tlabel(I,T,L)| where \atom|S| and \atom|T=(S1,S2)| denote nodes and transitions
in the graph $G_i$ with index \atom|I|, and \atom|L| denotes the corresponding
action label.

At each step of incremental learning, each instance \atom|I| and node \atom|S|
from this instance that must be taken into account at that step is marked with
an atom \atom|relevant(I,S)|.
Truth values \atom|V| of ground atoms \atom|(P,OO)| from $\P$,
in the state \atom|S| of instance \atom|I|, are encoded in the input with atoms
\atom|val(I,(P,OO),S,V)|.
When computing the truth values for such atoms, \textit{redundant} predicates from
$\P$ are pruned. A predicate is redundant when its denotation over all states
in the dataset is the same as some previously considered predicate; i.e., given some
enumeration of the predicates in the pool, the predicate $p_i$ is redundant iff there
is $p_j$ such that for each state $s$ of each instance, $p^s_i = p^s_j$, $j\,{<}\,i$.
Moreover, for each instance \atom|I|, we compute the predicates that are static over
that instance, mark them with the fact \atom|f\_static(I,P)|, and encode their truth
value \atom|V| in all states of that instance with a single fact \atom|val(I,(P,OO2),V)|.
Each predicate \atom|P| in $\P$ is marked with the fact \atom|feature(P)|, and their
arity \atom|N| is encoded by fact \atom|f\_arity(P,N)|.
The number of action schemas is set to the number of action labels and the objects
are extracted from the valuation of the concept $\top$ (line 29).
The max number of chosen predicates is set to the value of the constant \atom|num\_predicates|
($12$ by default), while the max arity of actions is set to the value of the
constant \atom|max\_arity| ($3$ by default, but with max value of 4 for the code shown).

Exploiting the fact that the predicates in the pool have a maximum arity of 2,
the applicability relation for grounded actions as well as the successor function
are factored (lines 147--249). This makes the code longer but results in improved
grounding and solving times.

\subsubsection{Verifier}

The verifier, written in Python, receives the input data graph $G_\D$ for the
instance together with the learned grounded model $D$, and outputs a subset $\Delta$
of states in $G_\D$: either $\Delta\,{=}\,\emptyset$ meaning \emph{successful verification},
$\Delta\,{=}\,\{s,s'\}$ of two such states that are identical modulo the grounded
predicates in the model (cf.\ constraint C1 in \ref{def:task}), or a non-empty
subset of states in $G_\D$ for which constraint C2 does not hold.
The verifier is called from the incremental solver that then uses $\Delta$
to extend the learning dataset $\D$ as described in the paper.


The verifier works as follows. First, C1 is checked for all pair of
states in $G_\D$.
If for some pair $(s,s')$ C1 fails, the verifier terminates and outputs $\{s,s'\}$.
Otherwise, for each state $s$ in $G_\D$, the verifier checks that each transition
$(s,s')$ with label $\alpha$ in $G_\D$ has a matching transition $(h(s),h(s'))$
in the learned model $D$ via a grounded action with label $\alpha$, and vice
versa, that each transition $(h(s),\bar{s}')$ in $D$ via a grounded action with
label $\alpha$ has a matching transition $(s,s'')$ in $G_\D$ with label $\alpha$
such that $h(s'')=\bar{s}'$; if some of these two checks fails, the state $s$ is
added to the output set for the verifier.

\medskip

Appendix continues with figures in the next few pages.

\begin{figure*}[t]
  \begin{Verbatim}[gobble=4,frame=single,numbers=left,codes={\catcode`$=3},fontsize=\relsize{-2}]
    \textcolor{bcolor}{% Suggested call}
    \textcolor{bcolor}{% clingo -t 6 --sat-prepro=2 --time-limit=7200 <this-solver> <graph-files>}

    \textcolor{bcolor}{% Constants and options}
    #const num_predicates = 12.
    #const max_action_arity = 3.
    #const null_arg = (null,).
    #const opt_equal_objects = 0.          \textcolor{bcolor}{% Allow same obj as argument for grounded actions}
    #const opt_allow_negative_precs = 1.   \textcolor{bcolor}{% Allow for negative preconditions}
    #const opt_fill = 1.                   \textcolor{bcolor}{% Fill in missing negative valuations for primitive predicates}
    #const opt_symmetries = 1.             \textcolor{bcolor}{% Some (simple) symmetry breaking}

    \textcolor{bcolor}{% Input Instances defined by instance/1, and graphs by tlabel/3 and node/2}
    \textcolor{bcolor}{% O2D features defined by feature/1, f_arity/2, f_static/2, fval/3, and fval/4}
    nullary(F)   :- feature(F), f_arity(F,1), 1 \{ fval(I,(F,null_arg),0..1) \}.                \textcolor{bcolor}{% Explicit zero_arity predicates}
    nullary(F)   :- feature(F), f_arity(F,1), 1 \{ fval(I,(F,null_arg),S,0..1) : node(I,S) \}.  \textcolor{bcolor}{% Explicit zero_arity predicates}
    p_arity(F,N) :- feature(F), f_arity(F,N), not nullary(F).                                 \textcolor{bcolor}{% Explicit zero_arity predicates}
    p_arity(F,0) :- nullary(F).                                                               \textcolor{bcolor}{% Explicit zero_arity predicates}
    :- p_arity(F,N), nullary(F), N > 0.                                                       \textcolor{bcolor}{% Explicit zero_arity predicates}

    \textcolor{bcolor}{% Relevant nodes (all relevant by default; overriden by incremental solver)}
    #defined filename/1.
    #defined partial/2.
    relevant(I,S) :- node(I,S), not partial(I,File) : filename(File).

    \textcolor{bcolor}{% Actions and objects (objects come from denotation of concept Top)}
    action(A) :- tlabel(I,(S,T),A), relevant(I,S).
    \{ a_arity(A,0..max_action_arity) \} = 1 :- action(A).
    object(I,O) :- fval(I,(top,(O,)),1).

    \textcolor{bcolor}{% Choose predicates from high-level language}
    \{ pred(F) : feature(F) \} num_predicates.

    \textcolor{bcolor}{% Tuples of variables/constants for lifted effects and preconditions}
    #defined constant/1.
    argtuple(null_arg,0). \textcolor{bcolor}{% Explicit zero arity predicates}
    argtuple((C1,),1)     :- constant(C1).
    argtuple((V1,),1)     :- V1 = 1..max_action_arity.
    argtuple((C1,C2),2)   :- constant(C1), constant(C2).
    argtuple((C1,V2),2)   :- constant(C1), V2 = 1..max_action_arity.
    argtuple((V1,C2),2)   :- V1 = 1..max_action_arity, constant(C2).
    argtuple((V1,V2),2)   :- V1 = 1..max_action_arity, V2 = 1..max_action_arity.

    \textcolor{bcolor}{% Tuples of objects that ground the action schemas and atoms}
    objtuple(I,  null_arg,0)      :- instance(I). \textcolor{bcolor}{% Explicit zero arity predicates}
    objtuple(I,     (O1,),1)      :- object(I,O1), not constant(O1).
    objtuple(I,   (O1,O2),2)      :- object(I,O1), object(I,O2), not constant(O1), not constant(O2), O1 $\neq$ O2.
    objtuple(I,   (O1,O1),2)      :- object(I,O1), not constant(O1),                                 opt_equal_objects = 1.

    \textcolor{bcolor}{% Tuples of objects/constants that appear as arguments to atoms}
    const_or_obj(I,O)             :- object(I,O).
    const_or_obj(I,O)             :- instance(I), constant(O).
    constobjtuple(I,  null_arg,0) :- instance(I). \textcolor{bcolor}{% Explicit zero arity predicates}
    constobjtuple(I,     (O1,),1) :- const_or_obj(I,O1).
    constobjtuple(I,   (O1,O2),2) :- const_or_obj(I,O1), const_or_obj(I,O2), O1 $\neq$ O2.
    constobjtuple(I,   (O1,O1),2) :- const_or_obj(I,O1),                     opt_equal_objects = 1.

    \textcolor{bcolor}{% Assumption: predicates have arity < 3}
    :- p_arity(F,N), N > 2.

    \textcolor{bcolor}{% Assert missing values for atoms (if some atom is not true, it is false)}
    fval(I,(F,OO),0)   :- feature(F),     f_static(I,F), p_arity(F,N), constobjtuple(I,OO,N),            not fval(I,(F,OO),1),   opt_fill = 1.
    fval(I,(F,OO),S,0) :- feature(F), not f_static(I,F), p_arity(F,N), constobjtuple(I,OO,N), node(I,S), not fval(I,(F,OO),S,1), opt_fill = 1.

    \textcolor{bcolor}{% Make sure we have full valuation of atoms}
    :-     f_static(I,F), p_arity(F,N), constobjtuple(I,OO,N),            \{ fval(I,(F,OO),0..1)   \} $\neq$ 1, opt_fill = 1.
    :- not f_static(I,F), p_arity(F,N), constobjtuple(I,OO,N), node(I,S), \{ fval(I,(F,OO),S,0..1) \} $\neq$ 1, opt_fill = 1.

    \textcolor{bcolor}{% Mapping of lifted arguments for atoms into grounded arguments. Lifted atom is pair (P,T) where P is}
    \textcolor{bcolor}{% predicate and T is tuple of variables and constants used to construct argument OO of grounded atom (P,OO).}

    \textcolor{bcolor}{% for nullary actions}
    map(I,(0,0,0,0),null_arg,null_arg,0)      :- instance(I).
    map(I,(0,0,0,0),(C,),(C,),1)              :- constobjtuple(I,(C,),1), constant(C).
    map(I,(0,0,0,0),(C1,C2),(C1,C2),2)        :- constobjtuple(I,(C1,C2),2), constant(C1), constant(C2).

    \textcolor{bcolor}{% for unary actions}
    map(I,(O1,0,0,0),null_arg,null_arg,0)     :- objtuple(I,(O1,),1).
    map(I,(O1,0,0,0),(C,),(C,),1)             :- objtuple(I,(O1,),1), constobjtuple(I,(C,),1), constant(C).
    map(I,(O1,0,0,0),(1,),(O1,),1)            :- objtuple(I,(O1,),1).
    map(I,(O1,0,0,0),(C1,C2),(C1,C2),2)       :- objtuple(I,(O1,),1), constobjtuple(I,(C1,C2),2), constant(C1), constant(C2).
    map(I,(O1,0,0,0),(C,1),(C,O1),2)          :- objtuple(I,(O1,),1), constobjtuple(I,(C,O1),2), constant(C).
    map(I,(O1,0,0,0),(1,C),(O1,C),2)          :- objtuple(I,(O1,),1), constobjtuple(I,(O1,C),2), constant(C).

    \textcolor{bcolor}{% for actions of arity >= 2}
    map(I,(O1,O2,0,0),null_arg,null_arg,0)    :- objtuple(I,(O1,O2),2).
    map(I,(O1,O2,0,0),(C,),(C,),1)            :- objtuple(I,(O1,O2),2), constobjtuple(I,(C,),1), constant(C).
    map(I,(O1,O2,0,0),(1,),(O1,),1)           :- objtuple(I,(O1,O2),2).
    map(I,(O1,O2,0,0),(2,),(O2,),1)           :- objtuple(I,(O1,O2),2).
    map(I,(O1,O2,0,0),(C1,C2),(C1,C2),2)      :- objtuple(I,(O1,O2),2), constobjtuple(I,(C1,C2),2), constant(C1), constant(C2).
    map(I,(O1,O2,0,0),(C,1),(C,O1),2)         :- objtuple(I,(O1,O2),2), constobjtuple(I,(C,O1),2), constant(C).
    map(I,(O1,O2,0,0),(C,2),(C,O2),2)         :- objtuple(I,(O1,O2),2), constobjtuple(I,(C,O2),2), constant(C).
    map(I,(O1,O2,0,0),(1,C),(O1,C),2)         :- objtuple(I,(O1,O2),2), constobjtuple(I,(O1,C),2), constant(C).
    map(I,(O1,O2,0,0),(2,C),(O2,C),2)         :- objtuple(I,(O1,O2),2), constobjtuple(I,(O2,C),2), constant(C).
    map(I,(O1,O2,0,0),(1,1),(O1,O1),2)        :- objtuple(I,(O1,O2),2), constobjtuple(I,(O1,O1),2).
    map(I,(O1,O2,0,0),(1,2),(O1,O2),2)        :- objtuple(I,(O1,O2),2), constobjtuple(I,(O1,O2),2).
  \end{Verbatim}
  \vskip -1eM
  \caption{Listing of the ASP code (page 1/4)}
  \label{fig:program:1}
\end{figure*}

\begin{figure*}[t]
  \begin{Verbatim}[gobble=4,frame=single,numbers=left,codes={\catcode`$=3},fontsize=\relsize{-2},firstnumber=97]
    map(I,(O1,O2,0,0),(2,1),(O2,O1),2)        :- objtuple(I,(O1,O2),2), constobjtuple(I,(O2,O1),2).
    map(I,(O1,O2,0,0),(2,2),(O2,O2),2)        :- objtuple(I,(O1,O2),2), constobjtuple(I,(O2,O2),2).
    map(I,(O1,0,O3,0),(3,),(O3,),1)           :- objtuple(I,(O1,O3),2).
    map(I,(O1,0,O3,0),(C,3),(C,O3),2)         :- objtuple(I,(O1,O3),2), constobjtuple(I,(C,O3),2), constant(C).
    map(I,(O1,0,O3,0),(3,C),(O3,C),2)         :- objtuple(I,(O1,O3),2), constobjtuple(I,(O3,C),2), constant(C).
    map(I,(O1,0,O3,0),(1,3),(O1,O3),2)        :- objtuple(I,(O1,O3),2), constobjtuple(I,(O1,O3),2).
    map(I,(O1,0,O3,0),(3,1),(O3,O1),2)        :- objtuple(I,(O1,O3),2), constobjtuple(I,(O3,O1),2).
    map(I,(O1,0,O3,0),(3,3),(O3,O3),2)        :- objtuple(I,(O1,O3),2), constobjtuple(I,(O3,O3),2).
    map(I,(O1,0,0,O4),(4,),(O4,),1)           :- objtuple(I,(O1,O4),2).
    map(I,(O1,0,0,O4),(C,4),(C,O4),2)         :- objtuple(I,(O1,O4),2), constobjtuple(I,(C,O4),2), constant(C).
    map(I,(O1,0,0,O4),(4,C),(O4,C),2)         :- objtuple(I,(O1,O4),2), constobjtuple(I,(O4,C),2), constant(C).
    map(I,(O1,0,0,O4),(1,4),(O1,O4),2)        :- objtuple(I,(O1,O4),2), constobjtuple(I,(O1,O4),2).
    map(I,(O1,0,0,O4),(4,1),(O4,O1),2)        :- objtuple(I,(O1,O4),2), constobjtuple(I,(O4,O1),2).
    map(I,(O1,0,0,O4),(4,4),(O4,O4),2)        :- objtuple(I,(O1,O4),2), constobjtuple(I,(O4,O4),2).
    map(I,(0,O2,O3,0),(2,3),(O2,O3),2)        :- objtuple(I,(O2,O3),2), constobjtuple(I,(O2,O3),2).
    map(I,(0,O2,O3,0),(3,2),(O3,O2),2)        :- objtuple(I,(O2,O3),2), constobjtuple(I,(O3,O2),2).
    map(I,(0,O2,0,O4),(2,4),(O2,O4),2)        :- objtuple(I,(O2,O4),2), constobjtuple(I,(O2,O4),2).
    map(I,(0,O2,0,O4),(4,2),(O4,O2),2)        :- objtuple(I,(O2,O4),2), constobjtuple(I,(O4,O2),2).
    map(I,(0,0,O3,O4),(3,4),(O3,O4),2)        :- objtuple(I,(O3,O4),2), constobjtuple(I,(O3,O4),2).
    map(I,(0,0,O3,O4),(4,3),(O4,O3),2)        :- objtuple(I,(O3,O4),2), constobjtuple(I,(O4,O3),2).

    map(I,(0,O2,0,0),(2,),(O2,),1)            :- objtuple(I,(O2,),1).
    map(I,(0,0,O3,0),(3,),(O3,),1)            :- objtuple(I,(O3,),1).
    map(I,(0,0,0,O4),(4,),(O4,),1)            :- objtuple(I,(O4,),1).


    \textcolor{bcolor}{% Define variables used by actions, vars in tuples, and good arg tuples for actions}
    a_var(A,V)            :- a_arity(A,N), V = 1..N.
    t_var((V,),V)         :- argtuple((V,),1),    not constant(V), V $\neq$ null.
    t_var((V1,V2),V1)     :- argtuple((V1,V2),2), not constant(V1).
    t_var((V1,V2),V2)     :- argtuple((V1,V2),2), not constant(V2).
    goodtuple(A,null_arg) :- action(A).
    goodtuple(A,T)        :- action(A), argtuple(T,N), a_var(A,V) : t_var(T,V).

    \textcolor{bcolor}{% Choice of lifted preconditions: prec(A,M,V) means atom M is prec of action A for V = 0 or 1}
    \{ prec(A,(P,T),1) \}      :- action(A), pred(P), p_arity(P,N), argtuple(T,N), goodtuple(A,T),
                                opt_allow_negative_precs = 0.

    \{ prec(A,(P,T),0..1) \} 1 :- action(A), pred(P), p_arity(P,N), argtuple(T,N), goodtuple(A,T),
                                opt_allow_negative_precs = 1.


    \textcolor{bcolor}{% Choice of lifted effects: eff(A,M,V) means atom M is effect of action A for V = 0 or 1}
    p_static(P) :- pred(P), f_static(I,P) : instance(I).
    \{ eff(A,(P,T),0..1) \} 1 :- action(A), pred(P), not p_static(P), p_arity(P,N), argtuple(T,N), goodtuple(A,T).

    \textcolor{bcolor}{% E1. Avoid noop actions and rule out contradictory effects}
    :- action(A), \{ eff(A,(P,T),0..1) : pred(P), p_arity(P,N), argtuple(T,N) \} = 0.
    :- eff(A,M,0), eff(A,M,1).

    \textcolor{bcolor}{% Factored applicability relations (this implementation for action arities up to 4)}

    \textcolor{bcolor}{% Static predicates}
    fappl(I,A,null_arg,null_arg)   :- instance(I), action(A), a_arity(A,0),
                                      fval(I,(P,OO),V)   : prec(A,(P,T),V), map(I,(0,0,0,0),T,OO,K),       f_static(I,P).
    fappl(I,A,(1,),(O1,))          :- instance(I), action(A), a_arity(A,1), objtuple(I,(O1,),1),
                                      fval(I,(P,OO),V)   : prec(A,(P,T),V), map(I,(O1,0,0,0),T,OO,K),      f_static(I,P).
    fappl(I,A,(1,2),(O1,O2))       :- instance(I), action(A), a_arity(A,N), N >= 2, objtuple(I,(O1,O2),2),
                                      fval(I,(P,OO),V)   : prec(A,(P,T),V), map(I,(O1,O2,0,0),T,OO,K),     f_static(I,P).
    fappl(I,A,(1,3),(O1,O3))       :- instance(I), action(A), a_arity(A,N), N >= 3, objtuple(I,(O1,O3),2),
                                      fval(I,(P,OO),V)   : prec(A,(P,T),V), map(I,(O1,0,O3,0),T,OO,K),     f_static(I,P).
    fappl(I,A,(2,3),(O2,O3))       :- instance(I), action(A), a_arity(A,N), N >= 3, objtuple(I,(O2,O3),2),
                                      fval(I,(P,OO),V)   : prec(A,(P,T),V), map(I,(0,O2,O3,0),T,OO,K),     f_static(I,P).
    fappl(I,A,(1,4),(O1,O4))       :- instance(I), action(A), a_arity(A,N), N >= 4, objtuple(I,(O1,O4),2),
                                      fval(I,(P,OO),V)   : prec(A,(P,T),V), map(I,(O1,0,0,O4),T,OO,K),     f_static(I,P).
    fappl(I,A,(2,4),(O2,O4))       :- instance(I), action(A), a_arity(A,N), N >= 4, objtuple(I,(O2,O4),2),
                                      fval(I,(P,OO),V)   : prec(A,(P,T),V), map(I,(0,O2,0,O4),T,OO,K),     f_static(I,P).
    fappl(I,A,(3,4),(O3,O4))       :- instance(I), action(A), a_arity(A,N), N >= 4, objtuple(I,(O3,O4),2),
                                      fval(I,(P,OO),V)   : prec(A,(P,T),V), map(I,(0,0,O3,O4),T,OO,K),     f_static(I,P).

    \textcolor{bcolor}{% Dynamic predicates (value depends on states)}
    fappl(I,A,null_arg,null_arg,S) :- instance(I), action(A), a_arity(A,0), relevant(I,S),
                                      fappl(I,A,null_arg,null_arg),
                                      fval(I,(P,OO),S,V) : prec(A,(P,T),V), map(I,(0,0,0,0),T,OO,K),   not f_static(I,P).
    fappl(I,A,(1,),(O1,),S)        :- instance(I), action(A), a_arity(A,1), objtuple(I,(O1,),1), relevant(I,S),
                                      fappl(I,A,(1,),(O1,)),
                                      fval(I,(P,OO),S,V) : prec(A,(P,T),V), map(I,(O1,0,0,0),T,OO,K),  not f_static(I,P).
    fappl(I,A,(1,2),(O1,O2),S)     :- instance(I), action(A), a_arity(A,N), N >= 2, objtuple(I,(O1,O2),2), relevant(I,S),
                                      fappl(I,A,(1,2),(O1,O2)),
                                      fval(I,(P,OO),S,V) : prec(A,(P,T),V), map(I,(O1,O2,0,0),T,OO,K), not f_static(I,P).
    fappl(I,A,(1,3),(O1,O3),S)     :- instance(I), action(A), a_arity(A,N), N >= 3, objtuple(I,(O1,O3),2), relevant(I,S),
                                      fappl(I,A,(1,3),(O1,O3)),
                                      fval(I,(P,OO),S,V) : prec(A,(P,T),V), map(I,(O1,0,O3,0),T,OO,K), not f_static(I,P).
    fappl(I,A,(2,3),(O2,O3),S)     :- instance(I), action(A), a_arity(A,N), N >= 3, objtuple(I,(O2,O3),2), relevant(I,S),
                                      fappl(I,A,(2,3),(O2,O3)),
                                      fval(I,(P,OO),S,V) : prec(A,(P,T),V), map(I,(0,O2,O3,0),T,OO,K), not f_static(I,P).
    fappl(I,A,(1,4),(O1,O4),S)     :- instance(I), action(A), a_arity(A,N), N >= 4, objtuple(I,(O1,O4),2), relevant(I,S),
                                      fappl(I,A,(1,4),(O1,O4)),
                                      fval(I,(P,OO),S,V) : prec(A,(P,T),V), map(I,(O1,0,0,O4),T,OO,K), not f_static(I,P).
    fappl(I,A,(2,4),(O2,O4),S)     :- instance(I), action(A), a_arity(A,N), N >= 4, objtuple(I,(O2,O4),2), relevant(I,S),
                                      fappl(I,A,(2,4),(O2,O4)),
                                      fval(I,(P,OO),S,V) : prec(A,(P,T),V), map(I,(0,O2,0,O4),T,OO,K), not f_static(I,P).
    fappl(I,A,(3,4),(O3,O4),S)     :- instance(I), action(A), a_arity(A,N), N >= 4, objtuple(I,(O3,O4),2), relevant(I,S),
                                      fappl(I,A,(3,4),(O3,O4)),
                                      fval(I,(P,OO),S,V) : prec(A,(P,T),V), map(I,(0,0,O3,O4),T,OO,K), not f_static(I,P).

  \end{Verbatim}
  \vskip -1eM
  \caption{Listing of the ASP code (page 2/4)}
  \label{fig:program:2}
\end{figure*}

\begin{figure*}[t]
  \begin{Verbatim}[gobble=4,frame=single,numbers=left,codes={\catcode`$=3},fontsize=\relsize{-2},firstnumber=193]
    \textcolor{bcolor}{% Factored next relation in the induced transition system}

    \textcolor{bcolor}{% Assumption in this implementation: each edge (S1,S2) labeled with unique action, and maps to unique grounded action A(OO)}
    :- tlabel(I,(S1,S2),A), tlabel(I,(S1,S2),B), A $\neq$ B.

    \textcolor{bcolor}{% fnext(I,K,O,S1,S2) : for edge tlabel(I,(S1,S2),A), K-th argument of A(OO) is object O}

    \textcolor{bcolor}{% 1. Account for proper def of next(I,A,OO,S1,S2):}
    \textcolor{bcolor}{%   1.a. Choose next S2 if ground action A(OO) is applicable in S1}
    \textcolor{bcolor}{%   1.b. If A/k1 -> O1 in (S1,S2), then A/k2 -> O2, for some O2, for each arg k1 of action A in (S1,S2)}
    \textcolor{bcolor}{%        (i.e., A(OO) must be fully defined for (S1,S2))}
    \textcolor{bcolor}{%   1.c. It cannot be fnext(I,K,O1,S1,S2) and fnext(I,K,O2,S1,S2) with O1 $\neq$ O2}
    \textcolor{bcolor}{%   1.d. A(OO) cannot be mapped to two edges (S1,S2) and (S1,S3) with S2 $\neq$ S3}
    \textcolor{bcolor}{%   1.e. If tlabel(I,(S1,S2),A), then fnext(I,0,0,S1,S2) or fnext(I,1,_,S1,S2)}

    \textcolor{bcolor}{% 1.a. Choose next S2 if ground action A(OO) is applicable in S1}
    \{ fnext(I,0,0,  S1,S2) : tlabel(I,(S1,S2),A) \} =1 :- action(A), a_arity(A,0), relevant(I,S1), fappl(I,A,null_arg,null_arg,S1).
    \{ fnext(I,1,O1, S1,S2) : tlabel(I,(S1,S2),A) \} =1 :- action(A), a_arity(A,1), relevant(I,S1), fappl(I,A,(1,),(O1,),S1).
    1 \{ fnext(I,1,O1, S1,S2) : tlabel(I,(S1,S2),A) \}  :- action(A), a_arity(A,2), relevant(I,S1), fappl(I,A,(1,2),(O1,O2),S1).
    1 \{ fnext(I,2,O2, S1,S2) : tlabel(I,(S1,S2),A) \}  :- action(A), a_arity(A,2), relevant(I,S1), fappl(I,A,(1,2),(O1,O2),S1).

    1 \{ fnext(I,1,O1, S1,S2) : tlabel(I,(S1,S2),A) \}  :- action(A), a_arity(A,3), relevant(I,S1),
                                                         fappl(I,A,(1,2),(O1,O2),S1), fappl(I,A,(1,3),(O1,O3),S1), fappl(I,A,(2,3),(O2,O3),S1).
    1 \{ fnext(I,2,O2, S1,S2) : tlabel(I,(S1,S2),A) \}  :- action(A), a_arity(A,3), relevant(I,S1),
                                                         fappl(I,A,(1,2),(O1,O2),S1), fappl(I,A,(1,3),(O1,O3),S1), fappl(I,A,(2,3),(O2,O3),S1).
    1 \{ fnext(I,3,O3, S1,S2) : tlabel(I,(S1,S2),A) \}  :- action(A), a_arity(A,3), relevant(I,S1),
                                                         fappl(I,A,(1,2),(O1,O2),S1), fappl(I,A,(1,3),(O1,O3),S1), fappl(I,A,(2,3),(O2,O3),S1).

    1 \{ fnext(I,1,O1, S1,S2) : tlabel(I,(S1,S2),A) \}  :- action(A), a_arity(A,4), relevant(I,S1),
                                                         fappl(I,A,(1,2),(O1,O2),S1), fappl(I,A,(1,3),(O1,O3),S1), fappl(I,A,(1,4),(O1,O4),S1),
                                                         fappl(I,A,(2,3),(O2,O3),S1), fappl(I,A,(2,4),(O2,O4),S1), fappl(I,A,(3,4),(O3,O4),S1).
    1 \{ fnext(I,2,O2, S1,S2) : tlabel(I,(S1,S2),A) \}  :- action(A), a_arity(A,4), relevant(I,S1),
                                                         fappl(I,A,(1,2),(O1,O2),S1), fappl(I,A,(1,3),(O1,O3),S1), fappl(I,A,(1,4),(O1,O4),S1),
                                                         fappl(I,A,(2,3),(O2,O3),S1), fappl(I,A,(2,4),(O2,O4),S1), fappl(I,A,(3,4),(O3,O4),S1).
    1 \{ fnext(I,3,O3, S1,S2) : tlabel(I,(S1,S2),A) \}  :- action(A), a_arity(A,4), relevant(I,S1),
                                                         fappl(I,A,(1,2),(O1,O2),S1), fappl(I,A,(1,3),(O1,O3),S1), fappl(I,A,(1,4),(O1,O4),S1),
                                                         fappl(I,A,(2,3),(O2,O3),S1), fappl(I,A,(2,4),(O2,O4),S1), fappl(I,A,(3,4),(O3,O4),S1).
    1 \{ fnext(I,4,O4, S1,S2) : tlabel(I,(S1,S2),A) \}  :- action(A), a_arity(A,4), relevant(I,S1),
                                                         fappl(I,A,(1,2),(O1,O2),S1), fappl(I,A,(1,3),(O1,O3),S1), fappl(I,A,(1,4),(O1,O4),S1),
                                                         fappl(I,A,(2,3),(O2,O3),S1), fappl(I,A,(2,4),(O2,O4),S1), fappl(I,A,(3,4),(O3,O4),S1).

    \textcolor{bcolor}{% 1.b. If k1 -> O1 in (S1,S2), then k2 -> O2 in (S1,S2), for some O2, for each arg k1 of action A}
    :- relevant(I,S1), tlabel(I,(S1,S2),A), a_arity(A,N), fnext(I,K1,O1,S1,S2), K1 > 0, K2 = 1..N, K2 $\neq$ K1,
       not fnext(I,K2,O2,S1,S2) : object(I,O2).

    \textcolor{bcolor}{% 1.c. It cannot be fnext(I,K,O1,S1,S2) and fnext(I,K,O2,S1,S2) with O1 < O2}
    :- relevant(I,S1), fnext(I,K,O1,S1,S2), fnext(I,K,O2,S1,S2), O1 < O2.

    \textcolor{bcolor}{% 1.d. A(OO) cannot be mapped to two edges (S1,S2) and (S1,S3) with S2 < S3}
    :- relevant(I,S1), tlabel(I,(S1,S2),A), tlabel(I,(S1,S3),A), S2 < S3, a_arity(A,0).
    :- relevant(I,S1), tlabel(I,(S1,S2),A), tlabel(I,(S1,S3),A), S2 < S3, a_arity(A,N), N >= 1,
       fnext(I,1,O,S1,S2), fnext(I,1,O,S1,S3), not diff_fnext(I,A,K2,S1,S2,S3) : K2 = 2..N.
    diff_fnext(I,A,K,S1,S2,S3) :- relevant(I,S1), tlabel(I,(S1,S2),A), tlabel(I,(S1,S3),A), S2 < S3,
                                  a_arity(A,N), N >= 1, K = 2..N, fnext(I,K,O1,S1,S2), fnext(I,K,O2,S1,S3), O1 $\neq$ O2.

    \textcolor{bcolor}{% 1.e. If tlabel(I,(S1,S2),A), then fnext(I,0,0,S1,S2) or fnext(I,1,_,S1,S2)}
    :- relevant(I,S1), tlabel(I,(S1,S2),A), not fnext(I,0,0,S1,S2), not fnext(I,1,O,S1,S2) : object(I,O).

    \textcolor{bcolor}{% 2. Check application of effects}
    \textcolor{bcolor}{%   2.a. If A(OO1) mapped to (S1,S2) and eff(A,(P,OO2),V), then fval(I,(P,OO2),S2,V)}
    \textcolor{bcolor}{%   2.b. If A(OO1) mapped to (S1,S2), fval(I,(P,OO2),S1,V) and fval(I,(P,OO2),S2,1-V), then eff(A,(P,OO2),1-V).}

    \textcolor{bcolor}{% 2.a. If A(OO1) mapped to (S1,S2) and eff(A,(P,OO2),V), then fval(I,(P,OO2),S2,V)}
    :- relevant(I,S1), tlabel(I,(S1,S2),A), a_arity(A,0),         fnext(I,0,0,S1,S2),                       eff(A,(P,T),V),
       map(I,(0,0,0,0),T,OO,K),   not f_static(I,P), fval(I,(P,OO),S2,1-V).
    :- relevant(I,S1), tlabel(I,(S1,S2),A), a_arity(A,1),         fnext(I,1,O1,S1,S2),                      eff(A,(P,T),V),
       map(I,(O1,0,0,0),T,OO,K),  not f_static(I,P), fval(I,(P,OO),S2,1-V).
    :- relevant(I,S1), tlabel(I,(S1,S2),A), a_arity(A,N), N >= 2, fnext(I,1,O1,S1,S2), fnext(I,2,O2,S1,S2), eff(A,(P,T),V),
       map(I,(O1,O2,0,0),T,OO,K), not f_static(I,P), fval(I,(P,OO),S2,1-V).
    :- relevant(I,S1), tlabel(I,(S1,S2),A), a_arity(A,N), N >= 3, fnext(I,1,O1,S1,S2), fnext(I,3,O3,S1,S2), eff(A,(P,T),V),
       map(I,(O1,0,O3,0),T,OO,K), not f_static(I,P), fval(I,(P,OO),S2,1-V).
    :- relevant(I,S1), tlabel(I,(S1,S2),A), a_arity(A,N), N >= 3, fnext(I,2,O2,S1,S2), fnext(I,3,O3,S1,S2), eff(A,(P,T),V),
       map(I,(0,O2,O3,0),T,OO,K), not f_static(I,P), fval(I,(P,OO),S2,1-V).
    :- relevant(I,S1), tlabel(I,(S1,S2),A), a_arity(A,N), N >= 4, fnext(I,1,O1,S1,S2), fnext(I,4,O4,S1,S2), eff(A,(P,T),V),
       map(I,(O1,0,0,O4),T,OO,K), not f_static(I,P), fval(I,(P,OO),S2,1-V).
    :- relevant(I,S1), tlabel(I,(S1,S2),A), a_arity(A,N), N >= 4, fnext(I,2,O2,S1,S2), fnext(I,4,O4,S1,S2), eff(A,(P,T),V),
       map(I,(0,O2,0,O4),T,OO,K), not f_static(I,P), fval(I,(P,OO),S2,1-V).
    :- relevant(I,S1), tlabel(I,(S1,S2),A), a_arity(A,N), N >= 4, fnext(I,3,O3,S1,S2), fnext(I,4,O4,S1,S2), eff(A,(P,T),V),
       map(I,(0,0,O3,O4),T,OO,K), not f_static(I,P), fval(I,(P,OO),S2,1-V).

    :- relevant(I,S1), tlabel(I,(S1,S2),A), a_arity(A,0),         fnext(I,0,0,S1,S2),                       eff(A,(P,T),V),
       map(I,(0,0,0,0),T,OO,K),       f_static(I,P), fval(I,(P,OO),1-V).
    :- relevant(I,S1), tlabel(I,(S1,S2),A), a_arity(A,1),         fnext(I,1,O1,S1,S2),                      eff(A,(P,T),V),
       map(I,(O1,0,0,0),T,OO,K),      f_static(I,P), fval(I,(P,OO),1-V).
    :- relevant(I,S1), tlabel(I,(S1,S2),A), a_arity(A,N), N >= 2, fnext(I,1,O1,S1,S2), fnext(I,2,O2,S1,S2), eff(A,(P,T),V),
       map(I,(O1,O2,0,0),T,OO,K),     f_static(I,P), fval(I,(P,OO),1-V).
    :- relevant(I,S1), tlabel(I,(S1,S2),A), a_arity(A,N), N >= 3, fnext(I,1,O1,S1,S2), fnext(I,3,O3,S1,S2), eff(A,(P,T),V),
       map(I,(O1,0,O3,0),T,OO,K),     f_static(I,P), fval(I,(P,OO),1-V).
    :- relevant(I,S1), tlabel(I,(S1,S2),A), a_arity(A,N), N >= 3, fnext(I,2,O2,S1,S2), fnext(I,3,O3,S1,S2), eff(A,(P,T),V),
       map(I,(0,O2,O3,0),T,OO,K),     f_static(I,P), fval(I,(P,OO),1-V).
    :- relevant(I,S1), tlabel(I,(S1,S2),A), a_arity(A,N), N >= 4, fnext(I,1,O1,S1,S2), fnext(I,4,O4,S1,S2), eff(A,(P,T),V),
       map(I,(O1,0,0,O4),T,OO,K),     f_static(I,P), fval(I,(P,OO),1-V).
    :- relevant(I,S1), tlabel(I,(S1,S2),A), a_arity(A,N), N >= 4, fnext(I,2,O2,S1,S2), fnext(I,4,O4,S1,S2), eff(A,(P,T),V),
       map(I,(0,O2,0,O4),T,OO,K),     f_static(I,P), fval(I,(P,OO),1-V).
    :- relevant(I,S1), tlabel(I,(S1,S2),A), a_arity(A,N), N >= 4, fnext(I,3,O3,S1,S2), fnext(I,4,O4,S1,S2), eff(A,(P,T),V),
       map(I,(0,0,O3,O4),T,OO,K),     f_static(I,P), fval(I,(P,OO),1-V).
  \end{Verbatim}
  \vskip -1eM
  \caption{Listing of the ASP code (page 3/4)}
  \label{fig:program:3}
\end{figure*}

\begin{figure*}[t]
  \begin{Verbatim}[gobble=4,frame=single,numbers=left,codes={\catcode`$=3},fontsize=\relsize{-2},firstnumber=289]
    \textcolor{bcolor}{% 2.b. If A(OO1) mapped to (S1,S2), fval(I,(P,OO2),S1,V) and fval(I,(P,OO2),S2,1-V), then eff(A,(P,OO2),1-V).}
    :- relevant(I,S1), tlabel(I,(S1,S2),A),
       pred(P), fval(I,(P,null_arg),S1,V), fval(I,(P,null_arg),S2,1-V),
       not eff(A,(P,null_arg),1-V).

    :- relevant(I,S1), tlabel(I,(S1,S2),A), a_arity(A,N),
       pred(P), fval(I,(P,OO),S1,V), fval(I,(P,OO),S2,1-V), constobjtuple(I,OO,1),
       not eff(A,(P,T),1-V) : map(I,(0,0,0,0),T,OO,1),                        N >= 0;
       not eff(A,(P,T),1-V) : map(I,(O1,0,0,0),T,OO,1),  fnext(I,1,O1,S1,S2), N >= 1;
       not eff(A,(P,T),1-V) : map(I,(0,O2,0,0),T,OO,1),  fnext(I,2,O2,S1,S2), N >= 2;
       not eff(A,(P,T),1-V) : map(I,(0,0,O3,0),T,OO,1),  fnext(I,3,O3,S1,S2), N >= 3;
       not eff(A,(P,T),1-V) : map(I,(0,0,0,O4),T,OO,1),  fnext(I,4,O4,S1,S2), N >= 4.

    :- relevant(I,S1), tlabel(I,(S1,S2),A), a_arity(A,N),
       pred(P), fval(I,(P,OO),S1,V), fval(I,(P,OO),S2,1-V), constobjtuple(I,OO,2),
       not eff(A,(P,T),1-V) : map(I,(0,0,0,0),T,OO,2),                                             N >= 0;
       not eff(A,(P,T),1-V) : map(I,(O1,0,0,0),T,OO,2),  fnext(I,1,O1,S1,S2),                      N >= 1;
       not eff(A,(P,T),1-V) : map(I,(O1,O2,0,0),T,OO,2), fnext(I,1,O1,S1,S2), fnext(I,2,O2,S1,S2), N >= 2;
       not eff(A,(P,T),1-V) : map(I,(O1,0,O3,0),T,OO,2), fnext(I,1,O1,S1,S2), fnext(I,3,O3,S1,S2), N >= 3;
       not eff(A,(P,T),1-V) : map(I,(0,O2,O3,0),T,OO,2), fnext(I,2,O2,S1,S2), fnext(I,3,O3,S1,S2), N >= 3;
       not eff(A,(P,T),1-V) : map(I,(O1,0,0,O4),T,OO,2), fnext(I,1,O1,S1,S2), fnext(I,4,O4,S1,S2), N >= 4;
       not eff(A,(P,T),1-V) : map(I,(0,O2,0,O4),T,OO,2), fnext(I,2,O2,S1,S2), fnext(I,4,O4,S1,S2), N >= 4;
       not eff(A,(P,T),1-V) : map(I,(0,0,O3,O4),T,OO,2), fnext(I,3,O3,S1,S2), fnext(I,4,O4,S1,S2), N >= 4.

    \textcolor{bcolor}{% 3. Check application of applicable ground actions}
    :- a_arity(A,2), fappl(I,A,(1,2),(O1,O2),S1), not fnext(I,2,O2,S1,S2) : fnext(I,1,O1,S1,S2).

    :- a_arity(A,3), fappl(I,A,(1,2),(O1,O2),S1), fappl(I,A,(1,3),(O1,O3),S1), fappl(I,A,(2,3),(O2,O3),S1),
       not fnext(I,2,O2,S1,S2) : fnext(I,1,O1,S1,S2).
    :- a_arity(A,3), fappl(I,A,(1,2),(O1,O2),S1), fappl(I,A,(1,3),(O1,O3),S1), fappl(I,A,(2,3),(O2,O3),S1),
       not fnext(I,3,O3,S1,S2) : fnext(I,1,O1,S1,S2), fnext(I,2,O2,S1,S2).

    :- a_arity(A,4), fappl(I,A,(1,2),(O1,O2),S1), fappl(I,A,(1,3),(O1,O3),S1), fappl(I,A,(1,4),(O1,O4),S1),
       fappl(I,A,(2,3),(O2,O3),S1), fappl(I,A,(2,4),(O2,O4),S1), fappl(I,A,(3,4),(O3,O4),S1),
       not fnext(I,2,O2,S1,S2) : fnext(I,1,O1,S1,S2).
    :- a_arity(A,4), fappl(I,A,(1,2),(O1,O2),S1), fappl(I,A,(1,3),(O1,O3),S1), fappl(I,A,(1,4),(O1,O4),S1),
       fappl(I,A,(2,3),(O2,O3),S1), fappl(I,A,(2,4),(O2,O4),S1), fappl(I,A,(3,4),(O3,O4),S1),
       not fnext(I,3,O3,S1,S2) : fnext(I,1,O1,S1,S2), fnext(I,2,O2,S1,S2).
    :- a_arity(A,4), fappl(I,A,(1,2),(O1,O2),S1), fappl(I,A,(1,3),(O1,O3),S1), fappl(I,A,(1,4),(O1,O4),S1),
       fappl(I,A,(2,3),(O2,O3),S1), fappl(I,A,(2,4),(O2,O4),S1), fappl(I,A,(3,4),(O3,O4),S1),
       not fnext(I,4,O4,S1,S2) : fnext(I,1,O1,S1,S2), fnext(I,2,O2,S1,S2), fnext(I,3,O3,S1,S2).

    \textcolor{bcolor}{% Fundamental constraints}

    \textcolor{bcolor}{% C1. Different nodes are different states with respect to chosen predicates}
    :- relevant(I,S1), relevant(I,S2), S1 < S2, fval(I,(P,OO),S2,V) : fval(I,(P,OO),S1,V), pred(P), not f_static(I,P).

    \textcolor{bcolor}{% C2. If (S1,S2) has label A, then A(OO) maps S1 to S2 for some object tuple OO}
    :- relevant(I,S1), a_arity(A,0), tlabel(I,(S1,S2),A), not fnext(I,0,0,S1,S2).
    :- relevant(I,S1), a_arity(A,1), tlabel(I,(S1,S2),A), #count \{    O1 : fnext(I,1,O1,S1,S2) \} $\neq$ 1.
    :- relevant(I,S1), a_arity(A,2), tlabel(I,(S1,S2),A), #count \{ O1,O2 : fnext(I,1,O1,S1,S2), fnext(I,2,O2,S1,S2) \} $\neq$ 1.
    :- relevant(I,S1), a_arity(A,3), tlabel(I,(S1,S2),A),
       #count \{    O1,O2,O3 : fnext(I,1,O1,S1,S2), fnext(I,2,O2,S1,S2), fnext(I,3,O3,S1,S2) \} $\neq$ 1.
    :- relevant(I,S1), a_arity(A,4), tlabel(I,(S1,S2),A),
       #count \{ O1,O2,O3,O4 : fnext(I,1,O1,S1,S2), fnext(I,2,O2,S1,S2), fnext(I,3,O3,S1,S2), fnext(I,4,O4,S1,S2) \} $\neq$ 1.

    \textcolor{bcolor}{% Break symmetries (taken from STRIPS learner's ASP code)}
    a_atom(1,A,M) :- prec(A,M,V).
    a_atom(2,A,M) :-  eff(A,M,V).
    a_atom(A,M)   :- a_atom(I,A,M), I = 1..2.

    :- V = 1..max_action_arity-1, action(A),
       1 #sum\{ -1,I,P,VV,1 : a_atom(I,A,(P,(  V, VV)));
               -1,I,P,VV,2 : a_atom(I,A,(P,( VV,  V)));
                1,I,P,VV,1 : a_atom(I,A,(P,(V+1, VV)));
                1,I,P,VV,2 : a_atom(I,A,(P,( VV,V+1))) \},
       opt_symmetries = 1.

    \textcolor{bcolor}{% Optimization (lexicographic ordering)}
    \textcolor{bcolor}{% - Prefer models with minimum sum of actions' cardinalities}
    \textcolor{bcolor}{% - Prefer models with minimum sum of (non-static) predicates' cardinalities}
    \textcolor{bcolor}{% - Prefer models with minimum sum of (static) predicates' cardinalities}
    \textcolor{bcolor}{% - Prefer models with minimum number of effects}
    \textcolor{bcolor}{% - Prefer models with minimum number of preconditions}
    #minimize \{ 1+N@10, A : a_arity(A,N) \}.
    #minimize \{ 1+N@8, P : pred(P), p_arity(P,N), not p_static(P) \}.
    #minimize \{ 1+N@6, P : pred(P), p_arity(P,N), p_static(P) \}.
    #minimize \{ 1@4, A, P, T, V : eff(A,(P,T),V) \}.
    #minimize \{ 1@2, A, P, T, V : prec(A,(P,T),V) \}.

    \textcolor{bcolor}{% Default is to display nothing}
    #show.

    \textcolor{bcolor}{% Display objects and constants}
    #show object/2.
    #show constant/1.

    \textcolor{bcolor}{% Display selected predicates}
    #show pred/1.
    #show p_static(P) : p_static(P), pred(P).

    \textcolor{bcolor}{% Display action schemas}
    #show action/1.
    #show a_arity/2.
    #show prec/3.
    #show eff/3.
  \end{Verbatim}
  \vskip -1eM
  \caption{Listing of the ASP code (page 4/4)}
  \label{fig:program:4}
\end{figure*}

\Omit{



#const num_predicates = 12.
#const max_action_arity = 3.
#const null_arg = (null,).

#const opt_equal_objects = 0.

#const opt_allow_negative_precs = 1.

#const opt_fill_incomplete_valuations = 1.

#const opt_symmetries = 1.

#const opt_verbose = 0.












nullary(F)   :- feature(F), f_arity(F,1), 1 { fval(I,(F,null_arg),0..1) }.                
nullary(F)   :- feature(F), f_arity(F,1), 1 { fval(I,(F,null_arg),S,0..1) : node(I,S) }.  
p_arity(F,N) :- feature(F), f_arity(F,N), not nullary(F).                                 
p_arity(F,0) :- nullary(F).                                                               
:- p_arity(F,N), nullary(F), N > 0.                                                       


#defined filename/1.
#defined partial/2.

relevant(I,S) :- node(I,S), not partial(I,File) : filename(File).



action(A) :- tlabel(I,(S,T),A), relevant(I,S).
{ a_arity(A,0..max_action_arity) } = 1 :- action(A).

object(I,O) :- fval(I,(verum,(O,)),1).


{ pred(F) : feature(F) } num_predicates.


#defined constant/1.

argtuple(null_arg,0). 
argtuple((C1,),1)     :- constant(C1).
argtuple((V1,),1)     :- V1 = 1..max_action_arity.
argtuple((C1,C2),2)   :- constant(C1), constant(C2).
argtuple((C1,V2),2)   :- constant(C1), V2 = 1..max_action_arity.
argtuple((V1,C2),2)   :- V1 = 1..max_action_arity, constant(C2).
argtuple((V1,V2),2)   :- V1 = 1..max_action_arity, V2 = 1..max_action_arity.


const_or_obj(I,O)             :- object(I,O).
const_or_obj(I,O)             :- instance(I), constant(O).

objtuple(I,  null_arg,0)      :- instance(I). 
objtuple(I,     (O1,),1)      :- object(I,O1), not constant(O1).
objtuple(I,   (O1,O2),2)      :- object(I,O1), object(I,O2), not constant(O1), not constant(O2), O1 != O2.
objtuple(I,   (O1,O1),2)      :- object(I,O1), not constant(O1),                                 opt_equal_objects = 1.

constobjtuple(I,  null_arg,0) :- instance(I). 
constobjtuple(I,     (O1,),1) :- const_or_obj(I,O1).
constobjtuple(I,   (O1,O2),2) :- const_or_obj(I,O1), const_or_obj(I,O2), O1 != O2.
constobjtuple(I,   (O1,O1),2) :- const_or_obj(I,O1),                     opt_equal_objects = 1.

:- p_arity(F,N), N > 2.

fval(I,(F,OO),0)   :- feature(F),     f_static(I,F), p_arity(F,N), constobjtuple(I,OO,N),                not fval(I,(F,OO),1),   opt_fill_incomplete_valuations = 1.
fval(I,(F,OO),S,0) :- feature(F), not f_static(I,F), p_arity(F,N), constobjtuple(I,OO,N), node(I,S),     not fval(I,(F,OO),S,1), opt_fill_incomplete_valuations = 1.

:-     f_static(I,F), p_arity(F,N), constobjtuple(I,OO,N),                { fval(I,(F,OO),0..1)   } != 1, opt_fill_incomplete_valuations = 1.
:- not f_static(I,F), p_arity(F,N), constobjtuple(I,OO,N), node(I,S),     { fval(I,(F,OO),S,0..1) } != 1, opt_fill_incomplete_valuations = 1.



map(I,(0,0,0,0),null_arg,null_arg,0)      :- instance(I).
map(I,(0,0,0,0),(C,),(C,),1)              :- constobjtuple(I,(C,),1), constant(C).
map(I,(0,0,0,0),(C1,C2),(C1,C2),2)        :- constobjtuple(I,(C1,C2),2), constant(C1), constant(C2).

map(I,(O1,0,0,0),null_arg,null_arg,0)     :- objtuple(I,(O1,),1).
map(I,(O1,0,0,0),(C,),(C,),1)             :- objtuple(I,(O1,),1), constobjtuple(I,(C,),1), constant(C).
map(I,(O1,0,0,0),(1,),(O1,),1)            :- objtuple(I,(O1,),1).
map(I,(O1,0,0,0),(C1,C2),(C1,C2),2)       :- objtuple(I,(O1,),1), constobjtuple(I,(C1,C2),2), constant(C1), constant(C2).
map(I,(O1,0,0,0),(C,1),(C,O1),2)          :- objtuple(I,(O1,),1), constobjtuple(I,(C,O1),2), constant(C).
map(I,(O1,0,0,0),(1,C),(O1,C),2)          :- objtuple(I,(O1,),1), constobjtuple(I,(O1,C),2), constant(C).

map(I,(O1,O2,0,0),null_arg,null_arg,0)    :- objtuple(I,(O1,O2),2).
map(I,(O1,O2,0,0),(C,),(C,),1)            :- objtuple(I,(O1,O2),2), constobjtuple(I,(C,),1), constant(C).
map(I,(O1,O2,0,0),(1,),(O1,),1)           :- objtuple(I,(O1,O2),2).
map(I,(O1,O2,0,0),(2,),(O2,),1)           :- objtuple(I,(O1,O2),2).
map(I,(O1,O2,0,0),(C1,C2),(C1,C2),2)      :- objtuple(I,(O1,O2),2), constobjtuple(I,(C1,C2),2), constant(C1), constant(C2).
map(I,(O1,O2,0,0),(C,1),(C,O1),2)         :- objtuple(I,(O1,O2),2), constobjtuple(I,(C,O1),2), constant(C).
map(I,(O1,O2,0,0),(C,2),(C,O2),2)         :- objtuple(I,(O1,O2),2), constobjtuple(I,(C,O2),2), constant(C).
map(I,(O1,O2,0,0),(1,C),(O1,C),2)         :- objtuple(I,(O1,O2),2), constobjtuple(I,(O1,C),2), constant(C).
map(I,(O1,O2,0,0),(2,C),(O2,C),2)         :- objtuple(I,(O1,O2),2), constobjtuple(I,(O2,C),2), constant(C).
map(I,(O1,O2,0,0),(1,1),(O1,O1),2)        :- objtuple(I,(O1,O2),2), constobjtuple(I,(O1,O1),2).
map(I,(O1,O2,0,0),(1,2),(O1,O2),2)        :- objtuple(I,(O1,O2),2), constobjtuple(I,(O1,O2),2).
map(I,(O1,O2,0,0),(2,1),(O2,O1),2)        :- objtuple(I,(O1,O2),2), constobjtuple(I,(O2,O1),2).
map(I,(O1,O2,0,0),(2,2),(O2,O2),2)        :- objtuple(I,(O1,O2),2), constobjtuple(I,(O2,O2),2).
map(I,(O1,0,O3,0),(3,),(O3,),1)           :- objtuple(I,(O1,O3),2).
map(I,(O1,0,O3,0),(C,3),(C,O3),2)         :- objtuple(I,(O1,O3),2), constobjtuple(I,(C,O3),2), constant(C).
map(I,(O1,0,O3,0),(3,C),(O3,C),2)         :- objtuple(I,(O1,O3),2), constobjtuple(I,(O3,C),2), constant(C).
map(I,(O1,0,O3,0),(1,3),(O1,O3),2)        :- objtuple(I,(O1,O3),2), constobjtuple(I,(O1,O3),2).
map(I,(O1,0,O3,0),(3,1),(O3,O1),2)        :- objtuple(I,(O1,O3),2), constobjtuple(I,(O3,O1),2).
map(I,(O1,0,O3,0),(3,3),(O3,O3),2)        :- objtuple(I,(O1,O3),2), constobjtuple(I,(O3,O3),2).
map(I,(O1,0,0,O4),(4,),(O4,),1)           :- objtuple(I,(O1,O4),2).
map(I,(O1,0,0,O4),(C,4),(C,O4),2)         :- objtuple(I,(O1,O4),2), constobjtuple(I,(C,O4),2), constant(C).
map(I,(O1,0,0,O4),(4,C),(O4,C),2)         :- objtuple(I,(O1,O4),2), constobjtuple(I,(O4,C),2), constant(C).
map(I,(O1,0,0,O4),(1,4),(O1,O4),2)        :- objtuple(I,(O1,O4),2), constobjtuple(I,(O1,O4),2).
map(I,(O1,0,0,O4),(4,1),(O4,O1),2)        :- objtuple(I,(O1,O4),2), constobjtuple(I,(O4,O1),2).
map(I,(O1,0,0,O4),(4,4),(O4,O4),2)        :- objtuple(I,(O1,O4),2), constobjtuple(I,(O4,O4),2).
map(I,(0,O2,O3,0),(2,3),(O2,O3),2)        :- objtuple(I,(O2,O3),2), constobjtuple(I,(O2,O3),2).
map(I,(0,O2,O3,0),(3,2),(O3,O2),2)        :- objtuple(I,(O2,O3),2), constobjtuple(I,(O3,O2),2).
map(I,(0,O2,0,O4),(2,4),(O2,O4),2)        :- objtuple(I,(O2,O4),2), constobjtuple(I,(O2,O4),2).
map(I,(0,O2,0,O4),(4,2),(O4,O2),2)        :- objtuple(I,(O2,O4),2), constobjtuple(I,(O4,O2),2).
map(I,(0,0,O3,O4),(3,4),(O3,O4),2)        :- objtuple(I,(O3,O4),2), constobjtuple(I,(O3,O4),2).
map(I,(0,0,O3,O4),(4,3),(O4,O3),2)        :- objtuple(I,(O3,O4),2), constobjtuple(I,(O4,O3),2).

map(I,(0,O2,0,0),(2,),(O2,),1)            :- objtuple(I,(O2,),1).
map(I,(0,0,O3,0),(3,),(O3,),1)            :- objtuple(I,(O3,),1).
map(I,(0,0,0,O4),(4,),(O4,),1)            :- objtuple(I,(O4,),1).


a_var(A,V) :- a_arity(A,N), V = 1..N.

t_var((V,),V)     :- argtuple((V,),1),    not constant(V), V != null.
t_var((V1,V2),V1) :- argtuple((V1,V2),2), not constant(V1).
t_var((V1,V2),V2) :- argtuple((V1,V2),2), not constant(V2).

goodtuple(A,null_arg) :- action(A).
goodtuple(A,T)    :- action(A), argtuple(T,N), a_var(A,V) : t_var(T,V).



{ prec(A,(P,T),1) }      :- action(A), pred(P), p_arity(P,N), argtuple(T,N), goodtuple(A,T),
                            opt_allow_negative_precs = 0.

{ prec(A,(P,T),0..1) } 1 :- action(A), pred(P), p_arity(P,N), argtuple(T,N), goodtuple(A,T),
                            opt_allow_negative_precs = 1.



p_static(P) :- pred(P), f_static(I,P) : instance(I).
p_fluent(P) :- pred(P), not f_static(I,P) : instance(I).

{ eff(A,(P,T),0..1) } 1 :- action(A), pred(P), not p_static(P), p_arity(P,N), argtuple(T,N), goodtuple(A,T).

:- action(A), { eff(A,(P,T),0..1) : pred(P), p_arity(P,N), argtuple(T,N) } = 0.
:- eff(A,M,0), eff(A,M,1).





fappl(I,A,null_arg,null_arg)   :- instance(I), action(A), a_arity(A,0),
                                  fval(I,(P,OO),V)          : prec(A,(P,T),V), map(I,(0,0,0,0),T,OO,K),       f_static(I,P).

fappl(I,A,(1,),(O1,))          :- instance(I), action(A), a_arity(A,1), objtuple(I,(O1,),1),
                                  fval(I,(P,OO),V)          : prec(A,(P,T),V), map(I,(O1,0,0,0),T,OO,K),      f_static(I,P).

fappl(I,A,(1,2),(O1,O2))       :- instance(I), action(A), a_arity(A,N), N >= 2, objtuple(I,(O1,O2),2),
                                  fval(I,(P,OO),V)          : prec(A,(P,T),V), map(I,(O1,O2,0,0),T,OO,K),     f_static(I,P).

fappl(I,A,(1,3),(O1,O3))       :- instance(I), action(A), a_arity(A,N), N >= 3, objtuple(I,(O1,O3),2),
                                  fval(I,(P,OO),V)          : prec(A,(P,T),V), map(I,(O1,0,O3,0),T,OO,K),     f_static(I,P).

fappl(I,A,(2,3),(O2,O3))       :- instance(I), action(A), a_arity(A,N), N >= 3, objtuple(I,(O2,O3),2),
                                  fval(I,(P,OO),V)          : prec(A,(P,T),V), map(I,(0,O2,O3,0),T,OO,K),     f_static(I,P).

fappl(I,A,(1,4),(O1,O4))       :- instance(I), action(A), a_arity(A,N), N >= 4, objtuple(I,(O1,O4),2),
                                  fval(I,(P,OO),V)          : prec(A,(P,T),V), map(I,(O1,0,0,O4),T,OO,K),     f_static(I,P).

fappl(I,A,(2,4),(O2,O4))       :- instance(I), action(A), a_arity(A,N), N >= 4, objtuple(I,(O2,O4),2),
                                  fval(I,(P,OO),V)          : prec(A,(P,T),V), map(I,(0,O2,0,O4),T,OO,K),     f_static(I,P).

fappl(I,A,(3,4),(O3,O4))       :- instance(I), action(A), a_arity(A,N), N >= 4, objtuple(I,(O3,O4),2),
                                  fval(I,(P,OO),V)          : prec(A,(P,T),V), map(I,(0,0,O3,O4),T,OO,K),     f_static(I,P).


fappl(I,A,null_arg,null_arg,S) :- instance(I), action(A), a_arity(A,0), relevant(I,S),
                                  fappl(I,A,null_arg,null_arg),
                                  fval(I,(P,OO),S,V)        : prec(A,(P,T),V), map(I,(0,0,0,0),T,OO,K),   not f_static(I,P).

fappl(I,A,(1,),(O1,),S)        :- instance(I), action(A), a_arity(A,1), objtuple(I,(O1,),1), relevant(I,S),
                                  fappl(I,A,(1,),(O1,)),
                                  fval(I,(P,OO),S,V)        : prec(A,(P,T),V), map(I,(O1,0,0,0),T,OO,K),  not f_static(I,P).

fappl(I,A,(1,2),(O1,O2),S)     :- instance(I), action(A), a_arity(A,N), N >= 2, objtuple(I,(O1,O2),2), relevant(I,S),
                                  fappl(I,A,(1,2),(O1,O2)),
                                  fval(I,(P,OO),S,V)        : prec(A,(P,T),V), map(I,(O1,O2,0,0),T,OO,K), not f_static(I,P).

fappl(I,A,(1,3),(O1,O3),S)     :- instance(I), action(A), a_arity(A,N), N >= 3, objtuple(I,(O1,O3),2), relevant(I,S),
                                  fappl(I,A,(1,3),(O1,O3)),
                                  fval(I,(P,OO),S,V)        : prec(A,(P,T),V), map(I,(O1,0,O3,0),T,OO,K), not f_static(I,P).

fappl(I,A,(2,3),(O2,O3),S)     :- instance(I), action(A), a_arity(A,N), N >= 3, objtuple(I,(O2,O3),2), relevant(I,S),
                                  fappl(I,A,(2,3),(O2,O3)),
                                  fval(I,(P,OO),S,V)        : prec(A,(P,T),V), map(I,(0,O2,O3,0),T,OO,K), not f_static(I,P).

fappl(I,A,(1,4),(O1,O4),S)     :- instance(I), action(A), a_arity(A,N), N >= 4, objtuple(I,(O1,O4),2), relevant(I,S),
                                  fappl(I,A,(1,4),(O1,O4)),
                                  fval(I,(P,OO),S,V)        : prec(A,(P,T),V), map(I,(O1,0,0,O4),T,OO,K), not f_static(I,P).

fappl(I,A,(2,4),(O2,O4),S)     :- instance(I), action(A), a_arity(A,N), N >= 4, objtuple(I,(O2,O4),2), relevant(I,S),
                                  fappl(I,A,(2,4),(O2,O4)),
                                  fval(I,(P,OO),S,V)        : prec(A,(P,T),V), map(I,(0,O2,0,O4),T,OO,K), not f_static(I,P).

fappl(I,A,(3,4),(O3,O4),S)     :- instance(I), action(A), a_arity(A,N), N >= 4, objtuple(I,(O3,O4),2), relevant(I,S),
                                  fappl(I,A,(3,4),(O3,O4)),
                                  fval(I,(P,OO),S,V)        : prec(A,(P,T),V), map(I,(0,0,O3,O4),T,OO,K), not f_static(I,P).



:- tlabel(I,(S1,S2),A), tlabel(I,(S1,S2),B), A != B.




  { fnext(I,0,0,  S1,S2) : tlabel(I,(S1,S2),A) } = 1 :- action(A), a_arity(A,0), relevant(I,S1),
                                                        fappl(I,A,null_arg,null_arg,S1).

  { fnext(I,1,O1, S1,S2) : tlabel(I,(S1,S2),A) } = 1 :- action(A), a_arity(A,1), relevant(I,S1),
                                                        fappl(I,A,(1,),(O1,),S1).

1 { fnext(I,1,O1, S1,S2) : tlabel(I,(S1,S2),A) }     :- action(A), a_arity(A,2), relevant(I,S1),
                                                        fappl(I,A,(1,2),(O1,O2),S1).
1 { fnext(I,2,O2, S1,S2) : tlabel(I,(S1,S2),A) }     :- action(A), a_arity(A,2), relevant(I,S1),
                                                        fappl(I,A,(1,2),(O1,O2),S1).

1 { fnext(I,1,O1, S1,S2) : tlabel(I,(S1,S2),A) }     :- action(A), a_arity(A,3), relevant(I,S1),
                                                        fappl(I,A,(1,2),(O1,O2),S1), fappl(I,A,(1,3),(O1,O3),S1), fappl(I,A,(2,3),(O2,O3),S1).
1 { fnext(I,2,O2, S1,S2) : tlabel(I,(S1,S2),A) }     :- action(A), a_arity(A,3), relevant(I,S1),
                                                        fappl(I,A,(1,2),(O1,O2),S1), fappl(I,A,(1,3),(O1,O3),S1), fappl(I,A,(2,3),(O2,O3),S1).
1 { fnext(I,3,O3, S1,S2) : tlabel(I,(S1,S2),A) }     :- action(A), a_arity(A,3), relevant(I,S1),
                                                        fappl(I,A,(1,2),(O1,O2),S1), fappl(I,A,(1,3),(O1,O3),S1), fappl(I,A,(2,3),(O2,O3),S1).

1 { fnext(I,1,O1, S1,S2) : tlabel(I,(S1,S2),A) }     :- action(A), a_arity(A,4), relevant(I,S1),
                                                        fappl(I,A,(1,2),(O1,O2),S1), fappl(I,A,(1,3),(O1,O3),S1), fappl(I,A,(1,4),(O1,O4),S1),
                                                        fappl(I,A,(2,3),(O2,O3),S1), fappl(I,A,(2,4),(O2,O4),S1), fappl(I,A,(3,4),(O3,O4),S1).
1 { fnext(I,2,O2, S1,S2) : tlabel(I,(S1,S2),A) }     :- action(A), a_arity(A,4), relevant(I,S1),
                                                        fappl(I,A,(1,2),(O1,O2),S1), fappl(I,A,(1,3),(O1,O3),S1), fappl(I,A,(1,4),(O1,O4),S1),
                                                        fappl(I,A,(2,3),(O2,O3),S1), fappl(I,A,(2,4),(O2,O4),S1), fappl(I,A,(3,4),(O3,O4),S1).
1 { fnext(I,3,O3, S1,S2) : tlabel(I,(S1,S2),A) }     :- action(A), a_arity(A,4), relevant(I,S1),
                                                        fappl(I,A,(1,2),(O1,O2),S1), fappl(I,A,(1,3),(O1,O3),S1), fappl(I,A,(1,4),(O1,O4),S1),
                                                        fappl(I,A,(2,3),(O2,O3),S1), fappl(I,A,(2,4),(O2,O4),S1), fappl(I,A,(3,4),(O3,O4),S1).
1 { fnext(I,4,O4, S1,S2) : tlabel(I,(S1,S2),A) }     :- action(A), a_arity(A,4), relevant(I,S1),
                                                        fappl(I,A,(1,2),(O1,O2),S1), fappl(I,A,(1,3),(O1,O3),S1), fappl(I,A,(1,4),(O1,O4),S1),
                                                        fappl(I,A,(2,3),(O2,O3),S1), fappl(I,A,(2,4),(O2,O4),S1), fappl(I,A,(3,4),(O3,O4),S1).

:- relevant(I,S1), tlabel(I,(S1,S2),A), a_arity(A,N), fnext(I,K1,O1,S1,S2), K1 > 0, K2 = 1..N, K2 != K1, not fnext(I,K2,O2,S1,S2) : object(I,O2).

:- relevant(I,S1), fnext(I,K,O1,S1,S2), fnext(I,K,O2,S1,S2), O1 < O2.

:- relevant(I,S1), tlabel(I,(S1,S2),A), tlabel(I,(S1,S3),A), S2 < S3, a_arity(A,0).
:- relevant(I,S1), tlabel(I,(S1,S2),A), tlabel(I,(S1,S3),A), S2 < S3, a_arity(A,N), N >= 1, fnext(I,1,O,S1,S2), fnext(I,1,O,S1,S3), not diff_fnext(I,A,K2,S1,S2,S3) : K2 = 2..N.
diff_fnext(I,A,K,S1,S2,S3) :- relevant(I,S1), tlabel(I,(S1,S2),A), tlabel(I,(S1,S3),A), S2 < S3, a_arity(A,N), N >= 1, K = 2..N, fnext(I,K,O1,S1,S2), fnext(I,K,O2,S1,S3), O1 != O2.

:- relevant(I,S1), tlabel(I,(S1,S2),A), not fnext(I,0,0,S1,S2), not fnext(I,1,O,S1,S2) : object(I,O).


:- relevant(I,S1), tlabel(I,(S1,S2),A), a_arity(A,0),         fnext(I,0,0,S1,S2),                         eff(A,(P,T),V), map(I,(0,0,0,0),T,OO,K),   not f_static(I,P), fval(I,(P,OO),S2,1-V).
:- relevant(I,S1), tlabel(I,(S1,S2),A), a_arity(A,1),         fnext(I,1,O1,S1,S2),                        eff(A,(P,T),V), map(I,(O1,0,0,0),T,OO,K),  not f_static(I,P), fval(I,(P,OO),S2,1-V).
:- relevant(I,S1), tlabel(I,(S1,S2),A), a_arity(A,N), N >= 2, fnext(I,1,O1,S1,S2), fnext(I,2,O2,S1,S2),   eff(A,(P,T),V), map(I,(O1,O2,0,0),T,OO,K), not f_static(I,P), fval(I,(P,OO),S2,1-V).
:- relevant(I,S1), tlabel(I,(S1,S2),A), a_arity(A,N), N >= 3, fnext(I,1,O1,S1,S2), fnext(I,3,O3,S1,S2),   eff(A,(P,T),V), map(I,(O1,0,O3,0),T,OO,K), not f_static(I,P), fval(I,(P,OO),S2,1-V).
:- relevant(I,S1), tlabel(I,(S1,S2),A), a_arity(A,N), N >= 3, fnext(I,2,O2,S1,S2), fnext(I,3,O3,S1,S2),   eff(A,(P,T),V), map(I,(0,O2,O3,0),T,OO,K), not f_static(I,P), fval(I,(P,OO),S2,1-V).
:- relevant(I,S1), tlabel(I,(S1,S2),A), a_arity(A,N), N >= 4, fnext(I,1,O1,S1,S2), fnext(I,4,O4,S1,S2),   eff(A,(P,T),V), map(I,(O1,0,0,O4),T,OO,K), not f_static(I,P), fval(I,(P,OO),S2,1-V).
:- relevant(I,S1), tlabel(I,(S1,S2),A), a_arity(A,N), N >= 4, fnext(I,2,O2,S1,S2), fnext(I,4,O4,S1,S2),   eff(A,(P,T),V), map(I,(0,O2,0,O4),T,OO,K), not f_static(I,P), fval(I,(P,OO),S2,1-V).
:- relevant(I,S1), tlabel(I,(S1,S2),A), a_arity(A,N), N >= 4, fnext(I,3,O3,S1,S2), fnext(I,4,O4,S1,S2),   eff(A,(P,T),V), map(I,(0,0,O3,O4),T,OO,K), not f_static(I,P), fval(I,(P,OO),S2,1-V).

:- relevant(I,S1), tlabel(I,(S1,S2),A), a_arity(A,0),         fnext(I,0,0,S1,S2),                         eff(A,(P,T),V), map(I,(0,0,0,0),T,OO,K),       f_static(I,P), fval(I,(P,OO),1-V).
:- relevant(I,S1), tlabel(I,(S1,S2),A), a_arity(A,1),         fnext(I,1,O1,S1,S2),                        eff(A,(P,T),V), map(I,(O1,0,0,0),T,OO,K),      f_static(I,P), fval(I,(P,OO),1-V).
:- relevant(I,S1), tlabel(I,(S1,S2),A), a_arity(A,N), N >= 2, fnext(I,1,O1,S1,S2), fnext(I,2,O2,S1,S2),   eff(A,(P,T),V), map(I,(O1,O2,0,0),T,OO,K),     f_static(I,P), fval(I,(P,OO),1-V).
:- relevant(I,S1), tlabel(I,(S1,S2),A), a_arity(A,N), N >= 3, fnext(I,1,O1,S1,S2), fnext(I,3,O3,S1,S2),   eff(A,(P,T),V), map(I,(O1,0,O3,0),T,OO,K),     f_static(I,P), fval(I,(P,OO),1-V).
:- relevant(I,S1), tlabel(I,(S1,S2),A), a_arity(A,N), N >= 3, fnext(I,2,O2,S1,S2), fnext(I,3,O3,S1,S2),   eff(A,(P,T),V), map(I,(0,O2,O3,0),T,OO,K),     f_static(I,P), fval(I,(P,OO),1-V).
:- relevant(I,S1), tlabel(I,(S1,S2),A), a_arity(A,N), N >= 4, fnext(I,1,O1,S1,S2), fnext(I,4,O4,S1,S2),   eff(A,(P,T),V), map(I,(O1,0,0,O4),T,OO,K),     f_static(I,P), fval(I,(P,OO),1-V).
:- relevant(I,S1), tlabel(I,(S1,S2),A), a_arity(A,N), N >= 4, fnext(I,2,O2,S1,S2), fnext(I,4,O4,S1,S2),   eff(A,(P,T),V), map(I,(0,O2,0,O4),T,OO,K),     f_static(I,P), fval(I,(P,OO),1-V).
:- relevant(I,S1), tlabel(I,(S1,S2),A), a_arity(A,N), N >= 4, fnext(I,3,O3,S1,S2), fnext(I,4,O4,S1,S2),   eff(A,(P,T),V), map(I,(0,0,O3,O4),T,OO,K),     f_static(I,P), fval(I,(P,OO),1-V).

:- relevant(I,S1), tlabel(I,(S1,S2),A),
   pred(P), fval(I,(P,null_arg),S1,V), fval(I,(P,null_arg),S2,1-V),
   not eff(A,(P,null_arg),1-V).

:- relevant(I,S1), tlabel(I,(S1,S2),A), a_arity(A,N),
   pred(P), fval(I,(P,OO),S1,V), fval(I,(P,OO),S2,1-V), constobjtuple(I,OO,1),
   not eff(A,(P,T),1-V) : map(I,(0,0,0,0),T,OO,1),                        N >= 0;
   not eff(A,(P,T),1-V) : map(I,(O1,0,0,0),T,OO,1),  fnext(I,1,O1,S1,S2), N >= 1;
   not eff(A,(P,T),1-V) : map(I,(0,O2,0,0),T,OO,1),  fnext(I,2,O2,S1,S2), N >= 2;
   not eff(A,(P,T),1-V) : map(I,(0,0,O3,0),T,OO,1),  fnext(I,3,O3,S1,S2), N >= 3;
   not eff(A,(P,T),1-V) : map(I,(0,0,0,O4),T,OO,1),  fnext(I,4,O4,S1,S2), N >= 4.

:- relevant(I,S1), tlabel(I,(S1,S2),A), a_arity(A,N),
   pred(P), fval(I,(P,OO),S1,V), fval(I,(P,OO),S2,1-V), constobjtuple(I,OO,2),
   not eff(A,(P,T),1-V) : map(I,(0,0,0,0),T,OO,2),                                             N >= 0;
   not eff(A,(P,T),1-V) : map(I,(O1,0,0,0),T,OO,2),  fnext(I,1,O1,S1,S2),                      N >= 1;
   not eff(A,(P,T),1-V) : map(I,(O1,O2,0,0),T,OO,2), fnext(I,1,O1,S1,S2), fnext(I,2,O2,S1,S2), N >= 2;
   not eff(A,(P,T),1-V) : map(I,(O1,0,O3,0),T,OO,2), fnext(I,1,O1,S1,S2), fnext(I,3,O3,S1,S2), N >= 3;
   not eff(A,(P,T),1-V) : map(I,(0,O2,O3,0),T,OO,2), fnext(I,2,O2,S1,S2), fnext(I,3,O3,S1,S2), N >= 3;
   not eff(A,(P,T),1-V) : map(I,(O1,0,0,O4),T,OO,2), fnext(I,1,O1,S1,S2), fnext(I,4,O4,S1,S2), N >= 4;
   not eff(A,(P,T),1-V) : map(I,(0,O2,0,O4),T,OO,2), fnext(I,2,O2,S1,S2), fnext(I,4,O4,S1,S2), N >= 4;
   not eff(A,(P,T),1-V) : map(I,(0,0,O3,O4),T,OO,2), fnext(I,3,O3,S1,S2), fnext(I,4,O4,S1,S2), N >= 4.


:- a_arity(A,2), fappl(I,A,(1,2),(O1,O2),S1),
   not fnext(I,2,O2,S1,S2) : fnext(I,1,O1,S1,S2).

:- a_arity(A,3), fappl(I,A,(1,2),(O1,O2),S1), fappl(I,A,(1,3),(O1,O3),S1), fappl(I,A,(2,3),(O2,O3),S1),
   not fnext(I,2,O2,S1,S2) : fnext(I,1,O1,S1,S2).
:- a_arity(A,3), fappl(I,A,(1,2),(O1,O2),S1), fappl(I,A,(1,3),(O1,O3),S1), fappl(I,A,(2,3),(O2,O3),S1),
   not fnext(I,3,O3,S1,S2) : fnext(I,1,O1,S1,S2), fnext(I,2,O2,S1,S2).

:- a_arity(A,4), fappl(I,A,(1,2),(O1,O2),S1), fappl(I,A,(1,3),(O1,O3),S1), fappl(I,A,(1,4),(O1,O4),S1), fappl(I,A,(2,3),(O2,O3),S1), fappl(I,A,(2,4),(O2,O4),S1), fappl(I,A,(3,4),(O3,O4),S1),
   not fnext(I,2,O2,S1,S2) : fnext(I,1,O1,S1,S2).
:- a_arity(A,4), fappl(I,A,(1,2),(O1,O2),S1), fappl(I,A,(1,3),(O1,O3),S1), fappl(I,A,(1,4),(O1,O4),S1), fappl(I,A,(2,3),(O2,O3),S1), fappl(I,A,(2,4),(O2,O4),S1), fappl(I,A,(3,4),(O3,O4),S1),
   not fnext(I,3,O3,S1,S2) : fnext(I,1,O1,S1,S2), fnext(I,2,O2,S1,S2).
:- a_arity(A,4), fappl(I,A,(1,2),(O1,O2),S1), fappl(I,A,(1,3),(O1,O3),S1), fappl(I,A,(1,4),(O1,O4),S1), fappl(I,A,(2,3),(O2,O3),S1), fappl(I,A,(2,4),(O2,O4),S1), fappl(I,A,(3,4),(O3,O4),S1),
   not fnext(I,4,O4,S1,S2) : fnext(I,1,O1,S1,S2), fnext(I,2,O2,S1,S2), fnext(I,3,O3,S1,S2).


:- relevant(I,S1), a_arity(A,0), X = { tlabel(I,(S1,S2),A) : node(I,S2) }, #count {             S2 : tlabel(I,(S1,S2),A), fnext(I,0,0,S1,S2) } != X.
:- relevant(I,S1), a_arity(A,1), X = { tlabel(I,(S1,S2),A) : node(I,S2) }, #count {          O1,S2 : tlabel(I,(S1,S2),A), fnext(I,1,O1,S1,S2) } != X.
:- relevant(I,S1), a_arity(A,2), X = { tlabel(I,(S1,S2),A) : node(I,S2) }, #count {       O1,O2,S2 : tlabel(I,(S1,S2),A), fnext(I,1,O1,S1,S2), fnext(I,2,O2,S1,S2) } != X.
:- relevant(I,S1), a_arity(A,3), X = { tlabel(I,(S1,S2),A) : node(I,S2) }, #count {    O1,O2,O3,S2 : tlabel(I,(S1,S2),A), fnext(I,1,O1,S1,S2), fnext(I,2,O2,S1,S2), fnext(I,3,O3,S1,S2) } != X.
:- relevant(I,S1), a_arity(A,4), X = { tlabel(I,(S1,S2),A) : node(I,S2) }, #count { O1,O2,O3,O4,S2 : tlabel(I,(S1,S2),A), fnext(I,1,O1,S1,S2), fnext(I,2,O2,S1,S2), fnext(I,3,O3,S1,S2), fnext(I,4,O4,S1,S2) } != X.
*%

:- relevant(I,S1), a_arity(A,0), tlabel(I,(S1,S2),A), not fnext(I,0,0,S1,S2).
:- relevant(I,S1), a_arity(A,1), tlabel(I,(S1,S2),A), #count {          O1 : fnext(I,1,O1,S1,S2) } != 1.
:- relevant(I,S1), a_arity(A,2), tlabel(I,(S1,S2),A), #count {       O1,O2 : fnext(I,1,O1,S1,S2), fnext(I,2,O2,S1,S2) } != 1.
:- relevant(I,S1), a_arity(A,3), tlabel(I,(S1,S2),A), #count {    O1,O2,O3 : fnext(I,1,O1,S1,S2), fnext(I,2,O2,S1,S2), fnext(I,3,O3,S1,S2) } != 1.
:- relevant(I,S1), a_arity(A,4), tlabel(I,(S1,S2),A), #count { O1,O2,O3,O4 : fnext(I,1,O1,S1,S2), fnext(I,2,O2,S1,S2), fnext(I,3,O3,S1,S2), fnext(I,4,O4,S1,S2) } != 1.

:- relevant(I,S1), relevant(I,S2), S1 < S2, fval(I,(P,OO),S2,V) : fval(I,(P,OO),S1,V), pred(P), not f_static(I,P).


a_atom(1,A,M) :- prec(A,M,V).
a_atom(2,A,M) :-  eff(A,M,V).
a_atom(A,M)   :- a_atom(I,A,M), I = 1..2.

:- V = 1..max_action_arity-1, action(A),
   1 #sum{ -1,I,P,VV,1 : a_atom(I,A,(P,(  V, VV)));
           -1,I,P,VV,2 : a_atom(I,A,(P,( VV,  V)));
            1,I,P,VV,1 : a_atom(I,A,(P,(V+1, VV)));
            1,I,P,VV,2 : a_atom(I,A,(P,( VV,V+1))) },
   opt_symmetries = 1.

lesseq(V,V+1,A,I,P,1) :- V = 1..max_action_arity-1, action(A), I = 1..2, pred(P),
   0 #sum{ -1,VV : a_atom(I,A,(P,(  V,VV))); 1,VV : a_atom(I,A,(P,(V+1,VV))) },
   opt_symmetries = 2.

lesseq(V,V+1,A,I,P,2) :- V = 1..max_action_arity-1, action(A), I = 1..2, pred(P),
   0 #sum{ -1,VV : a_atom(I,A,(P,(VV,  V))); 1,VV : a_atom(I,A,(P,(VV,V+1))) },
   opt_symmetries = 2.

less(V,V+1,A,I,P,1) :- V = 1..max_action_arity-1, action(A), I = 1..2, pred(P),
   0 #sum{ -1,VV : a_atom(I,A,(P,(  V,VV))); 1,VV : a_atom(I,A,(P,(V+1,VV))) },
   opt_symmetries = 2.

less(V,V+1,A,I,P,2) :- V = 1..max_action_arity-1, action(A), I = 1..2, pred(P),
   0 #sum{ -1,VV : a_atom(I,A,(P,(VV,  V))); 1,VV : a_atom(I,A,(P,(VV,V+1))) },
   opt_symmetries = 2.

:- V = 1..max_action_arity-1, action(A),
   less(V,V+1,A,I,P,J),
   lesseq(V,V+1,A,II,PP,JJ) : II = 1..2, pred(PP), JJ = 1..2, (II,PP,JJ) < (I,P,J);
   opt_symmetries = 2.


#minimize { 1+N@10, A : a_arity(A,N) }.

#minimize { 1+N@8, P : pred(P), p_arity(P,N), not p_static(P) }.

#minimize { 1+N@6, P : pred(P), p_arity(P,N), p_static(P) }.

#minimize { 1@4, A, P, T, V : eff(A,(P,T),V) }.

#minimize { 1@2, A, P, T, V : prec(A,(P,T),V) }.


#show.

#show object/2.
#show constant/1.

#show pred/1.
#show p_static(P) : p_static(P), pred(P).

#show action/1.
#show a_arity/2.
#show prec/3.
#show eff/3.


#show fnext(I,A,K,O,S1,S2) : tlabel(I,(S1,S2),A), fnext(I,K,O,S1,S2), opt_verbose >= 2.
}

\Omit{ 
{ "blocks4ops" :
    { "constants" : ["rectangle", "r", "t"],
      "facts"     : [ ["robot",["r"]], ["table",["t"]] ],
      "rules"     : { "block"      : [ ["block(X)", ["ontable(X)"]],
                                       ["block(X)", ["on(X,Y)"]] ],
                      "overlap"    : [ ["overlap(X,Y)", ["holding(X)", "robot(Y)"]],
                                       ["overlap(Y,X)", ["overlap(X,Y)"]] ],
                      "below"      : [ ["below(X,Y)", ["on(Y,X)"]],
                                       ["below(X,Y)", ["ontable(Y)", "table(X)"]] ],
                      "smaller"    : [ ["smaller(X,Y)", ["block(X)", "table(Y)"]],
                                       ["smaller(X,Y)", ["block(X)", "robot(Y)"]],
                                       ["smaller(X,Y)", ["robot(X)", "table(Y)"]],
                                       ["smaller(X,Z)", ["smaller(X,Y)", "smaller(Y,Z)"]] ],
                      "shape"      : [ ["shape(X,rectangle)", ["object(X)"]] ]
                    }
    },
  "blocks3ops" :
    { "constants" : ["rectangle", "t"],
      "facts"     : [ ["table",["t"]] ],
      "rules"     : { "block"      : [ ["block(X)", ["ontable(X)"]],
                                       ["block(X)", ["on(X,Y)"]] ],
                      "below"      : [ ["below(X,Y)", ["on(Y,X)"]],
                                       ["below(X,Y)", ["ontable(Y)", "table(X)"]] ],
                      "smaller"    : [ ["smaller(X,Y)", ["block(X)", "table(Y)"]],
                                       ["smaller(X,Z)", ["smaller(X,Y)", "smaller(Y,Z)"]] ],
                      "shape"      : [ ["shape(X,rectangle)", ["object(X)"]] ]
                    }
    },
  "hanoi4ops" :
    { "constants" : ["rectangle"],
      "facts"     : [],
      "rules"     : { "overlap"    : [ ["overlap(X,Y)", ["disk(X)", "peg(Y)", "on(X,Y)"]],
                                       ["overlap(Y,X)", ["overlap(X,Y)"]] ],
                      "below"      : [ ["below(X,Y)", ["on(Y,X)", "disk(X)", "disk(Y)"]],
                                       ["below(X,Y)", ["on(Y,X)", "peg(X)", "disk(Y)"]] ],
                      "shape"      : [ ["shape(X,rectangle)", ["object(X)"]] ]
                    }
    },
  "slidingtile" :
    { "constants" : ["rectangle"],
      "facts"     : [],
      "rules"     : { "cell"       : [ ["cell(X)", ["position(X)"]] ],
                      "overlap"    : [ ["overlap(X,Y)", ["at(X,Y)"]],
                                       ["overlap(Y,X)", ["overlap(X,Y)"]] ],
                      "shape"      : [ ["shape(X,rectangle)", ["object(X)"]] ]
                    }
    },
      "grid" :
    { "constants" : ["r"],
      "facts"     : [ ["robot",["r"]] ],
      "rules"     : { "cell"       : [ ["cell(X)", ["place(X)", "open(X)"]] ],
                      "blackcell" : [ ["blackcell(X)", ["locked(X)"]] ],
                      "overlap"    : [ ["overlap(X,r)", ["at_robot(X)"]],
                                       ["overlap(X,Y)", ["at(X,Y)"]],
                                       ["overlap(Y,X)", ["overlap(X,Y)"]] ],
                      "smaller"    : [ ["smaller(X,Y)", ["robot(X)", "place(Y)"]],
                                       ["smaller(X,Y)", ["key(X)", "place(Y)"]],
                                       ["smaller(X,Z)", ["smaller(X,Y)", "smaller(Y,Z)"]] ],
                      "shape"      : [ ["shape(X,S)", ["lock_shape(X,S)"]],
                                       ["shape(X,S)", ["key_shape(X,S)"]] ],
                      "shape_form" : [ ["shape_form(S)", ["objshape(S)"]] ]
                    }
    },
  "sokoban1" :
    { "constants" : ["sokoban1", "rectangle", "sokoshape"],
      "facts"     : [],
      "rules"     : { "cell"       : [ ["cell(X)", ["leftof(X,Y)"]],
                                       ["cell(Y)", ["leftof(X,Y)"]],
                                       ["cell(X)", ["below(X,Y)"]],
                                       ["cell(Y)", ["below(X,Y)"]],
                                       ["cell(Y)", ["at(X,Y)"]]
                                     ],
                      "overlap"    : [ ["overlap(X,Y)", ["at(X,Y)"]],
                                       ["overlap(Y,X)", ["overlap(X,Y)"]] ],
                      "left"       : [ ["left(X,Y)", ["leftof(X,Y)"]] ],
                      "shape"      : [ ["shape(X,sokoshape)", ["sokoban(X)"]],
                                       ["shape(X,rectangle)", ["cell(X)"]],
                                       ["shape(X,rectangle)", ["crate(X)"]]
                                     ]
                    }
    },
  "hanoi1op" : { "defer-to" : "hanoi4ops" },
  "sokoban"  : { "defer-to" : "sokoban1" },
  "sokoban2" : { "defer-to" : "sokoban1" },
  "grid2"    : { "defer-to" : "grid" }
}
} 

\end{document}